\title{Convex Modeling of Interactions with Strong Heredity}
\author{
        Asad Haris\footnote{aharis@uw.edu, Department of Biostatistics}, Daniela Witten\footnote{dwitten@uw.edu, Departments of Statistics and Biostatistics}, and Noah Simon\footnote{nrsimon@uw.edu, Department of Biostatistics} \\
        University of Washington\\
}
\date{\today}

\documentclass[11pt]{article}
\usepackage{graphicx,verbatim,multirow}
\usepackage{amsfonts}
\usepackage{natbib}
\usepackage{float}
\newcommand{\bs}[1]{\boldsymbol{#1}}
\newcommand{\wt}[1]{\widetilde{#1}}
\usepackage{amsmath}
\usepackage[margin=0.8in]{geometry}
\usepackage{setspace}
\usepackage{subfigure}
\usepackage[usenames,dvipsnames,svgnames,table]{xcolor}
\usepackage{amsthm} 
\usepackage{hyperref}
\usepackage{xr}
\hypersetup{
    colorlinks,
    citecolor=black,
    filecolor=black,
    linkcolor=blue,
    urlcolor=black
}


\newcommand{\family}{{\tt FAMILY}}
\newcommand{\iFORM}{{\tt iFORM}}
\newcommand{\diag}{\mathrm{diag}}
\newcommand{\df}{\mathrm{df}}
\newcommand{\familylt}{{\tt FAMILY.l2}}
\newcommand{\familyli}{{\tt FAMILY.linf}}
\newcommand{\familyhiernet}{{\tt FAMILY.hierNet}}

\newcommand{\vanish}{{\tt VANISH}}
\newcommand{\APL}{{\tt APL}}
\newcommand{\hiernet}{{\tt hierNet}}
\newcommand{\glinternet}{{\tt glinternet}}

\newtheorem{theorem}{Theorem}[section]

\newtheorem{lemma}[theorem]{Lemma}
\newtheorem{claim}[theorem]{Claim}

\begin{document}

\maketitle

\begin{abstract}
We consider the task of fitting a regression model involving interactions among a potentially large set of covariates, in which we wish to enforce strong heredity. We propose \family, a very general   framework for this task. Our proposal  is a generalization of several existing methods, such as \texttt{VANISH} \citep{radchenko2010variable}, \texttt{hierNet} \citep{bien2013lasso}, the all-pairs lasso, and the lasso using only main effects. It can be formulated as the solution to a convex optimization problem, which we solve using an efficient alternating directions method of multipliers (ADMM) algorithm. This algorithm has guaranteed convergence to the global optimum,  can be easily specialized to any convex penalty function of interest, and allows for a straightforward extension to the setting of generalized linear models. We derive an unbiased estimator of the degrees of freedom of \family, and explore its performance in a simulation study and on an HIV sequence data set. 
\end{abstract}

\doublespacing

\section{Introduction}
\label{sec:intro}

\subsection{Modeling Interactions}

\label{sec:overallModel}
In this paper, we model a response variable with a set of main effects and second-order interactions. The problem can be formulated as follows:  we are given a response vector $y$ for $n$ observations, an $n\times p_1$ matrix $X$ of covariates and another $n\times p_2$ matrix $Z$ of covariates. In what follows, the notation $X_{.,j}$ and $Z_{.,k}$ will denote the $j^{th}$ column of $X$ and $k^{th}$ column of Z, respectively. The goal is to fit the model
\begin{equation}
 y_i = B_{0,0} + \sum_{j=1}^{p_1} B_{j,0} X_{i,j} + \sum_{k=1}^{p_2} B_{0,k}Z_{i,k} + \sum_{j=1}^{p_1} \sum_{k=1}^{p_2} B_{j,k}X_{i,j}Z_{i,k} + \varepsilon_i, \ \ i = 1,\ldots , n, 
\label{model}
\end{equation}
where $B$ is a $(p_1+1)\times (p_2+1)$ matrix of coefficients, of which the rows and columns are  indexed from 0 to $p_1$ and 0 to $p_2$ for the variables $X$ and $Z$, respectively.  In the special case where $X=Z$, the coefficient of the $(j,k)^{th}$ interaction is $B_{j,k}+B_{k,j}$, and the coefficient of the $j^{th}$ main effect is  $B_{0,j}+B_{j,0}$.

For brevity, we re-write model (\ref{model}) using array notation. We construct the $n\times(p_1+ 1)\times (p_2+1)$ array $W$ as follows: for $i\in \{1,\ldots,n\}, j\in \{0,\ldots, p_1\}, k\in \{0,\ldots,p_2\}, $

\begin{equation}
W_{i,j,k} = 
\begin{cases}
X_{i,j}Z_{i,k} & \mbox{ for }  j\not= 0 \mbox{ and }  k\not=0\\
X_{i,j} & \mbox{ for }  k=0 \mbox{ and } j\not= 0\\
Z_{i,k} &\mbox{ for }   j=0 \mbox{ and } k \not= 0\\
1 & \mbox{ for }   j=k=0
\end{cases}.
\label{wmat}
\end{equation}
Then (\ref{model}) is equivalent to the model 
\begin{equation}
y = W * B + \varepsilon, 
\label{arraynot}
\end{equation}
where $B$ is the matrix of coefficients as in (\ref{model}), and $W*B$ denotes the $n$-vector whose $i^{th}$ element  takes the form
 $(W*B)_i \equiv \sum_{j=0}^{p_1}\sum_{k=0}^{p_2} W_{i,j,k}B_{j,k}$. The model is displayed in the left panel of Figure $\ref{fig1}$.

 In fitting models with interactions, we may wish to impose either \emph{strong} or \emph{weak} heredity \citep{hamada1992analysis, yates1978design, chipman1996bayesian, joseph2006bayesian},  defined as follows:
\begin{list}{}{}
\item{\emph{Strong Heredity:}} If an interaction term is included in the model, then  both of the corresponding main effects must be present. That is, if $ B_{j,k} \not= 0$, then $B_{j,0} \not= 0$ \emph{and} $B_{0,k} \not= 0$.
\item{\emph{Weak Heredity:}} If an interaction term is included in the model, then at least one of the corresponding main effects must be present. That is, if $ B_{j,k} \not= 0$, then either $B_{j,0} \not= 0$ \emph{or} $B_{0,k} \not= 0$.
\end{list}
  Such constraints facilitate model interpretation \citep{mccullagh1984generalized}, improve statistical power  \citep{cox1984interaction}, and simplify experimental designs \citep{bien2013lasso}.
In this paper we propose a general convex regularized regression approach which naturally and efficiently enforces strong heredity.

\subsection{Summary of Previous Work}
\label{sec:prev.work}

A number of authors have considered the task of fitting interaction models under strong or weak heredity constraints. Constraints to enforce heredity \citep{peixoto1987hierarchical,friedman1991multivariate,bickel2010hierarchical,park2008penalized,wu2010screen} have been applied to conventional step-wise model selection techniques \citep[][chap. 10]{montgomery2012introduction}. \citet{chipman1996bayesian} and \citet{george1993variable} proposed Bayesian methods.  In more recent work, \cite{hao2014interaction} proposed  \iFORM, an approach that performs forward selection on the main effects, and allows interactions into the model once the main effects have already been selected. \iFORM \; has a number of attractive properties, including suitability for the ultra-high-dimensional setting, computational efficiency, as well as proven theoretical guarantees.

In this paper, we take a regularization approach to inducing strong heredity. A number of regularization  approaches for this task have already been proposed in the literature; 
in fact, a strength of our proposal is that it provides a unified framework (and associated algorithm) of which several existing approaches can be seen as special cases.  
 \citet{choi2010variable} propose a non-convex approach, which amounts to a lasso  \citep{tibshirani1996regression} problem with re-parametrized coefficients. 
  Alternatively, some authors have enforced strong or weak heredity via convex penalties or constraints. \citet{jenatton2011structured} and \citet{zhao2009composite} describe a set of   penalties that can be applied to a broad class of problems. As a special case they consider interaction models with strong or weak heredity; this has been further developed by \citet{bach2012structured}. \citet{radchenko2010variable}, \citet{lim2013learning} and \citet{bien2013lasso} propose penalties specifically designed for interaction models with sparsity and strong heredity.  
We now describe the latter two approaches in greater detail.

\subsubsection{\texttt{hierNet} \citep{bien2013lasso}}
\label{sec:hier}
The \texttt{hierNet} approach of \citet{bien2013lasso} fits the model (\ref{model}) with $X=Z$ and $p_1=p_2=p$. In the case of strong heredity, using the notation of \eqref{arraynot}, they  consider the problem
\begin{equation}
\begin{split}
 \underset{ B \in \mathbb{R}^{(p+1)\times (p+1)} ,\  
\beta^\pm \in \mathbb{R}^p }{\text{minimize}}  \   & \frac{1}{2}\|y-W*B\|_2^2+ \lambda \sum_{j=1}^{p}(\beta_j^+ + \beta_j^-) +  \frac{\lambda}{2} \|B_{-0,-0} \|_1\\
\text{subject to } & B = B^T, \ B_{0,-0} =  \beta^+ -\beta^- \\ 
&\|B_{j,-0}\|_1 \le \beta_j^++\beta_j^-, \  
 \beta_j^+\ge 0 ,\beta_j^- \ge 0 
 \text{ for } j = 1,\ldots,p.
\end{split}
\label{hier2}
\end{equation}
Using this notation, the coefficient  for the $j^{th}$ main effect is  ${B}_{0,j}+{B}_{j,0}$, and the coefficient  for the $(j,k)^{th}$ interaction is  ${B}_{j,k}+{B}_{k,j}$. Strong heredity is imposed by the constraint $\|B_{j,-0}\|_1 \le \beta_j^++\beta_j^-$.

\subsubsection{\texttt{glinternet} \citep{lim2013learning} }
\label{sec:glin}
Like  \texttt{hierNet}, the \texttt{glinternet} proposal of \citet{lim2013learning}  fits (\ref{model}) with $X=Z$ and $p_1=p_2=p$. In order to describe this approach, we introduce some additional notation. Let $\alpha_k$ be the coefficient of the $k^{th}$ main effect. We decompose $\alpha_k$ into $p$ parameters, i.e. $\alpha_k = \alpha_k^{(0)}+\alpha_k^{(1)}+\ldots+ \alpha_k^{(k-1)}+\alpha_k^{(k+1)} + \ldots + \alpha_k^{(p)}$. We let $\alpha_{jk}+\alpha_{kj}$ denote the coefficient for the interaction between $X_j$ and $X_k$. 
 \citet{lim2013learning} propose to solve the optimization problem   
\begin{equation}
\begin{split}
\underset{ \parbox{1.6in}{ $\alpha_0, \{\alpha_{ij} \}_{i\not=j; i,j\not =0},\\  \{  \alpha_i^{(j)} \}_{ j\not=i}  \in  \mathbb{R}$ }  }{\text{minimize}} &\left\|y- \alpha_0 - \sum_{k=1}^{p} \sum_{j\not= k} \alpha_k^{(j)}X_{.,k} - \sum_{j\not= k} \alpha_{jk} \left( X_{.,j} * X_{.,k} \right) \right\|_2^2\\
 &+ \lambda \left( \sum_{j=1}^{p} |\alpha^{(0)}_{j}| +\sum_{j \neq k}\sqrt{\left(\alpha_j^{(k)} \right)^2 + \left(\alpha_k^{(j)} \right)^2 + \alpha^2_{jk} }  \right),
\end{split}
\label{glin}
\end{equation}
 where $X_{.,j} * X_{.,k}$ denotes element-wise multiplication. Strong heredity is enforced via the group lasso \citep{yuan2006model} penalties: if either $\alpha_{jk}$ or $\alpha_{kj}$ is estimated as non-zero, then $\alpha_j^{(k)}$ and $\alpha_{k}^{(j)}$ will be estimated to be non-zero, and hence so will $\alpha_j$ and $\alpha_k$.

\subsection{Organization of Paper}

The rest of this paper is organized as follows.  In Section \ref{sec:proposal},  we provide details of \family, our proposed approach for modeling interactions. 
An unbiased estimator for its degrees of freedom is in Section~\ref{sec:dof}, and an extension to weak heredity is in Section~\ref{sec:weak}. 
We  explore \family's empirical performance in  simulation in Section \ref{sec:SimStudy}, and in an application to an HIV data set  in Section \ref{sec:realData}. The Discussion is  in Section \ref{sec:conclusion}.

\section{Modeling Interactions with  \family}
\label{sec:proposal}

We propose a \emph{\underline{f}r\underline{a}mework for \underline{m}odeling \underline{i}nteractions with a convex pena\underline{l}t\underline{y}}  (\family). The \family \,  approach is the solution to a convex optimization problem, which (using the notation of Section~\ref{sec:overallModel}) takes the form 
\begin{equation}
\begin{split}
\underset{B\in \mathbb{R}^{(p_1+1)\times (p_2+1)}}{\text{minimize}} \frac{1}{2n} \|y-W*B\|_2^2 +  \lambda_1 \sum_{j=1}^{p_1}P_r( B_{j,.}) + \lambda_2 \sum_{k=1}^{p_2}P_c (B_{.,k}) + \lambda_3 \|B_{-0,-0}\|_1.
\end{split}
\label{obj}
\end{equation}
Here, $\lambda_1$, $\lambda_2$, and $\lambda_3$ are non-negative tuning parameters.  $P_r$ and $P_c$ are convex penalty functions on the rows and columns of the coefficient matrix $B$. 
 The $\| B_{-0,-0} \|_1$ term  denotes the element-wise $\ell_1$-norm on the interactions, which enforces sparsity on the interaction coefficients when $\lambda_3$ is large. 
 The right panel of Figure \ref{fig1} demonstrates the action of each penalty on the matrix $B$. 

As we will see, the choice of $P_r$ and $P_c$ will determine the type of structure (such as strong heredity) enforced on the fitted model. In the examples that follow, we take $P_r=P_c$; however, in principle, these two penalty functions need not be equal. For instance, if the features in $Z$ are known to be of scientific importance, we might choose to perform feature selection on the main effects of $X$ only. In this case, we might choose
 to use $P_r(b) = \|b \|_2$ and $P_c(b) = 0$.

We suggest standardizing the columns of $X$ and $Z$ to have mean zero and variance one before solving  \eqref{obj}, in order to ensure that the main effects and interactions are on the same scale, as is standard practice for penalized regression estimators \citep{hastie2009elements}. We take this approach in Sections~\ref{sec:SimStudy} and \ref{sec:realData}.

\begin{figure}
\centering
\includegraphics[scale = 0.45]{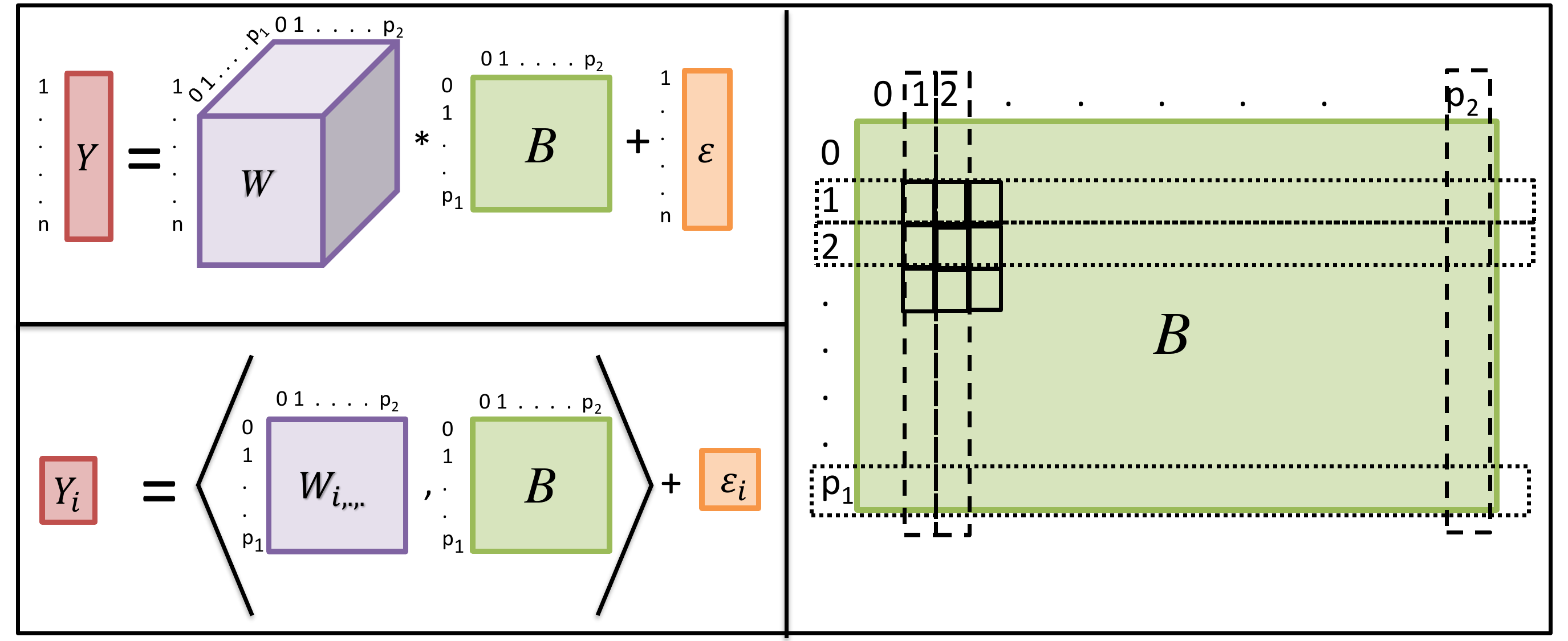}
\caption{ \emph{Left:}  The model (\ref{model}), for all $n$ observations \emph{(top)} and for the $i^{th}$ observation \emph{(bottom)}. The notation $\langle W_{i,\cdot,\cdot}, B \rangle$ denotes the inner product, $\sum_{j,k} W_{i,j,k} B_{j,k}$.  \emph{Right:} In (\ref{obj}), the $(p_1+1)\times (p_2+1)$ coefficient matrix $B$ is penalized by applying the $P_r$ and $P_c$ penalties to each of the $p_1$ rows (\protect\includegraphics[height=0.5em]{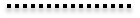}) and each of the $p_2$ columns (\protect\includegraphics[height=0.5em]{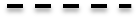}), respectively. The $\ell_1$ penalty is applied to each of the $p_1p_2$ interactions (\protect\includegraphics[height=0.5em]{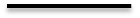}).}
\label{fig1}
\end{figure}

\subsection{Connections to Lasso \citep{tibshirani1996regression}}
\label{sec:compareLASSO}

The \emph{main effects lasso} can be viewed as a special case of (\ref{obj}) where $P_c$ and $P_r$ are $\ell_1$ penalties, 
\begin{equation}
\begin{split}
\underset{B\in \mathbb{R}^{(p_1+1)\times (p_2+1)}}{\text{minimize}} &\frac{1}{2n} \|y-W*B\|_2^2 +  \lambda_1 \sum_{j=1}^{p_1}\| B_{j,.}\|_1 + \lambda_2 \sum_{k=1}^{p_2}\|B_{.,k}\|_1 + \lambda_3 \|B_{-0,-0}\|_1,
\end{split}
\label{eqn:mainEffectsLasso}
\end{equation}
and where $\lambda_3$ is chosen sufficiently large as to shrink all of  the interaction terms to 0. In this case, the lasso penalties on the rows and columns are applied only to the main effects.

In contrast, if we take $\lambda_3 = 0$, $\lambda_1=\lambda_2 = \lambda$, and $P_c(b)=P_r(b) = |b_1|+1/2 \| b_{-1} \|_1$, where $b=(b_1, b_{-1}^T)^T$, then \eqref{obj} yields the \emph{all-pairs lasso}, which applies a lasso penalty to all main effects and all interactions. In this case, \eqref{obj} can be re-written more simply as 
\begin{equation}
\begin{split}
\underset{B\in \mathbb{R}^{(p_1+1)\times (p_2+1)}}{\text{minimize}} \frac{1}{2n} \|y-W*B\|_2^2 &+  \lambda \|B\|_1 .
\end{split}
\label{eqn:allPairsLasso}
\end{equation} 

However, our main interest in this paper is to develop a convex framework for modeling interactions that obeys strong heredity. Clearly, the all-pairs lasso does not satisfy strong heredity, and the main effects lasso does so only in a trivial way (by setting all interaction coefficient estimates to zero).

\subsection{\family \ with Strong Heredity} \label{subsec:strong}
We now consider three choices of $P_r$ and $P_c$ in \eqref{obj} that yield an estimator that obeys strong heredity. In Section~\ref{subsubsec:gl}, we consider the case where $P_r$ and $P_c$ are group lasso penalties. In Section~\ref{subsubsec:linf}, we consider the case where they are $\ell_\infty$ penalties. We consider a hybrid between an $\ell_1$ and an $\ell_\infty$ norm in Section~\ref{subsubsec:hiernet}. The unit norm balls corresponding to these three penalties are displayed in Figure~\ref{fig:norms}. 

\subsubsection{\family \ with an $\ell_2$ Penalty } 
\label{subsubsec:gl}

We first consider \eqref{obj} in the case where $P_r(b)=P_c(b) = \| b \|_2$, which we will refer to as \familylt.  The resulting optimization problem takes the form
\begin{equation}
\underset{B\in \mathbb{R}^{(p_1+1)\times (p_2+1)}}{\text{minimize}} \frac{1}{2n} \|y-W*B\|_2^2 +  \lambda_1 \sum_{j=1}^{p_1} \| B_{j,.} \|_2 + \lambda_2 \sum_{k=1}^{p_2} \| B_{.,k} \|_2 + \lambda_3 \|B_{-0,-0}\|_1.
\label{obj:l2}
\end{equation}
This formulation will induce strong heredity, in the sense that an interaction between $X_j$ and $X_k$ can have a non-zero coefficient estimate only if both of the corresponding main effects are non-zero. 

Problem \ref{obj:l2} is closely related to \vanish, an approach for non-linear interaction modeling  \citep{radchenko2010variable}. In fact, if we take $X=Z$ and assume that all main effects and interactions are scaled to have norm one in \eqref{obj:l2}, and consider the case of \vanish \ with only linear main effects and interactions, then \vanish \ and \eqref{obj:l2} coincide exactly.

  \citet{radchenko2010variable} attempt to solve the \vanish \ optimization problem via  block coordinate descent. However, due to non-separability of the groups, their algorithm is not guaranteed convergence to the global optimum. In contrast, the algorithm  in Section~\ref{sec:alg}  is guaranteed convergence to the global optimum of (\ref{obj})  for any convex penalty, and can be  extended to the case of generalized linear models.

\subsubsection{\family \ with an $\ell_\infty$  Penalty } 
\label{subsubsec:linf}

We now consider \eqref{obj} in the case where $P_r(b)=P_c(b) = \| b \|_\infty$; we refer to this in what follows as \familyli. We refer the reader to \citet{duchi2009efficient} for a discussion of the properties of the $\ell_\infty$ norm, and its merits relative to the $\ell_2$ norm in inducing group sparsity.  
In this case, \eqref{obj} takes the form
\begin{equation}
\underset{B\in \mathbb{R}^{(p_1+1)\times (p_2+1)}}{\text{minimize}} \frac{1}{2n} \|y-W*B\|_2^2 +  \lambda_1 \sum_{j=1}^{p_1} \| B_{j,.} \|_\infty + \lambda_2 \sum_{k=1}^{p_2} \| B_{.,k} \|_\infty + \lambda_3 \|B_{-0,-0}\|_1.
\label{obj:li}
\end{equation}
This formulation also induces strong heredity. 
\subsubsection{\family \ with a Hybrid $\ell_1$/$\ell_\infty$ Penalty}
\label{subsubsec:hiernet}

Finally, we consider \eqref{obj} with $P_r(b)=P_c(b)=\max(|b_1|, \| b_{-1} \|_1)$. In this case, \eqref{obj} takes the form
\begin{equation}
\small
\underset{B\in \mathbb{R}^{(p_1+1)\times (p_2+1)}}{\text{minimize}} \frac{1}{2n} \|y-W*B\|_2^2 +  \lambda_1 \sum_{j=1}^{p_1} \max( | B_{j,0} | , \| B_{j,-0} \|_1 ) + \lambda_2 \sum_{k=1}^{p_2}  \max( | B_{0,k} | , \| B_{-0,k} \|_1 ) + \lambda_3 \|B_{-0,-0}\|_1.
\label{obj:lhiernet}
\end{equation}
In the special case where $X=Z$, $\lambda_1 = \lambda_2 = \lambda$,  and $\lambda_3 = \lambda/2$,  \eqref{obj:lhiernet} is in fact equivalent to the \hiernet \ proposal of 
 \citet{bien2013lasso}.
   Details of this equivalence are given in \citet{bien2013lasso}.

 \citet{bien2013lasso} propose to solve \hiernet \ via an ADMM algorithm which applies a generalized gradient descent loop within each update. This leads to computational inefficiency, especially for large $p$.  In Section \ref{sec:alg}, we propose a simple, stand-alone ADMM algorithm for solving \eqref{obj}, which can be easily applied to solve \eqref{obj:lhiernet}, and consequently also the \hiernet \ optimization problem. 

Given its connection to \citet{bien2013lasso}, we refer to \eqref{obj:lhiernet} as \familyhiernet.

\subsubsection{Dual Norms}
\label{sec:dual}

\begin{figure}
\centering
\includegraphics[scale = 0.5]{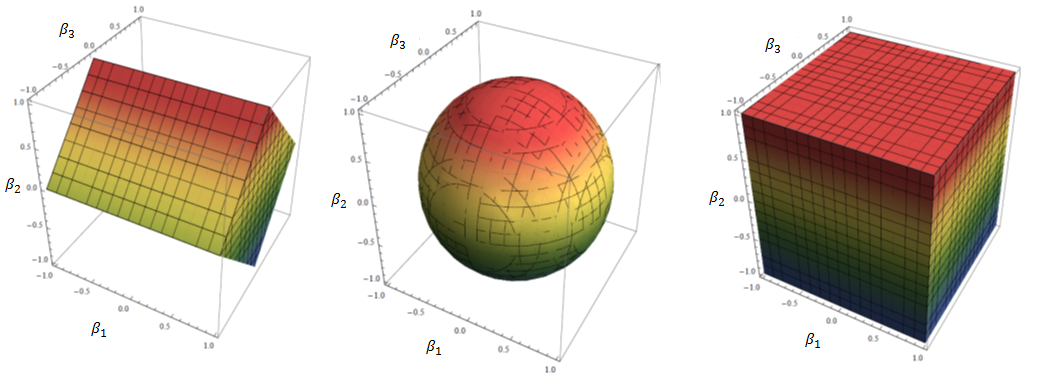}
\caption{A graphical representation of the region $P(\beta) \le 1$, where  $P(\beta)=\max \left( |\beta_1|, |\beta_2|+|\beta_3| \right)$ \emph{(left)}; $P(\beta) = \sqrt{\beta_1^2 + \beta_2^2 + \beta_3^2}$ \emph{(center)}; or $P(\beta) = \max(|\beta_1|, |\beta_2|, |\beta_3|)$ \emph{(right)}. }
\label{fig:norms}
\end{figure}

Here we further consider the  $l_2,\ l_{\infty}$ and $l_1/l_{\infty}$ hybrid penalties discussed in Sections~\ref{subsubsec:gl}-\ref{subsubsec:hiernet}. For an arbitrary penalty, the proximal operator is the solution to the optimization problem 
\begin{equation} 
\underset{\beta }{\text{minimize}}\ \   \frac{1}{2}\|y-\beta\|^2 + \lambda P(\beta).
\label{eqn:gen-prox}
\end{equation}
We begin by presenting a well-known lemma (see e.g. Proposition 1.1,  \citet{bach2011convex}).

\begin{lemma}
\label{lemma:dual-prox} 
Let $P(y)$ be a  norm of $y$ with dual norm $P_*(y) \equiv \max_z\  \{z^Ty: P(z) \le 1\} $. Then  $\hat{\beta} = 0$ solves (\ref{eqn:gen-prox}) if and only if $P_{*}(y) \leq  \lambda$.
\end{lemma}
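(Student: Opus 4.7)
The plan is to invoke first-order (subgradient) optimality for the convex objective in \eqref{eqn:gen-prox} and then identify the subdifferential of $P$ at the origin with the dual unit ball. The objective $f(\beta) = \tfrac{1}{2}\|y-\beta\|^2 + \lambda P(\beta)$ is convex, so $\hat{\beta}=0$ is a minimizer if and only if $0 \in \partial f(0)$. Since the quadratic term is differentiable with gradient $-(y-\beta)$, the sum rule for subdifferentials gives $\partial f(0) = -y + \lambda \partial P(0)$, so the optimality condition becomes
\begin{equation*}
y \in \lambda\, \partial P(0), \qquad \text{equivalently} \qquad y/\lambda \in \partial P(0).
\end{equation*}

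The next step is to show that $\partial P(0) = \{ z : P_*(z) \le 1\}$, the unit ball of the dual norm. By definition of the subdifferential, $z \in \partial P(0)$ iff $P(\beta) \ge P(0) + z^T(\beta - 0) = z^T \beta$ for every $\beta$. Using the bidual representation $P(\beta) = \max_{u : P_*(u) \le 1} u^T \beta$ (which holds because $P$ is a norm, hence a closed, positively homogeneous, sublinear function), the inequality $P(\beta) \ge z^T \beta$ holds for all $\beta$ precisely when $z$ itself lies in the dual unit ball $\{u : P_*(u) \le 1\}$. Indeed, if $P_*(z)\le 1$ then $z^T\beta \le \max_{P_*(u)\le 1} u^T\beta = P(\beta)$; conversely, if $P_*(z) > 1$, pick $\beta$ achieving $z^T\beta > P(\beta)$ via positive homogeneity.

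Combining the two steps, $\hat{\beta}=0$ is optimal iff $y/\lambda$ lies in the dual unit ball, i.e.\ $P_*(y/\lambda) \le 1$, which by positive homogeneity of $P_*$ is equivalent to $P_*(y) \le \lambda$.

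The only step that requires a bit of care is the identification $\partial P(0) = \{z : P_*(z)\le 1\}$, since it relies on the bidual/Fenchel--Moreau theorem for norms; everything else is a one-line application of the sum rule for subdifferentials. The result stated in the lemma is classical, so I would simply cite the bidual representation rather than re-derive it in detail.
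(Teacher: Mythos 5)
Your proof is correct and follows essentially the same route the paper takes: the paper states this as a well-known result and cites Proposition~1.1 of \citet{bach2011convex}, and the (unpublished draft) proof in the source uses exactly your argument --- subgradient optimality at $\hat\beta=0$ combined with the identification of $\partial P(0)$ with the dual-norm unit ball. No gaps; your extra care in justifying $\partial P(0)=\{z: P_*(z)\le 1\}$ via the bidual representation is the only place where the argument is non-trivial, and you handle it correctly.
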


It is well-known that the $\ell_2$ norm is its own dual norm, and that the $\ell_1$ norm is dual to the $\ell_\infty$ norm. We now derive the dual norm for the \familyhiernet\ penalty. This lemma is proven in Appendix~\ref{app:proofs}.  
\begin{lemma}
\label{lemma:hierNet-dual}
The dual norm of $P(\beta) = \max\{ |\beta_1|,\|\beta_{-1}\|_1 \}$ takes the form 
\begin{equation}
P_*(\beta)  = |\beta_1|+\|\beta_{-1}\|_{\infty}.
\label{eqn:hier_dual}
\end{equation}
\end{lemma}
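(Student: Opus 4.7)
The plan is to work directly from the definition of the dual norm, $P_*(\beta) = \max_z\{z^T\beta : P(z) \le 1\}$, and exploit the fact that the penalty $P$ is a maximum of two norms acting on disjoint coordinate blocks ($z_1$ and $z_{-1}$). The first step is to observe that the sublevel set $\{z : P(z) \le 1\}$ is exactly $\{z : |z_1| \le 1,\ \|z_{-1}\|_1 \le 1\}$, since $\max\{a,b\} \le 1$ iff $a \le 1$ and $b \le 1$. This means the feasible region factors as a product of the unit $\ell_\infty$-ball on the first coordinate and the unit $\ell_1$-ball on the remaining coordinates.

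Next, I would split the linear objective accordingly: $z^T\beta = z_1 \beta_1 + z_{-1}^T \beta_{-1}$. Since the two summands depend on disjoint blocks of $z$ and the constraints on those blocks are independent, the maximum separates into
\[
P_*(\beta) \;=\; \max_{|z_1|\le 1} z_1\beta_1 \;+\; \max_{\|z_{-1}\|_1\le 1} z_{-1}^T\beta_{-1}.
\]
The first term equals $|\beta_1|$ (achieved by $z_1 = \operatorname{sign}(\beta_1)$), and the second term equals $\|\beta_{-1}\|_\infty$ by the standard fact that the dual of the $\ell_1$ norm is the $\ell_\infty$ norm (achieved by concentrating $z_{-1}$ on a coordinate where $|\beta_{-1}|$ is maximized, with the appropriate sign).

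Combining these two evaluations yields $P_*(\beta) = |\beta_1| + \|\beta_{-1}\|_\infty$, as claimed. Before concluding, I would briefly verify that $P$ is actually a norm (nonnegativity, positive homogeneity, and the triangle inequality all follow from the corresponding properties of $|\cdot|$ and $\|\cdot\|_1$ together with the fact that the pointwise maximum of norms is a norm), which is needed to invoke the duality framework of Lemma~\ref{lemma:dual-prox}. Honestly, there is no real obstacle here: the crux is simply recognizing that $\max\{a,b\}\le 1$ makes the constraint set a Cartesian product, after which the computation reduces to two elementary, well-known dualities.
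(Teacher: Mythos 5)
Your proposal is correct and follows essentially the same route as the paper's proof: both rewrite the constraint $\max\{|\beta_1|,\|\beta_{-1}\|_1\}\le 1$ as the product of the two block constraints, separate the supremum over the disjoint blocks, and evaluate each piece via the standard $\ell_\infty$/$\ell_1$ dualities. No gaps.
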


Lemmas~\ref{lemma:dual-prox} and \ref{lemma:hierNet-dual} provide insight into the values of $y$ for which  all variables are shrunken to zero in \eqref{eqn:gen-prox}. The dual norm balls for the hybrid $\ell_1$/$\ell_\infty$, $\ell_2$, and $\ell_\infty$  norms are displayed in Figure~\ref{fig:dualnorms}. By Lemma~\ref{lemma:dual-prox}, any $y$ inside the dual norm ball leads to a zero solution of (\ref{eqn:gen-prox}). For the hybrid $\ell_1$/$\ell_\infty$ norm, the shape of the dual norm ball implies that  the first element of $y$ plays an outsize role in whether or not the coefficient vector is shrunken to zero. Consequently, the main effects play a larger role than the interactions in determining whether sparsity is induced. In contrast, for the $\ell_\infty$ and $\ell_2$ norms, the main effect and interactions play an equal role in determining whether the coefficients are shrunken to zero.

\begin{figure}
\centering
\includegraphics[scale = 0.6]{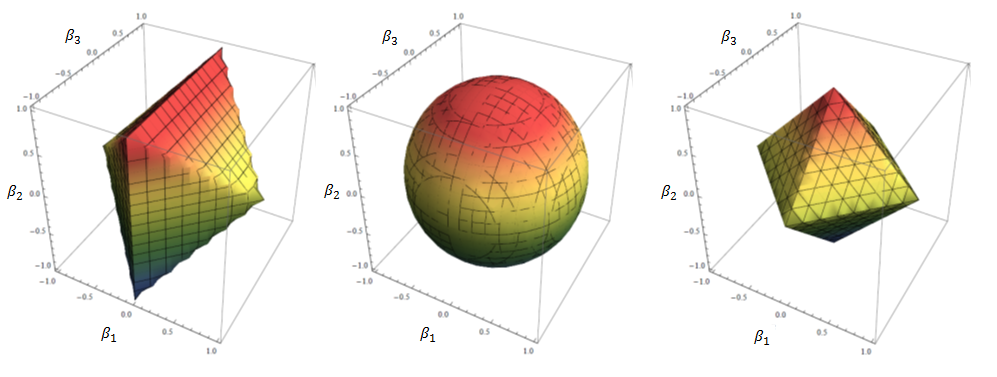}
\caption{A graphical representation of the region $P_*(\beta) \le 1$, where $P_*(\beta)$ is the dual norm for $P(\beta)=\max \left( |\beta_1|, |\beta_2|+|\beta_3| \right)$ \emph{(left)}; $P(\beta) = \sqrt{\beta_1^2 + \beta_2^2 + \beta_3^2}$ \emph{(center)}; or $P(\beta) = \max(|\beta_1|, |\beta_2|, |\beta_3|)$ \emph{(right)}. }
\label{fig:dualnorms}
\end{figure}

\subsection{Algorithm for Solving \family}
\label{sec:alg}
A step-by-step ADMM algorithm for solving \family\ is provided in Appendix~\ref{sec:fullAlg}.
Here, we present an overview of this algorithm. 
A gentle introduction to ADMM is provided in Appendix~\ref{sec:admm}. 

\subsubsection{ADMM Algorithm for Solving \family}
\label{sec:admm2}
We now develop an ADMM algorithm  to solve (\ref{obj}). We define the variable $\Theta = (D|E|F)$, with $D,E,F\in \mathbb{R}^{(p_1+1)\times (p_2+1)}$. That is, $\Theta$ is a $(p_1+1)\times 3(p_2+1)$ matrix, which we partition into $D,\ E$, and $F$ for convenience. Then (\ref{obj}) can be re-written as
\begin{eqnarray}
&&\underset{\parbox{1.5in}{ $B\in \mathbb{R}^{(p_1+1)\times (p_2+1)} ,\\ \Theta \in \mathbb{R}^{(p_1+1)\times 3(p_2+1)}$ } } { \text{minimize }} \left\{ \frac{1}{2n}\|{y}-{W}*B\|_2^2 +  \lambda_1 \sum_{j=1}^{p_1}P_r(D_{j,.})  + \lambda_2 \sum_{k=1}^{p_2}P_c(E_{.,k}) + \lambda_3 \|F_{-0,-0}\|_1 \right\} \nonumber \\
&&\mathrm{subject \; to\;} \qquad B(I_{(p_2+1)\times (p_2+1)} |I_{(p_2+1)\times (p_2+1)} | I_{(p_2+1)\times (p_2+1)} ) = \Theta.
\label{objadmm}
\end{eqnarray}
The augmented Lagrangian corresponding to \eqref{objadmm} takes the form
\begin{equation}
\small
\begin{split}
 L_{\rho} (B,\Theta, \Gamma  ) &= \frac{1}{2n} \|{y}-{W}*B\|_2^2 +\lambda_1 \sum_{j=1}^{p_1} P_r({D}_{j,.})+\lambda_2  \sum_{k=1}^{p_2} P_c({E}_{.,k})+ \lambda_3 \|{F}_{-0,-0}\|_1 \\
&+\text{trace} \left(\Gamma^T({B}({I}|{I}|{I})-{\Theta} )\right) + \rho/2 \| {B}({I}|{I}|{I})-{\Theta} \|_F^2, 
 \end{split}
 \label{auglag}
\end{equation}
where  $\Gamma $ is a $(p_1+1)\times 3(p_2+1)$-dimensional dual variable. For convenience, we partition $\Gamma$ as follows: $\Gamma = (\Gamma_1|\Gamma_2|\Gamma_3)$ where $\Gamma_i$ is a $(p_1+1)\times (p_2+1)$ matrix for $i=1,2,3$.  

The augmented Lagrangian  (\ref{auglag}) can be rewritten as
\begin{equation}
\begin{split}
L_{\rho} ({B,\Theta,\Gamma }) &=\frac{1}{2n} \|{y}-{W}*B\|_2^2 + \lambda_1 \sum_{j=1}^{p_1} P_r({D}_{j,.}) + \lambda_2 \sum_{k=1}^{p_2} P_c({E}_{.,k})+ \lambda \| {F}_{-0,-0}\|_1 \\
&+\langle\Gamma_1,{B}-{D}\rangle + \langle \Gamma_2,{B}-{E}\rangle +\langle \Gamma_3, {B}-{F}\rangle\\
&+ \rho/2 \| {B}-{D} \|_F^2+\rho/2 \| {B}-{E} \|_F^2 +\rho/2 \| {B}-{F} \|_F^2.
\end{split}
\label{auglag2}
\end{equation}

In order to develop an ADMM algorithm to solve \eqref{obj}, we must now simply figure out how to minimize \eqref{auglag2} with respect to $B$ with $\Theta$ held fixed, and how to minimize \eqref{auglag2} with respect to $\Theta$ with $B$ held fixed. 
 Minimizing \eqref{auglag2} with respect to $B$ 
 amounts simply to a least squares problem. In order to minimize \eqref{auglag2} with respect to $\Theta$, we note that \eqref{auglag2}  can simply be minimized with respect to $D$, $E$, and $F$ separately. 
Minimizing \eqref{auglag2} with respect to $F$ amounts simply to soft-thresholding \citep{friedman2007pathwise}. Minimizing \eqref{auglag2} with respect to $D$  or with respect to $E$ amounts to solving a problem that is equivalent to \eqref{eqn:gen-prox}. We consider that problem next.

Details of the ADMM algorithm for solving \eqref{obj} are given in Appendix~\ref{sec:fullAlg}.

\subsubsection{Solving (\ref{eqn:gen-prox}) for $\ell_2$, $\ell_\infty$, and Hybrid $\ell_1$/$\ell_\infty$  Penalties}
\label{sec:prox-all}

We saw in the previous section that the updates for $D$ and $E$ in the ADMM algorithm amount to solving the problem \eqref{eqn:gen-prox}. 
For $P(\beta) = \|\beta\|_2$,  (\ref{eqn:gen-prox}) amounts to soft-shrinkage \citep{simon2013sparse,yuan2006model}, for which a closed-form solution is available. For $P(\beta) = \|\beta\|_{\infty}$, an efficient algorithm was proposed by \citet{duchi2009efficient}. We now present an efficient algorithm for solving (\ref{eqn:gen-prox}) for  $P(\beta)  = \max\{ |\beta_1|, \|\beta_{-1}\|_{1}\}$.
\begin{lemma}
\label{lemma:hierNet-dualprob}
Let $\hat\beta$ denote the solution to (\ref{eqn:gen-prox}) with $P(\beta) = \max\{ |\beta_1|,\|\beta_{-1}\|_1 \}$. Then $\hat\beta=y - \hat{u}$, where $\hat{u}$ is the solution to 
\begin{equation}
\begin{split}
\underset{u\in \mathbb{R}^p,\ \lambda_1\in \mathbb{R}}{\mathrm{minimize}}  \ &\frac{1}{2}\|y-u\|^2 \\
\mathrm{subject\; to\; } &|u_1| \le \lambda_1,\ \|u_{-1}\|_{\infty} \le \lambda-\lambda_1,\ \ 0\le \lambda_1 \le \lambda.
\label{eqn:dual}
\end{split}
\end{equation}
\end{lemma}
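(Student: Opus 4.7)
The plan is to identify the problem on the right-hand side of \eqref{eqn:dual} as (a reparametrization of) projection of $y$ onto the $\lambda$-scaled dual norm ball, and then invoke the standard Moreau decomposition which equates the proximal map with the identity minus that projection.

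More concretely, first I would write the prox problem \eqref{eqn:gen-prox} in its minimax form using the fact that for a norm $P$ we have $P(\beta) = \sup_{P_*(u)\le 1} u^T\beta$. With $P_*(u)$ given by Lemma~\ref{lemma:hierNet-dual}, this rewrites \eqref{eqn:gen-prox} as
\begin{equation*}
\min_{\beta}\ \max_{u:\ P_*(u)\le\lambda}\ \tfrac{1}{2}\|y-\beta\|^2 + u^T\beta.
\end{equation*}
Since the objective is convex in $\beta$ and concave (linear) in $u$, and the constraint set on $u$ is compact and convex, Sion's minimax theorem applies and we may swap the order. The inner minimization in $\beta$ is unconstrained quadratic, giving $\hat\beta = y - \hat u$, and substituting back reduces the problem to
\begin{equation*}
\hat u\ =\ \arg\min_{u}\ \tfrac{1}{2}\|y-u\|^2 \quad \text{subject to}\quad |u_1| + \|u_{-1}\|_\infty \le \lambda,
\end{equation*}
i.e.\ projection of $y$ onto the $\lambda$-scaled dual norm ball.

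The remaining step is purely algebraic: show that this projection problem is equivalent to \eqref{eqn:dual}. Given any feasible $u$ for the projection problem, set $\lambda_1 = |u_1|$; then $0 \le \lambda_1 \le \lambda$ (since $|u_1|\le \lambda$) and $\|u_{-1}\|_\infty \le \lambda - \lambda_1$, so $(u,\lambda_1)$ is feasible for \eqref{eqn:dual} with the same objective value. Conversely, any $(u,\lambda_1)$ feasible for \eqref{eqn:dual} satisfies $|u_1|+\|u_{-1}\|_\infty \le \lambda_1 + (\lambda - \lambda_1) = \lambda$, giving a feasible point of the projection problem with the same objective. The two problems therefore share the same minimum in $u$, and combining with $\hat\beta = y - \hat u$ yields the claim.

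The only real subtlety is the minimax swap; I would justify it either by Sion's theorem (as above) or by directly appealing to the well-known Moreau identity $\mathrm{prox}_{\lambda P}(y) + \mathrm{proj}_{\lambda \mathcal{B}_*}(y) = y$ for a norm $P$ with dual unit ball $\mathcal{B}_*$. Everything else is bookkeeping: plugging in Lemma~\ref{lemma:hierNet-dual} to make the dual constraint explicit, and the variable split that introduces $\lambda_1$ to decouple the first coordinate from the rest.
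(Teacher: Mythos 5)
Your proof is correct and follows essentially the same route as the paper's: rewrite the prox objective via the dual-norm representation $\lambda P(\beta)=\max_{P_*(u)\le\lambda}u^T\beta$, swap min and max, solve the inner problem to get $\hat\beta=y-\hat u$, and identify the result as projection onto the $\lambda$-scaled dual ball, which the $\lambda_1$-split reparametrizes. Your explicit justification of the swap (Sion/Moreau) and of the final feasibility correspondence is slightly more careful than the paper's, but the argument is the same.
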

We established in Section~\ref{sec:dual} that if $\lambda \geq |y_1|+\|y_{-1}\|_{\infty}$, then the solution to \eqref{eqn:gen-prox} is zero. 
Therefore, we now restrict our attention to the case $\lambda < |y_1|+\|y_{-1}\|_{\infty}$. For a fixed $\lambda_1 \in [0,\lambda]$, we can see by inspection that the solution to (\ref{eqn:dual}) is given by 
\begin{equation}
u_1(\lambda_1) = \left\{ \begin{array}{cc}
y_1 & |y_1| \le \lambda_1\\
\lambda_1 \mathrm{sgn}(y_1) & |y_1| > \lambda_1
\end{array} \right. \text{and } u_i(\lambda_1) = \left\{ \begin{array}{cc}
y_i & |y_i| \le \lambda - \lambda_1\\
(\lambda -\lambda_1) \mathrm{sgn}(y_i) & |y_i| > \lambda-\lambda_1
\end{array} \right. ,
\label{eqn:udef}
\end{equation}
for $i = 2,\ldots, p$. Thus, (\ref{eqn:dual}) is equivalent to the problem 
\begin{equation}
\begin{split}
\underset{\lambda_1\in [0,\lambda]}{\text{minimize}} & \ \frac{1}{2}\|y-u(\lambda_1)\|^2.
\label{eqn:dual-noconstraint}
\end{split}
\end{equation}

\begin{theorem}
\label{thm:hierNet}
Let $z$ denote the $(p-1)$-vector whose $i^{th}$ element is $\lambda - |y_{i+1}|$. Then the solution to problem (\ref{eqn:dual-noconstraint}) is given by 
\begin{equation}
\hat{\lambda}_1 = \left\{\begin{array}{cc}
\lambda &\mathrm{if}\; \min_j\left\{ \frac{|y_1|+ \sum_{i=1}^{j} z_{(i)} }{j+1} \right\}\ge\lambda\\
0 & \mathrm{if}\; \min_j \left\{ \frac{|y_1|+ \sum_{i=1}^{j} z_{(i)} }{j+1} \right\} \le 0\\
 \min_j \left\{ \frac{|y_1|+ \sum_{i=1}^{j} z_{(i)} }{j+1} \right\} & \mathrm{otherwise}
\end{array} \right. \ .
\label{thm:main-hierNet}
\end{equation}
\end{theorem}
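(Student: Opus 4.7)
The plan is to analyze $f(\lambda_1) \equiv \tfrac{1}{2}\|y - u(\lambda_1)\|^2$ as a one-dimensional convex function, read off the first-order optimality condition, parametrize its solutions by the size of an ``active set,'' and then show that $\min_j \phi(j)$ in (\ref{thm:main-hierNet}) selects the correct active set via a monotonicity argument. Substituting (\ref{eqn:udef}) gives
\[
f(\lambda_1) \;=\; \tfrac{1}{2}(|y_1| - \lambda_1)_+^2 \;+\; \tfrac{1}{2}\sum_{i=1}^{p-1}(\lambda_1 - z_i)_+^2,
\]
which is convex and continuously differentiable in $\lambda_1$ with $f'(\lambda_1) = -(|y_1|-\lambda_1)_+ + \sum_i (\lambda_1 - z_i)_+$. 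In the regime $\lambda < |y_1| + \|y_{-1}\|_\infty$ (in the complementary regime (\ref{eqn:gen-prox}) has zero solution by Lemmas~\ref{lemma:dual-prox} and \ref{lemma:hierNet-dual}), the unconstrained minimizer $t^*$ satisfies $t^* < |y_1|$, so the critical-point equation reduces to $|y_1| - t^* = \sum_i(t^* - z_i)_+$. Sorting $z_{(1)} \le z_{(2)} \le \cdots \le z_{(p-1)}$ in increasing order and letting $j$ denote the number of $z_i$'s strictly below $t^*$, this equation gives $t^* = \phi(j)$ with $\phi(j) := (|y_1| + \sum_{i=1}^j z_{(i)})/(j+1)$, precisely the expression appearing inside the $\min$ in (\ref{thm:main-hierNet}).

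The central step is to show $t^* = \min_j \phi(j)$. I would rely on the algebraic identity
\[
\phi(j+1) - \phi(j) \;=\; \frac{z_{(j+1)} - \phi(j)}{j+2},
\]
so $\phi$ decreases from $j$ to $j+1$ precisely when $z_{(j+1)} < \phi(j)$. Let $j^*$ attain $\min_j \phi(j)$. Minimality $\phi(j^*) \le \phi(j^*+1)$ gives $z_{(j^*+1)} \ge \phi(j^*)$; and $\phi(j^*) \le \phi(j^*-1)$, combined with the weighted-average representation $\phi(j^*) = (j^*\phi(j^*-1) + z_{(j^*)})/(j^*+1)$, gives $z_{(j^*)} \le \phi(j^*)$. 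Hence $z_{(j^*)} \le \phi(j^*) \le z_{(j^*+1)}$, so $\phi(j^*)$ sits in the correct quadratic piece of $f$ and satisfies the first-order condition; by convexity the stationary point is unique, giving $t^* = \phi(j^*) = \min_j \phi(j)$.

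The three cases in (\ref{thm:main-hierNet}) then follow from convexity of $f$: the constrained minimum over $[0, \lambda]$ is the Euclidean projection of $t^*$ onto that interval, which returns $\lambda$ if $t^* \ge \lambda$, $0$ if $t^* \le 0$, and $t^*$ otherwise. The principal obstacle is the bracketing step in the second paragraph --- namely, deriving $z_{(j^*)} \le \phi(j^*) \le z_{(j^*+1)}$ cleanly from the minimality of $j^*$, and correctly handling the endpoint conventions ($\phi(-1) = z_{(p)} = +\infty$) along with possible ties among the $z_{(i)}$'s for which several values of $j$ attain the min.
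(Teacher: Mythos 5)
Your proposal is correct and follows essentially the same route as the paper's proof: both establish convexity of $f$, read off its piecewise-linear derivative, parametrize the candidate stationary points as $\phi(m) = \bigl(|y_1| + \sum_{i=1}^{m} z_{(i)}\bigr)/(m+1)$, show that the minimizing index $j^*$ satisfies the bracketing $z_{(j^*)} \le \phi(j^*) \le z_{(j^*+1)}$, and finish by projecting the unconstrained minimizer onto $[0,\lambda]$. Your finite-difference identity $\phi(j+1)-\phi(j) = (z_{(j+1)}-\phi(j))/(j+2)$ is just a tidier packaging of the same algebraic manipulations the paper carries out directly, and the "obstacles" you flag (endpoint conventions, ties) are handled in the paper by taking $z_{(p)} \equiv \infty$ and choosing $k$ to be the smallest minimizer.
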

Combining Theorem \ref{thm:hierNet} and Lemma \ref{lemma:hierNet-dualprob} gives us a solution for (\ref{eqn:gen-prox}) with the hybrid $\ell_1$/$\ell_\infty$ penalty. 
 Proofs are given in Appendix~\ref{app:proofs}.

\subsubsection{Convergence, Computational Complexity, and Timing Results}
\label{sec:computationAlg}
As mentioned in Section \ref{sec:admm}, ADMM's convergence to the global optimum is guaranteed for the convex, closed and proper objective function (\ref{obj}) \citep{boyd2011distributed}. The computational complexity of the algorithm depends on the form of the penalty functions used. 

The update for $B$ is typically the most computationally-demanding step of the ADMM algorithm for \eqref{obj}. As pointed out in 
Appendix~\ref{sec:fullAlg}, this can be done very efficiently. We perform the singular value decomposition for a $n\times (p_1+1)(p_2+1)$-dimensional matrix \emph{once}, given the data matrix $W$. Then,  in each iteration of the ADMM algorithm, the update for $B$ requires simply an efficient matrix inversion using the Woodbury matrix formula.

We now report timing results for our \verb|R|-language implementation of \family, available in the package \family\  on \verb|CRAN|, on an Intel\textregistered \  Xeon\textregistered \  E5-2620 processor. 
  We considered an example with $n=350$ and  $p_1=p_2=500$ (for a total of $251,000$ features). Using the parametrization  (\ref{eqn:obj-reparametrized}), running \familylt\  with $\alpha = 0.7$ and a grid of 10 $\lambda$ values takes a median time of 330 seconds, and running \familyli\ takes a median time of 416 seconds.


\subsection{Extension to Generalized Linear Models} 
\label{sec:Extension}

The \family \ optimization problem \eqref{obj} can be extended to the case of a general convex loss function $l(\cdot)$, 
\begin{equation}
\begin{split}
 \underset{B\in \mathbb{R}^{(p_1+1)\times (p_2+1)}}{\text{minimize}} &\frac{1}{n} l(B) +   \lambda_1 \sum_{j=1}^{p_1}P_r( B_{j,.}) + \lambda_2 \sum_{k=1}^{p_2}P_c (B_{.,k}) + \lambda_3 \|B_{-0,-0}\|_1. 
\end{split}
\label{objglm}
\end{equation}
For instance, in the case of a binary response variable $y$, we could take $l$ to be the negative log likelihood under a binomial model. Then \eqref{objglm} corresponds to a penalized logistic regression problem with interactions.  
An ADMM algorithm for \eqref{objglm} can be derived just as in Section~\ref{sec:admm2}, with a  modification to the update for $B$.  
This is discussed in Appendix~\ref{app:alg-logistic}. 

\subsection{Uniqueness of the \family\ Solution}
\label{sec:identifiability}

The \family \ optimization problem \eqref{obj} is convex, and the algorithm presented in Section~\ref{sec:alg} is guaranteed to yield a solution that achieves the global minimum. But \eqref{obj} is not strictly convex: this means that the solution might not be unique, in the sense that more than one value of $B$ might achieve the global minimum.  However, uniqueness of the  \emph{fitted values} resulting from \eqref{obj} is straightforward.  This is formalized in the following lemma. The proof is as in  Lemma 1(ii) of \citet{tibshirani2013lasso}. 
\begin{lemma}
For a convex penalty function $P(\cdot)$, let $\hat{B}$ denote the solution to the problem 
\begin{equation}
\underset{B\in \mathbb{R}^{(p_1+1)\times(p_2+1)} }{\text{minimize}} \  \frac{1}{2n}\|y-W*B\|^2+P(B).
\label{eqn:identifiability}
\end{equation}
The fitted values $W*\hat{B}$ are unique.
\end{lemma}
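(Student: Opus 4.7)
The plan is a direct strict-convexity argument by contradiction, essentially identical in structure to the proof of Lemma 1(ii) in Tibshirani (2013). First I would denote the objective in \eqref{eqn:identifiability} by $f(B) = \tfrac{1}{2n}\|y - W*B\|^2 + P(B)$, let $c^\ast = \min_B f(B)$, and suppose for contradiction that there exist two minimizers $\hat{B}^{(1)}$ and $\hat{B}^{(2)}$ with $W*\hat{B}^{(1)} \neq W*\hat{B}^{(2)}$.

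Next I would form the midpoint $\bar{B} = \tfrac{1}{2}(\hat{B}^{(1)} + \hat{B}^{(2)})$ and exploit two facts. The map $B \mapsto W*B$ is linear, so $W*\bar{B} = \tfrac{1}{2}(W*\hat{B}^{(1)} + W*\hat{B}^{(2)})$, and the two fitted vectors are distinct by assumption. Since $v \mapsto \|y - v\|^2$ is strictly convex on $\mathbb{R}^n$, applying strict Jensen on these two distinct points gives
\begin{equation*}
\tfrac{1}{2n}\|y - W*\bar{B}\|^2 \;<\; \tfrac{1}{2}\cdot\tfrac{1}{2n}\|y - W*\hat{B}^{(1)}\|^2 + \tfrac{1}{2}\cdot \tfrac{1}{2n}\|y - W*\hat{B}^{(2)}\|^2.
\end{equation*}
Meanwhile, convexity of $P$ yields $P(\bar{B}) \le \tfrac{1}{2}P(\hat{B}^{(1)}) + \tfrac{1}{2}P(\hat{B}^{(2)})$. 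Adding these two inequalities gives $f(\bar{B}) < \tfrac{1}{2}f(\hat{B}^{(1)}) + \tfrac{1}{2}f(\hat{B}^{(2)}) = c^\ast$, contradicting optimality of $c^\ast$. Therefore $W*\hat{B}^{(1)} = W*\hat{B}^{(2)}$, and the fitted values are unique.

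I do not anticipate a real obstacle: the only subtle point is recognizing that the nonuniqueness of $\hat{B}$ comes entirely from the kernel of the linear operator $B \mapsto W*B$, and that strict convexity of the quadratic loss is inherited only through its argument $W*B$, not through $B$ itself. Once that is made explicit, the rest is a standard midpoint convexity argument and requires no properties of $P$ beyond ordinary convexity (so the result applies uniformly to all the penalties considered in the paper, including the $\ell_2$, $\ell_\infty$, and hybrid $\ell_1/\ell_\infty$ choices).
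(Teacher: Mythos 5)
Your proof is correct and follows exactly the route the paper intends: the paper gives no independent argument but simply cites Lemma 1(ii) of Tibshirani (2013), whose proof is precisely this midpoint strict-convexity contradiction applied to the fitted values $W*B$. Nothing further is needed.
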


\section{Degrees of Freedom}
\label{sec:dof}

\subsection{Review of Degrees of Freedom}

Consider the linear model $y = X \beta + \epsilon$, with fixed $X$, and   
$\epsilon \sim \mathcal{N}_n(0, \sigma^2 \bs{I}_n)$. Then the degrees of freedom of a model-fitting procedure is defined as \citep{stein1981estimation,efron1986biased}
\begin{equation}
\df = \frac{1}{\sigma^2} \sum_{i=1}^{n} \text{Cov}(y_i,\hat{y}_i),
\label{eqn:real.df}
\end{equation}
where $\hat{y}_i$ are the fitted response values. If certain conditions hold, then  
\begin{align}
\df &= E\left[\sum_{i=1}^{n} \frac{\partial \hat{y}_i}{\partial y_i} \right]. 
 \label{eqn:df.est}
\end{align} 
Therefore, $\sum_{i=1}^{n} \frac{\partial \hat{y}_i}{\partial y_i} $ is an unbiased estimator for the degrees of freedom of the model-fitting procedure.

Before presenting the main results of this section, we state a useful lemma.
\begin{lemma}
\label{lemma:hessian-lq}
Given a vector $x\in \mathbb{R}^p$, and an even positive integer $q$, 
\begin{equation}
\frac{d^2 \|x\|_q }{dx^2}  =  (q-1) \diag \left[ \left( \frac{x }{\|x\|_q} \right)^{q-2}\right] \times \left[ \frac{I}{\|x\|_q} - \frac{x(x^T)^{q-1}  }{\| x\|_q^{q+1} } \right],
\label{eqn:hessian-lq}
\end{equation}
where $\diag(x)$ is the diagonal matrix with $x$ on the diagonal, and $(x)^{q}$ denotes the element-wise exponentiation of the vector $x$. 
\end{lemma}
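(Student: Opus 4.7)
The plan is to prove the formula by direct second-order differentiation, exploiting the fact that for even $q$, the function $S(x) \equiv \sum_{i=1}^p x_i^q$ is a nonnegative polynomial and $\|x\|_q = S(x)^{1/q}$ is smooth away from the origin; the absolute-value issues that normally complicate $\ell_q$ norms do not arise.

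First I would compute the gradient. Applying the chain rule to $\|x\|_q = S(x)^{1/q}$ with $\partial S/\partial x_j = q x_j^{q-1}$ gives
\begin{equation*}
\frac{\partial \|x\|_q}{\partial x_j} \;=\; \frac{1}{q} S(x)^{1/q - 1} \cdot q x_j^{q-1} \;=\; \frac{x_j^{q-1}}{\|x\|_q^{q-1}},
\end{equation*}
so that $\nabla \|x\|_q = (x)^{q-1} / \|x\|_q^{q-1}$ component-wise.

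Next I would differentiate this gradient component-wise to obtain the Hessian. The key subcomputation is $\partial_k \|x\|_q^{q-1} = (q-1) \|x\|_q^{q-2} \cdot \partial_k \|x\|_q = (q-1) x_k^{q-1}/\|x\|_q$. Using the quotient rule,
\begin{equation*}
\frac{\partial^2 \|x\|_q}{\partial x_k \partial x_j} \;=\; \frac{(q-1)\,\delta_{jk}\, x_j^{q-2}}{\|x\|_q^{q-1}} \;-\; \frac{(q-1)\, x_j^{q-1} x_k^{q-1}}{\|x\|_q^{2q-1}}.
\end{equation*}

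Finally I would factor this entrywise expression to match the claimed product form. Pulling out $(q-1)\, x_j^{q-2}/\|x\|_q^{q-2}$ from each $(j,k)$ entry leaves $\delta_{jk}/\|x\|_q - x_j x_k^{q-1}/\|x\|_q^{q+1}$, which is the $(j,k)$ entry of $I/\|x\|_q - x (x^T)^{q-1}/\|x\|_q^{q+1}$. The prefactor $x_j^{q-2}/\|x\|_q^{q-2}$ depends only on the row index $j$, so it assembles into the diagonal matrix $\diag[(x/\|x\|_q)^{q-2}]$ acting on the left, giving exactly \eqref{eqn:hessian-lq}.

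The calculation is essentially bookkeeping; the only subtle step is recognizing that the $x_j^{q-2}$ factor, which arises from differentiating $x_j^{q-1}$, can be peeled off as a left-multiplying diagonal matrix rather than absorbed into the rank-one outer-product piece. The restriction to even $q$ is used only to guarantee smoothness (so no sign functions appear), and the identity holds for all $x \neq 0$.
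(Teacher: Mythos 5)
Your computation is correct: the gradient $\nabla\|x\|_q = (x)^{q-1}/\|x\|_q^{q-1}$, the entrywise Hessian $\frac{(q-1)\delta_{jk}x_j^{q-2}}{\|x\|_q^{q-1}} - \frac{(q-1)x_j^{q-1}x_k^{q-1}}{\|x\|_q^{2q-1}}$, and the factorization into the left-multiplying diagonal matrix times $I/\|x\|_q - x(x^T)^{q-1}/\|x\|_q^{q+1}$ all check out (and reduce to the familiar $I/\|x\|_2 - xx^T/\|x\|_2^3$ when $q=2$). The paper states this lemma without proof, so there is no argument to compare against; your direct chain-rule/quotient-rule derivation is the natural verification and fills that gap.
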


\subsection{Degrees of Freedom for  a Penalized Regression Problem }

We now consider  the degrees of freedom of the estimator that solves the problem 
\begin{equation}
\underset{\beta\in \mathbb{R}^p}{\text{minimize}} \ \frac{1}{2} \|y - X\beta \|_2^2 + \sum_{d} \lambda_d P_d(A_d\beta),
\label{eqn:df-problem}
\end{equation}
where $P_d(\cdot)$ is an $\ell_q$ norm for a positive $q$, and $A_d$ is a $p\times p$ diagonal matrix with ones and zeros on the main diagonal. We define the active set 
to be $\mathcal{A} = \{j: \hat{\beta}_j \not= 0 \}$, the set of non-zero coefficient estimates. Let ${\hat{\beta}}_{ \mathcal{A}}$ denote the coefficients of the active set, and let $X_{\mathcal{A}}$ denote the matrix with columns corresponding to elements of the active set. Furthermore, we define $A_d^{\mathcal{A}}$ to be the sub-matrix of $A_d$ with rows and columns in $\mathcal{A}$.

\begin{claim}
\label{claim:df}
An unbiased estimator of the degrees of freedom of $\hat\beta$, the solution to (\ref{eqn:df-problem}), is given by 
\begin{equation}
\widehat{\df} = \mathrm{trace} \left(  X_{\mathcal{A}}\left[ X^T_{\mathcal{A}} X_{\mathcal{A}} + \sum_d \lambda_d \left( A_d^{\mathcal{A}}\right)^T  \ddot{P}_d(A_d^{\mathcal{A}} \hat{\beta}_{\mathcal{A}}) \left(A_d^{\mathcal{A}}\right) \right]^{-1} X_{ \mathcal{A}}^T \right),
\label{eqn:df-est-main}
\end{equation}
where $\ddot{P}_d(\cdot)$ is the  Hessian of the function $P_d(\cdot)$, and where $\mathcal{A}$ is the active set. 
\end{claim}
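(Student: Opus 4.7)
The plan is to apply Stein's formula as stated in \eqref{eqn:df.est}: since $\widehat{\df} = \sum_i \partial \hat{y}_i / \partial y_i = \mathrm{trace}(\partial \hat{y}/\partial y)$ is an unbiased estimator of $\df$ whenever the map $y \mapsto \hat{y}(y)$ is almost-differentiable with integrable derivative, it suffices to compute $\partial \hat{y}/\partial y$ and take its trace. I would first argue that with probability one the active set $\mathcal{A}$ is locally constant in $y$ (the set of $y$ values at which the active set changes is a finite union of lower-dimensional manifolds, hence Lebesgue-null), so that on an open neighborhood of almost every $y$ the inactive coefficients remain zero and the active ones vary smoothly.

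Next I would write down the KKT stationarity conditions for \eqref{eqn:df-problem} restricted to the active set. On this set $\hat\beta_{\mathcal{A}}$ satisfies
\begin{equation*}
-X_{\mathcal{A}}^T\bigl(y - X_{\mathcal{A}} \hat\beta_{\mathcal{A}}\bigr) + \sum_d \lambda_d \bigl(A_d^{\mathcal{A}}\bigr)^T \nabla P_d\bigl(A_d^{\mathcal{A}} \hat\beta_{\mathcal{A}}\bigr) = 0,
\end{equation*}
which is a smooth system (since $P_d$ is an $\ell_q$ norm with $q$ a positive even integer, it is $C^2$ away from the origin; note that on the active set the argument of $\nabla P_d$ is nonzero). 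Implicitly differentiating both sides with respect to $y$ and using the chain rule on $\nabla P_d(A_d^{\mathcal{A}} \hat\beta_{\mathcal{A}})$ yields
\begin{equation*}
-X_{\mathcal{A}}^T + \Bigl[X_{\mathcal{A}}^T X_{\mathcal{A}} + \sum_d \lambda_d \bigl(A_d^{\mathcal{A}}\bigr)^T \ddot{P}_d\bigl(A_d^{\mathcal{A}} \hat\beta_{\mathcal{A}}\bigr) A_d^{\mathcal{A}}\Bigr] \frac{\partial \hat\beta_{\mathcal{A}}}{\partial y} = 0,
\end{equation*}
so that solving for the Jacobian and left-multiplying by $X_{\mathcal{A}}$ gives
\begin{equation*}
\frac{\partial \hat y}{\partial y} = X_{\mathcal{A}} \Bigl[X_{\mathcal{A}}^T X_{\mathcal{A}} + \sum_d \lambda_d \bigl(A_d^{\mathcal{A}}\bigr)^T \ddot{P}_d\bigl(A_d^{\mathcal{A}} \hat\beta_{\mathcal{A}}\bigr) A_d^{\mathcal{A}}\Bigr]^{-1} X_{\mathcal{A}}^T.
\end{equation*}
Taking the trace recovers exactly \eqref{eqn:df-est-main}. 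Lemma \ref{lemma:hessian-lq} is invoked only to guarantee that the Hessians $\ddot P_d$ are well-defined and continuous at active-set coordinates, so the bracketed matrix is positive definite and hence invertible.

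The main obstacles are the technical regularity checks rather than the calculus. First, I need to argue that the inactive coordinates truly contribute nothing: small perturbations of $y$ keep $\hat\beta_j = 0$ for $j \notin \mathcal{A}$, so these columns drop out of the Jacobian. This requires a strict-complementarity-style condition on the subgradients at the non-differentiable points of $P_d$, which holds generically; a clean way is to invoke the Sard-type argument used in \citet{tibshirani2013lasso} and related degrees-of-freedom work. Second, I need to verify the integrability and almost-differentiability hypotheses in Stein's lemma so that the exchange of expectation and differentiation in \eqref{eqn:real.df}--\eqref{eqn:df.est} is valid; this follows because $\hat y(y)$ is a Lipschitz (in fact non-expansive) function of $y$, being the residual of a proximal map of a convex function, and Lipschitz functions are almost everywhere differentiable with bounded derivative. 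Once these two points are established, the computation above gives the claim.
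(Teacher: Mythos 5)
Your proposal is correct and follows essentially the same route as the paper's derivation: apply the trace formula for the degrees of freedom, write the stationarity condition restricted to the active set, implicitly differentiate with respect to $y$, and solve for the Jacobian of $\hat y$. In fact you are somewhat more careful than the paper about the regularity issues (local constancy of the active set, almost-differentiability via non-expansiveness of the fit), which the paper only flags as conditions that would be needed for a fully rigorous argument.
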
 
The derivation for Claim~\ref{claim:df} is outlined in Appendix \ref{app:dof}.

\subsection{Degrees of Freedom for  \family}

In this section we present estimates for the degrees of freedom of \familylt\ and \familyli. An estimate of the degrees of freedom of \familyhiernet\  is given in \citet{bien2013lasso}.

\subsubsection{\familylt}
We write \familylt\ in the form of (\ref{eqn:df-problem}),
\begin{equation}
\frac{1}{2} \|y-\wt{W}\wt{B}\|_2^2 +  n\lambda_1 \sum_{j=1}^{p_1} \| A_j\wt{B} \|_2 + n\lambda_2 \sum_{k=p_1+1}^{p_1+p_2} \| A_k\wt{B} \|_2 + n\lambda_3 \|A_I\wt{B}\|_1,
\label{eqn:familylt-df-form}
\end{equation}
where $\wt{B}$ is the vectorized version of $B$, and $\wt{W}$ is the $n\times (p_1+1)(p_2+1)$-dimensional matrix version of $W$. We apply Claim \ref{claim:df}  in order to obtain an unbiased estimate for \familylt:
\begin{equation} \label{eqn:df-est-l2} \footnotesize
\widehat{\df}_{\ell_2} = 
\mathrm{trace} \left(  \wt{W}_{\mathcal{A}} \left[ \wt{W}^T_{\mathcal{A}} \wt{W}_{\mathcal{A}} + n\lambda_1 \sum_{j=1}^{p_1} 
 (A_j^{\mathcal{A}})^T \left[ \ddot{P}(A_j^{\mathcal{A}} \hat{\wt{B}}_{\mathcal{A}} )  \right]  (A_j^{\mathcal{A}}) 
  +  n\lambda_2 \sum_{k=p_1+1}^{p_1+p_2}    
 (A_k^{\mathcal{A}})^T \left[  \ddot{P}(A_k^{\mathcal{A}} \hat{\wt{B}}_{\mathcal{A}} )  \right]  (A_k^{\mathcal{A}})  \right]^{-1} \wt{W}^T_{\mathcal{A}} \right),
\end{equation}
where $\ddot{P}(v_0)=\left. \frac{d^2 \|  v \|_2}{dv^2} \right|_{v=v_0}$ is of the form given in Lemma \ref{lemma:hessian-lq}.

\subsubsection{\familyli\ }
\label{sec:familyli-df-lq}

The $\ell_{\infty}$ norm is not differentiable, and thus we cannot apply Claim \ref{claim:df} directly. Instead, we make use of the fact that  $\lim\limits_{q\to \infty} \|\beta\|_q = \|\beta\|_{\infty}$ in order to  apply Claim~\ref{claim:df} to a modified version of \familyli\, in which the $\ell_\infty$ norm is replaced with an $\ell_q$ norm for a very large value of $q$. This yields the estimator 
\begin{equation} \label{eqn:df-est-lq} \footnotesize
\widehat{\df}_{\ell_\infty} = 
\mathrm{trace} \left(  \wt{W}_{\mathcal{A}} \left[ \wt{W}^T_{\mathcal{A}} \wt{W}_{\mathcal{A}} + n\lambda_1 \sum_{j=1}^{p_1} 
 (A_j^{\mathcal{A}})^T \left[ \ddot{P}(A_j^{\mathcal{A}} \hat{\wt{B}}_{\mathcal{A}} )  \right]  (A_j^{\mathcal{A}}) 
  +  n\lambda_2 \sum_{k=p_1+1}^{p_1+p_2}    
 (A_k^{\mathcal{A}})^T \left[  \ddot{P}(A_k^{\mathcal{A}} \hat{\wt{B}}_{\mathcal{A}} )  \right]  (A_k^{\mathcal{A}})  \right]^{-1} \wt{W}^T_{\mathcal{A}} \right),
\end{equation}
where $\ddot{P}(v_0)=\left. \frac{d^2 \|  v \|_q}{dv^2} \right|_{v=v_0}$ is of the form given in Lemma \ref{lemma:hessian-lq}.
 We use $q=500$ in Section~\ref{sec:df-simulations}.

\subsection{Numerical Results}
\label{sec:df-simulations}

We now consider the numerical performance of our estimates of the degrees of freedom of \family\ in a simple simulation setting. We use a fixed design matrix $X$, with $n=100$ rows and $p = 10$ main effects, and we let $X=Z$. We randomly selected 15 true interaction terms. We generated 100 different response vectors $y^{(1)},\ldots,y^{(100)}$ using independent Gaussian noise. We computed the  true degrees of freedom as well as the estimated degrees of freedom from \eqref{eqn:df-est-l2} and \eqref{eqn:df-est-lq}, averaged over the 100 simulated data sets. In Figure~\ref{fig:dof}, we see almost perfect agreement between the true and estimated degrees of freedom.
\begin{figure}
\centering
\includegraphics[scale = 0.35]{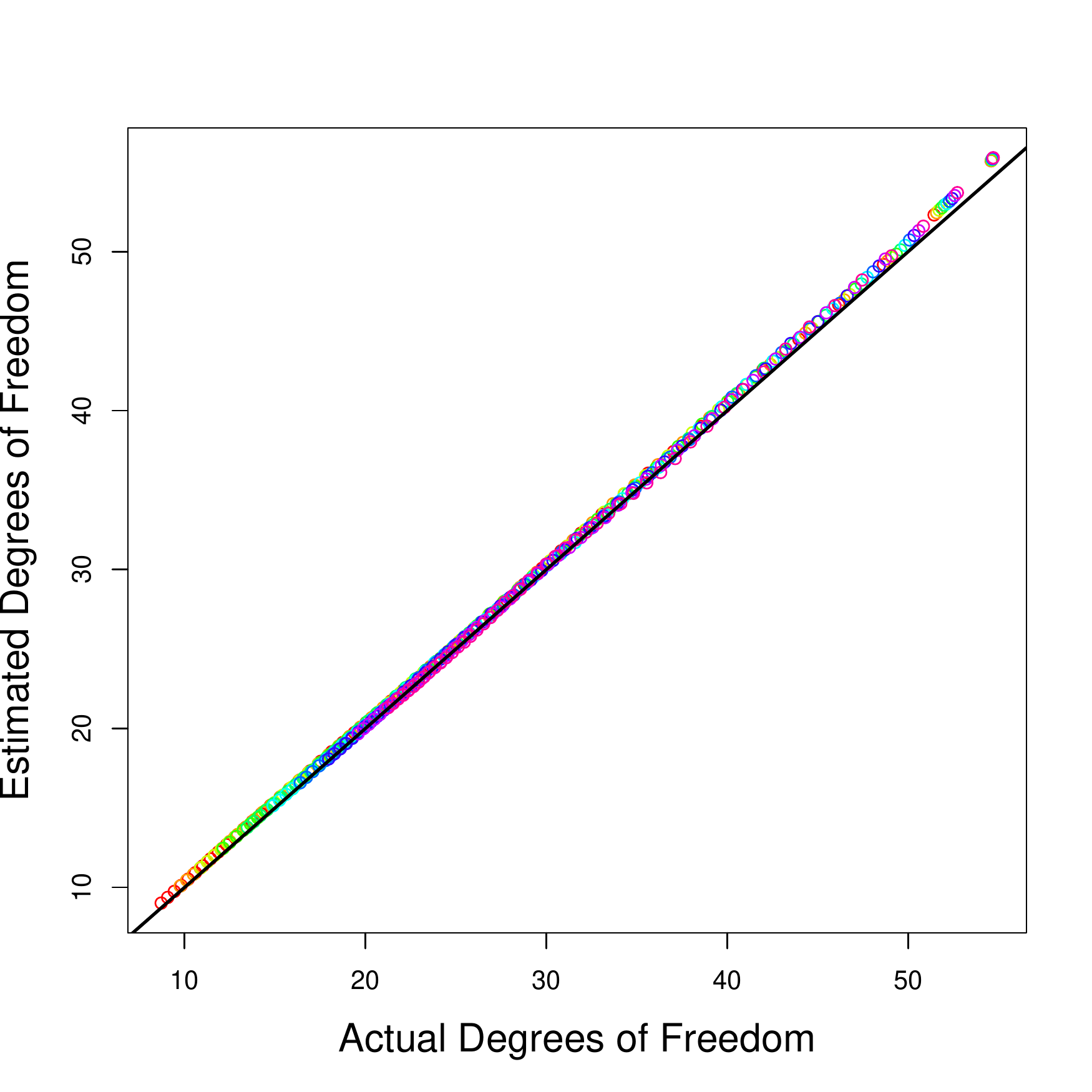}
\includegraphics[scale = 0.35]{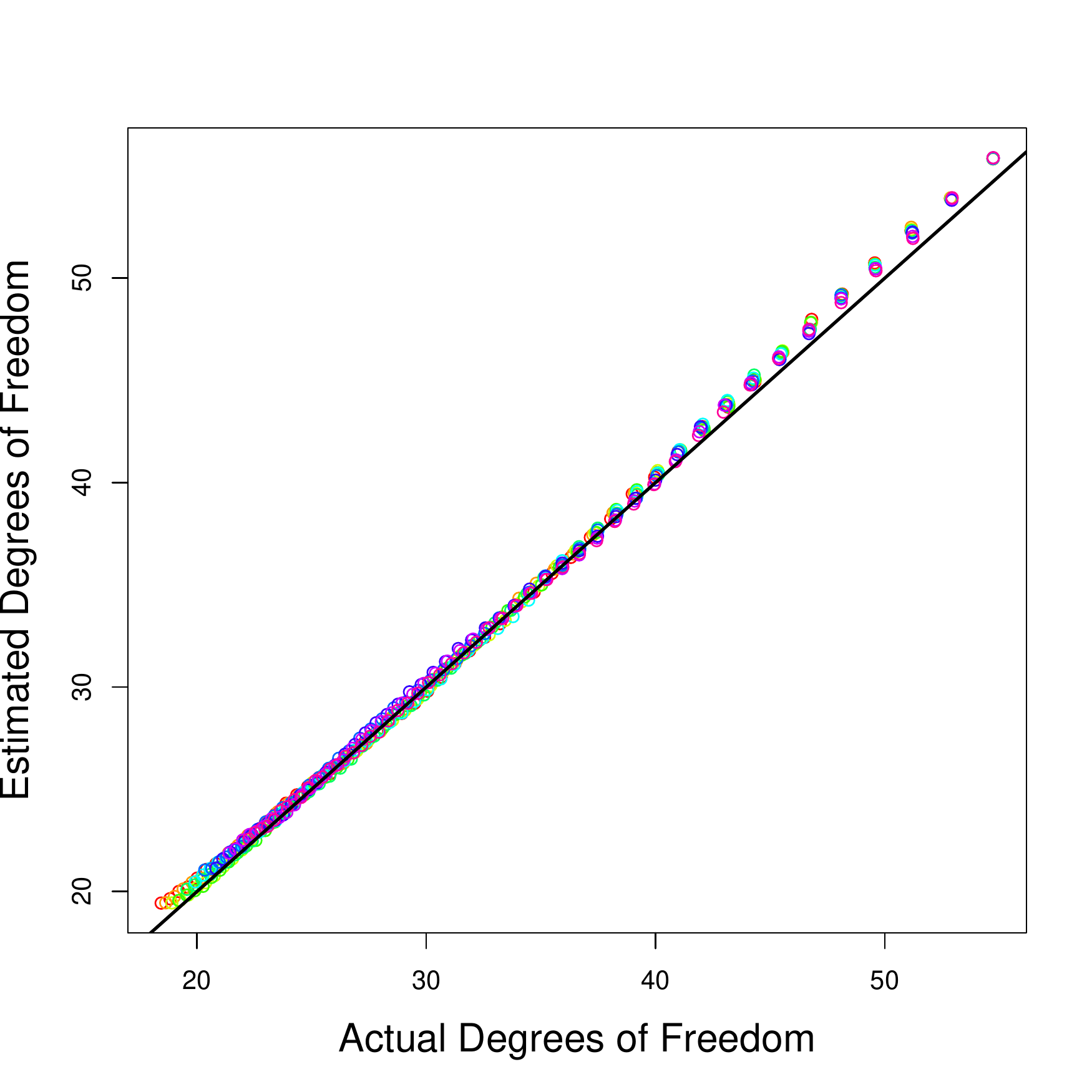}
\caption{The estimated degrees of freedom as a function of the actual degrees of freedom, for \emph{(Left:)} \familylt\  and \emph{(Right:)} \familyli . To estimate the degrees of freedom for \familyli, we  used $q = 500$ in \eqref{eqn:df-est-lq}.  Several values of  $\alpha$ in were used in the \family\ optimization problem (using the reparametrization in \eqref{eqn:obj-reparametrized}); each is shown in a different color. Each point corresponds to a different value of $\lambda$ in the \family\ optimization problem.} 
\label{fig:dof}
\end{figure}

\section{Extension to Weak Heredity} \label{sec:weak}

We now consider a modification to the \family\ optimization problem, \eqref{obj},  that imposes weak heredity.  
  We assume that the main effects, interactions, and response have been centered to have mean zero.
 
In order to enforce weak heredity, we take an approach motivated by the latent overlap group lasso of \citet{jacob2009group}. We let $W^X$ denote the $n \times p_1 \times (p_2+1)$ array defined as follows: for $i\in \{1,\ldots,n\},\  j\in \{1,\ldots, p_1\},\  k\in \{0,\ldots,p_2\}, $ 
\begin{equation}
W^X_{i,j,k} = 
\begin{cases}
X_{i,j}Z_{i,k} & \mbox{ for }  k\not=0\\
X_{i,j} & \mbox{ for }  k=0\\
\end{cases}. 
\label{eqn:Wx-mat}
\end{equation}
 We let $W^Z$ denote the $n \times (p_1+1) \times p_2$ array defined in an analogous way.  
  We take $B^X$ to be a $p_1\times (p_2+1)$ matrix, and   $B^Z$ to be a $(p_1+1) \times p_2$ matrix.

We propose to solve the optimization problem 
\begin{equation}
\begin{split}
\underset{ \parbox{1.4in}{ $B^X\in \mathbb{R}^{p_1\times (p_2+1)}\\ B^{Z} \in \mathbb{R}^{(p_1+1)\times p_2} $ }  }{\text{minimize}} 
\  &\frac{1}{2n} \left\|y - W^X*B^X- W^Z*B^Z\right\|_2^2\\
&+ \lambda_1 \sum_{j=1}^{p_1} P_r (B^X_{j,.} ) + \lambda_2 \sum_{k=1}^{p_2} P_c (B^Z_{.,k} )
+ \lambda_3(\|B^X_{.,-0}\|_1+ \|B^Z_{-0,.}\|_1).
\end{split}
\label{eqn:weak}
\end{equation} 
 Then the coefficient for the $j^{th}$ main effect of $X$ is ${B}^X_{j,0}$, the coefficient for the $k^{th}$ main effect of $Z$ is ${B}^Z_{0,k}$, and the coefficient for the $(j,k)$ interaction is ${B}^X_{j,k}+{B}^Z_{j,k}$. If we take $P_r$ and $P_c$ to be either $\ell_2$, $\ell_\infty$, or hybrid $\ell_1$/$\ell_\infty$ penalties, then  \eqref{eqn:weak} imposes weak heredity:  if the $k$th column of $B_Z$ has a zero estimate, then the $(j,k)^{th}$ interaction coefficient estimate need not be zero. However, if the $j^{th}$ row of $B_X$ and the $k^{th}$ column of $B_Z$ have zero estimates, then the $(j,k)^{th}$ interaction coefficient estimate  is zero. 

Problem \eqref{eqn:weak} can be solved using an ADMM algorithm similar to that of Section~\ref{sec:alg}. 
 Since the focus of this paper is on enforcing strong heredity, we leave the details of an algorithm for \eqref{eqn:weak}, as well as a careful numerical study, to future work.

\section{Simulation Study} \label{sec:SimStudy}

We compare the performance of \familylt \ and \familyli \ to the all-pairs lasso (\APL), the \texttt{hierNet} proposal of \citet{bien2013lasso}, and the \texttt{glinternet} proposal of \citet{lim2013learning}. \APL \ can be performed using the \verb=glmnet= \verb=R= package, and  \hiernet \ and \glinternet \ are implemented in \verb|R| packages available on \verb|CRAN|. 
  We also include the oracle model \citep{fan2001variable} --- an unpenalized model that uses only the main effects and interactions that are non-zero in the true model --- in our comparisons. 

The forward selection proposal of \citet{hao2014interaction}, \iFORM, is a fast screening approach for detecting interactions in  ultra-high dimensional data. \iFORM\ is intended for the setting in which the true model is extremely sparse. In our simulation setting, we consider moderately sparse models, which fails to highlight the advantages of \iFORM. Thus, we do not include results for  \iFORM\ in our simulation study.

To facilitate comparison with \hiernet\ and \glinternet, which require $X=Z$, we take $X=Z$ in our simulation study. Similar empirical results are obtained in simulations with $X \neq Z$; results are omitted due to space constraints.

We consider squared error loss in Section~\ref{sec:squared}, and logistic regression loss in Section~\ref{sec:SimStudyGlm}. 
\subsection{Squared Error Loss} \label{sec:squared}

\subsubsection{Simulation Set-up}
\label{sec:datagen}

We created a coefficient matrix $B$, with $p=30$ main effects and ${p \choose 2}=435$ interactions, for a total of $465$ features. The first 10 main effects have non-zero coefficients, assigned uniformly from the set $\{-5,-4,\ldots,-1,1, \ldots,5\}$. The remaining main effects' coefficients equal zero. We consider three simulation settings, in which  we randomly select 15, 30 or 45 non-zero interaction coefficients, chosen to obey  strong heredity.  The values for the non-zero coefficients were selected uniformly from the set $\{ -10, -8,\ldots, -2, 2,\ldots, 8, 10 \}$. Figure \ref{simB} displays $B$ in each of the three simulation settings.

We generated a training set, a test set, and a validation set, each consisting of 300 observations.  Each observation of $X=Z$ was generated independently from a $\mathcal{N}_p(0,I)$ distribution;  $W$ was then constructed according to \eqref{wmat}.  For each observation we generated an  independent Gaussian noise term, with variance adjusted to maintain a signal-to-noise ratio of approximately 2.5 to 3.5.  Finally, for each observation,  a response was generated according to \eqref{arraynot}.

We applied \texttt{glinternet} and \texttt{hierNet} for 50 different values of the tuning parameters. For convenience, given that $X=Z$, we reparametrized the \family \ optimization problem \eqref{obj} as 
\begin{equation}
\begin{split}
\underset{B\in \mathbb{R}^{(p+1)\times (p+1)}}{\text{minimize}}\  \frac{1}{2n} \|y-W*B\|_2^2 &+  (1-\alpha)\lambda \sqrt{p} \sum_{j=1}^{p} P_r( B_{j,.}) + (1-\alpha)\lambda \sqrt{p} \sum_{k=1}^{p}  P_c (B_{.,k})\\
 &+ \alpha\lambda \|B_{-0,-0}\|_1.
\end{split}
\label{eqn:obj-reparametrized}
\end{equation}
We applied \familylt \ and \familyli \ over a $10\times 50$ grid of $(\alpha, \lambda)$ values, with $\alpha \in (0,1)$ and $\lambda$ chosen to give a suitable range of sparsity. 

\begin{figure}
\centering
\includegraphics[scale = 0.25]{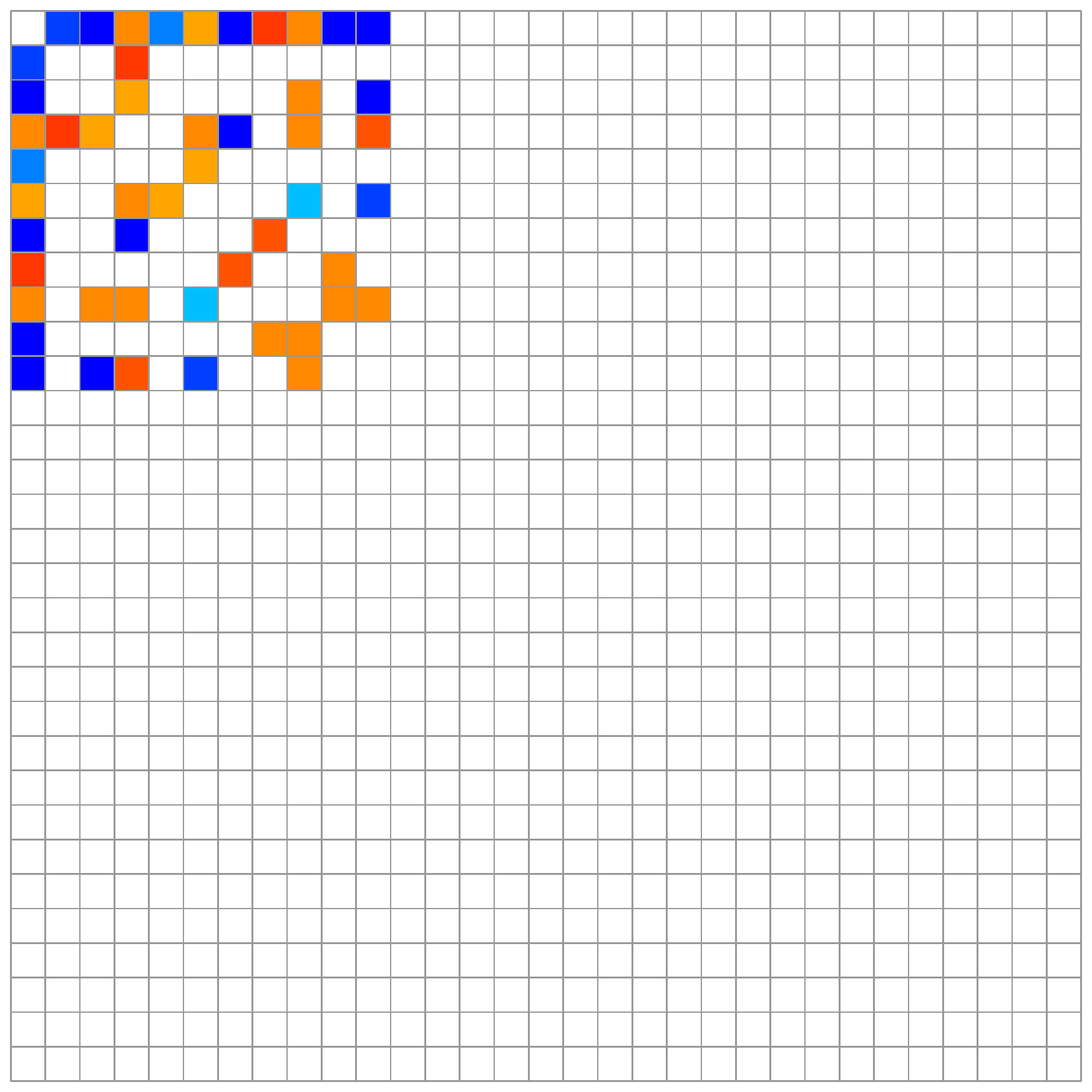}
\includegraphics[scale = 0.25]{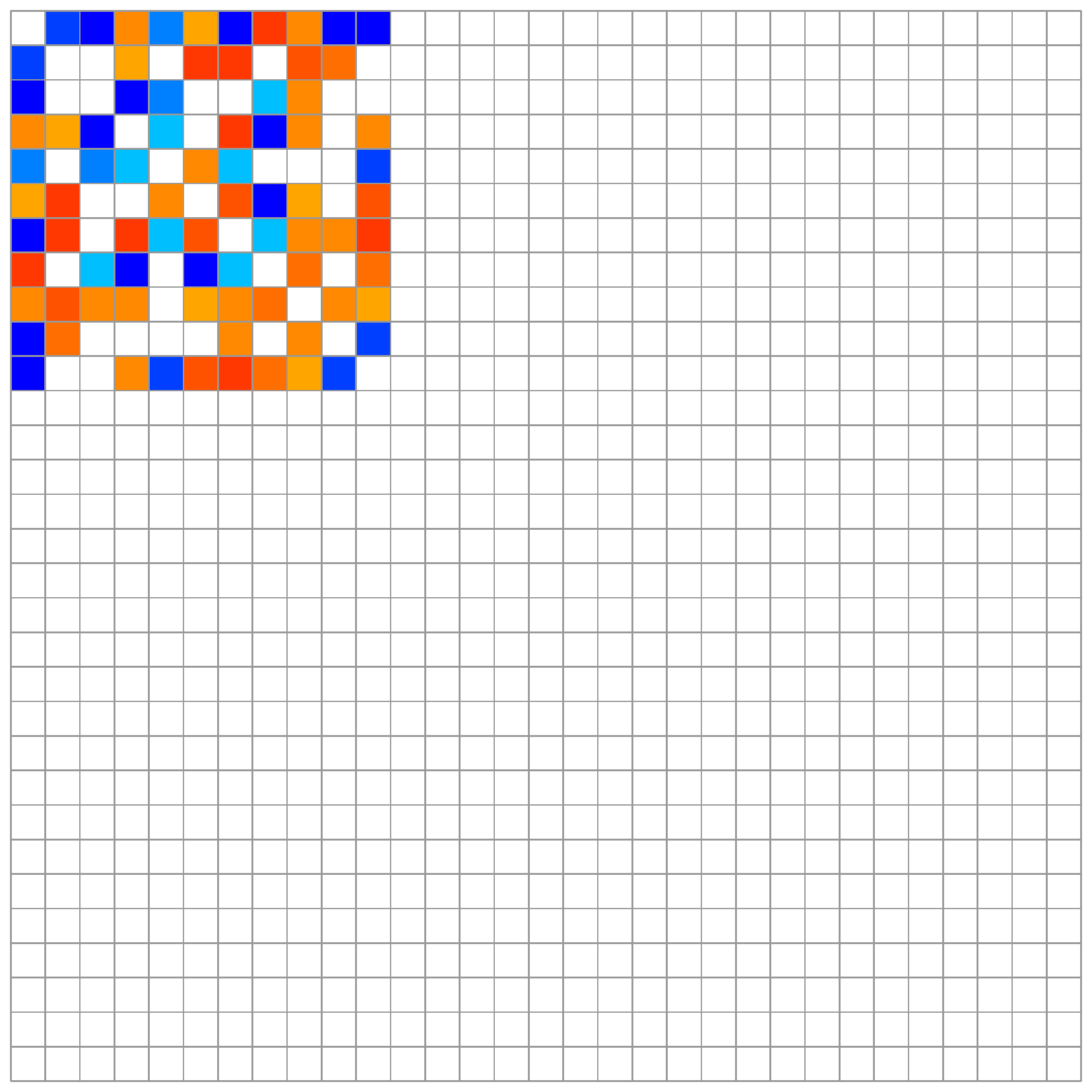}
\includegraphics[scale = 0.25]{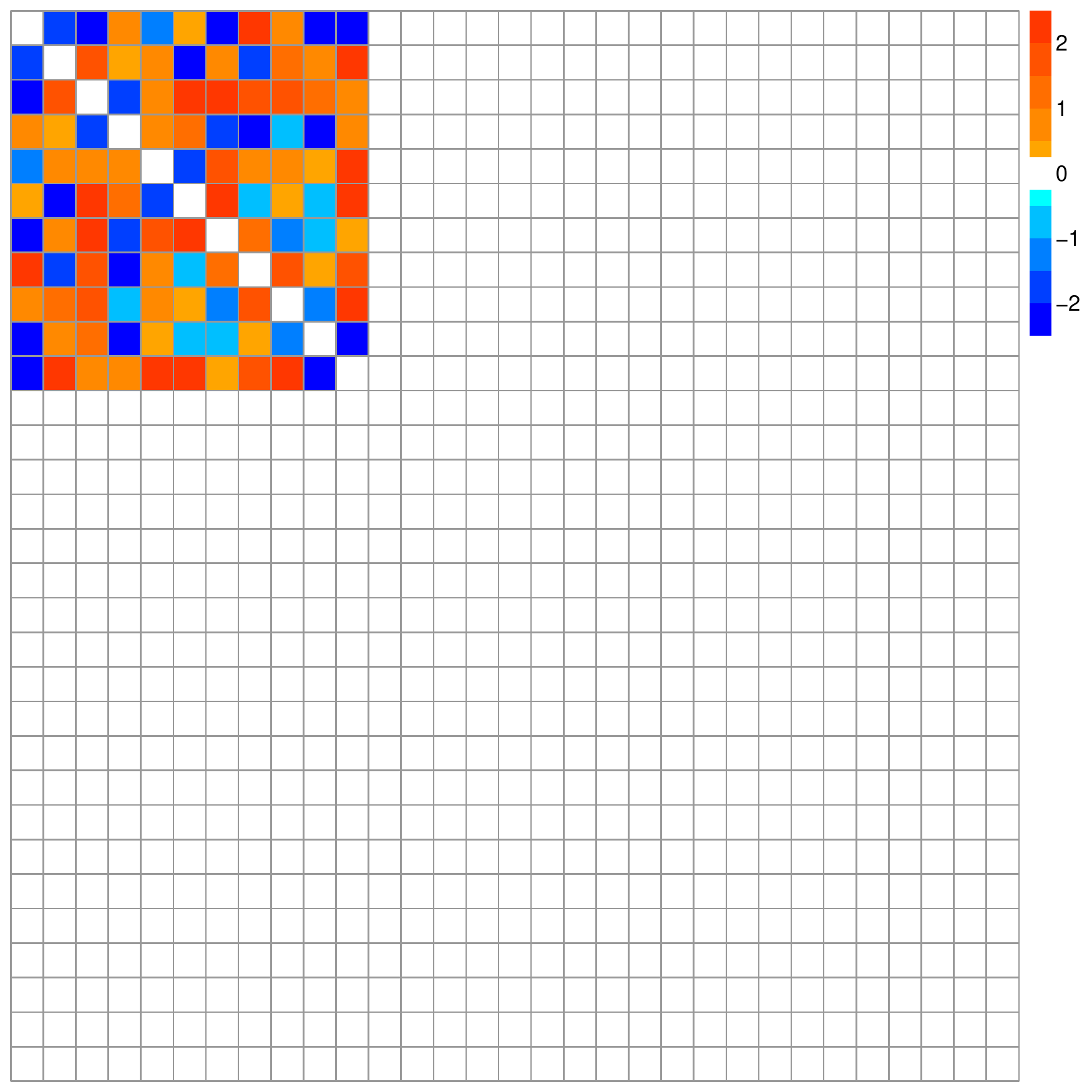}
\caption{ For the simulation study in Section~\ref{sec:SimStudy}, the heatmap of the matrix $B$ is displayed in the case of 15 (\emph{left}), 30 (\emph{center}), and 45 (\emph{right}) non-zero interactions. The first row and column of each heatmap represent the main effects. }
\label{simB}
\end{figure}

In principle, many methods are available for selecting the tuning parameters $\alpha$ and $\lambda$. These include Bayesian information criterion,  generalized cross-validation, and others.  Because we do not have an estimator for the degrees of freedom of the \glinternet\ estimator, we opted to use a training/test/validation set approach. 
In greater detail, we fit each method to the training set, selected tuning parameters based on sum of squared residuals (SSR) on the test set, and then reported the SSR for that choice of tuning parameters on the validation set.

It is well-known that penalized regression techniques tend to yield models with over-shrunken coefficient estimates  \citep{hastie2009elements,fan2001variable}. To overcome this problem, we obtained \emph{relaxed} versions of \familylt, \familyli, \hiernet, and \glinternet, by refitting an unpenalized least squares model to the set of coefficients that are non-zero in the penalized fitted model \citep{meinshausen2007relaxed,radchenko2010variable}.

We also considered generating the observations of $X$ from a $\mathcal{N}_p(0,\Sigma)$ distribution, where $\Sigma$ was  an autoregressive  or an exchangeable covariance matrix. We found that the choice of covariance matrix $\Sigma$ led to little qualitative difference in the results. Therefore, we display only results for $\Sigma=I$ in  Section~\ref{sec:Results}.
 
\subsubsection{Results}
\label{sec:Results}

The left panel of Figure \ref{fig:result} displays ROC curves for \familyli, \familylt, \hiernet, \glinternet, and \APL. 
 These results indicate that \familylt\ outperforms all other methods in terms of variable selection, especially as the number of non-zero interaction coefficients increases. When there are 45 non-zero interactions, \familyli\ outperforms \glinternet, \hiernet, and \APL.

The right panel of Figure \ref{fig:result} displays the test set SSR for all methods, as the tuning parameters are varied. We observe that relaxation leads to improvement for each method: it yields a much sparser model for a given value of the test error. This is not surprising, since the relaxation  alleviates some of the over-shrinkage induced by the application of multiple convex penalties. The results further indicate that when relaxation is applied, \familylt\ performs the best, followed by \familyli \ and then the other competitors. We once again observe that the improvement of \familylt\ and \familyli \ over the competitors increases as the number of non-zero interaction coefficients increases.

Interestingly, the right-hand panel of Figure~\ref{fig:result} indicates that though \familylt \ performs the best when relaxation is performed, it performs quite poorly when relaxation is not performed, in that the model with smallest test set SSR contains far too many non-zero interactions. This is consistent with the remark in \citet{radchenko2010variable} regarding over-shrinkage of coefficient estimates.

In Table 1, we present results on the validation set for the model that was fit on the training set using the tuning parameters selected on the test set, as described in Section~\ref{sec:datagen}. We see that \familylt\ and \familyli\ outperform the competitors in terms of SSR, false discovery rate, and true positive rate, especially when relaxation is performed.


\begin{figure}
\centering
15 Non-Zero Interactions \\ 
\includegraphics[scale = 0.33]{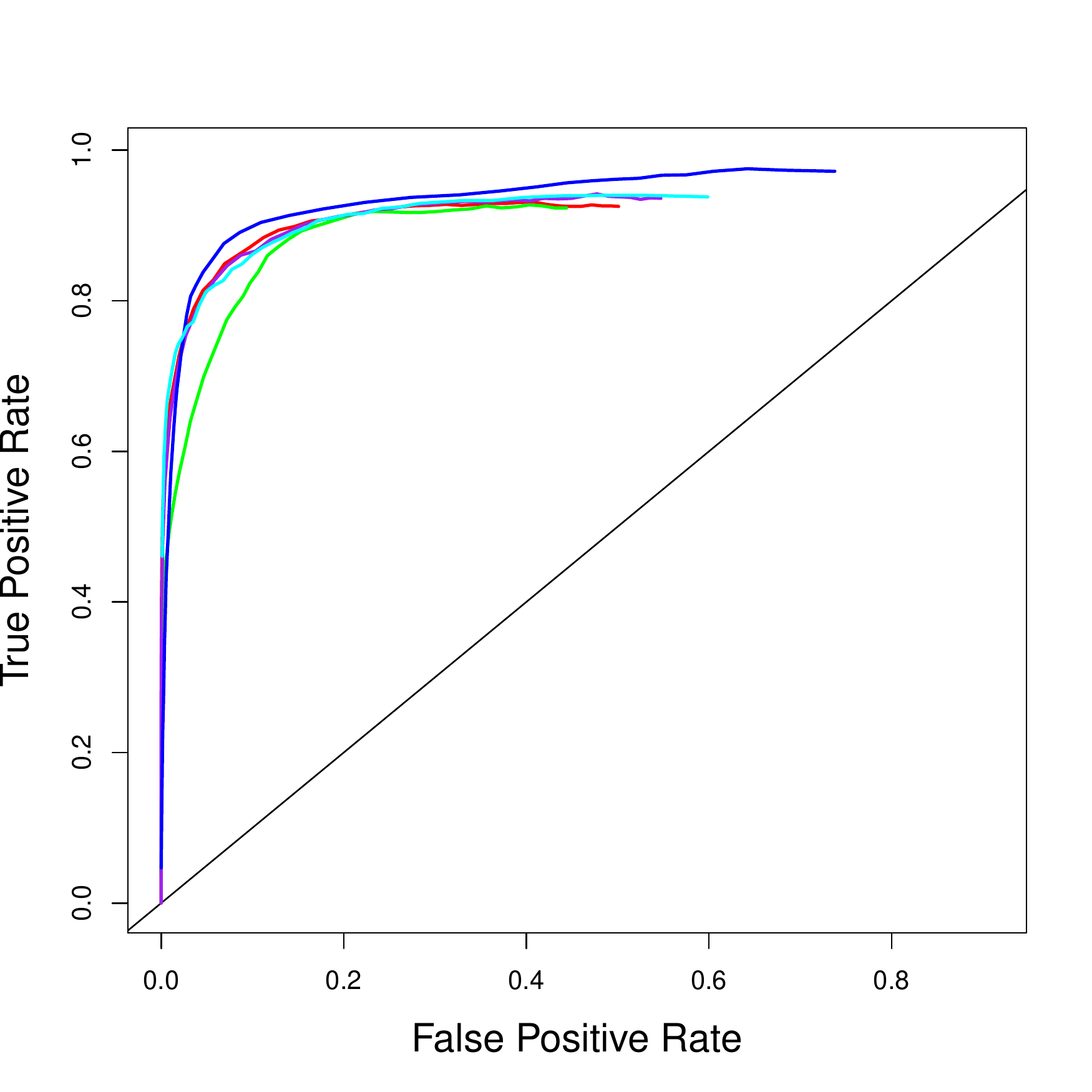}
\includegraphics[scale = 0.33]{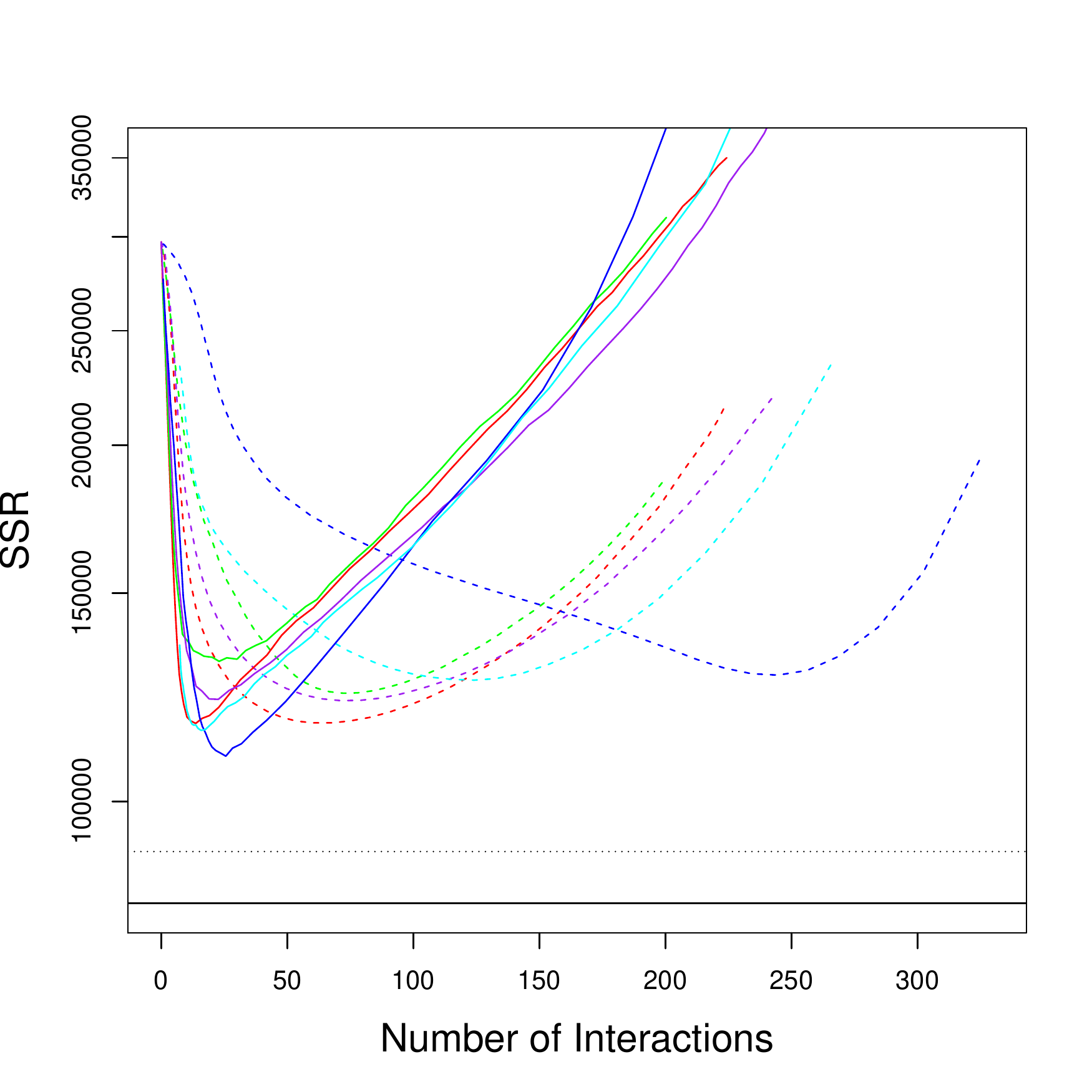} \\
30 Non-Zero Interactions \\ 
\includegraphics[scale = 0.33]{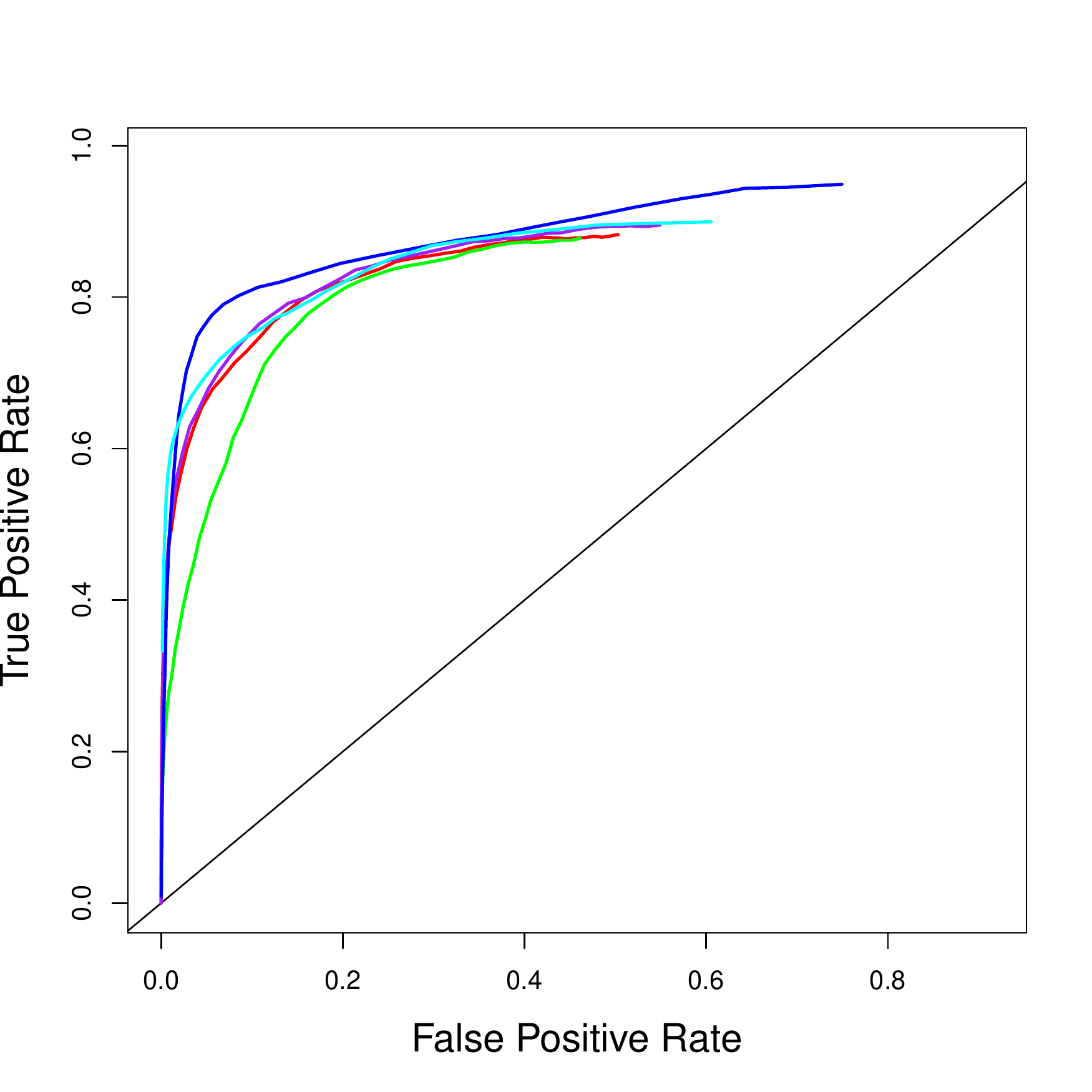} 
\includegraphics[scale = 0.33]{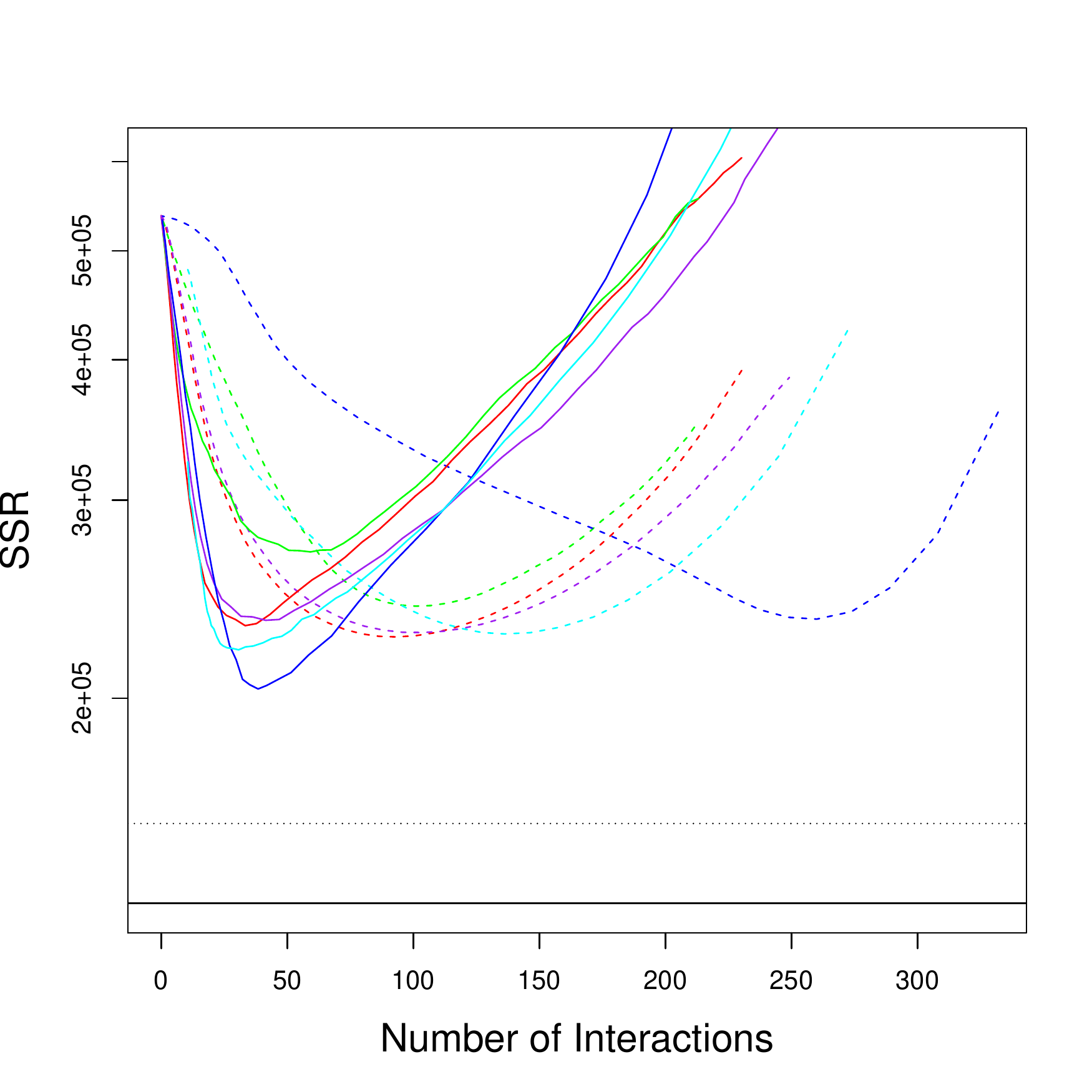} \\
45 Non-Zero Interactions \\ 
\includegraphics[scale = 0.33]{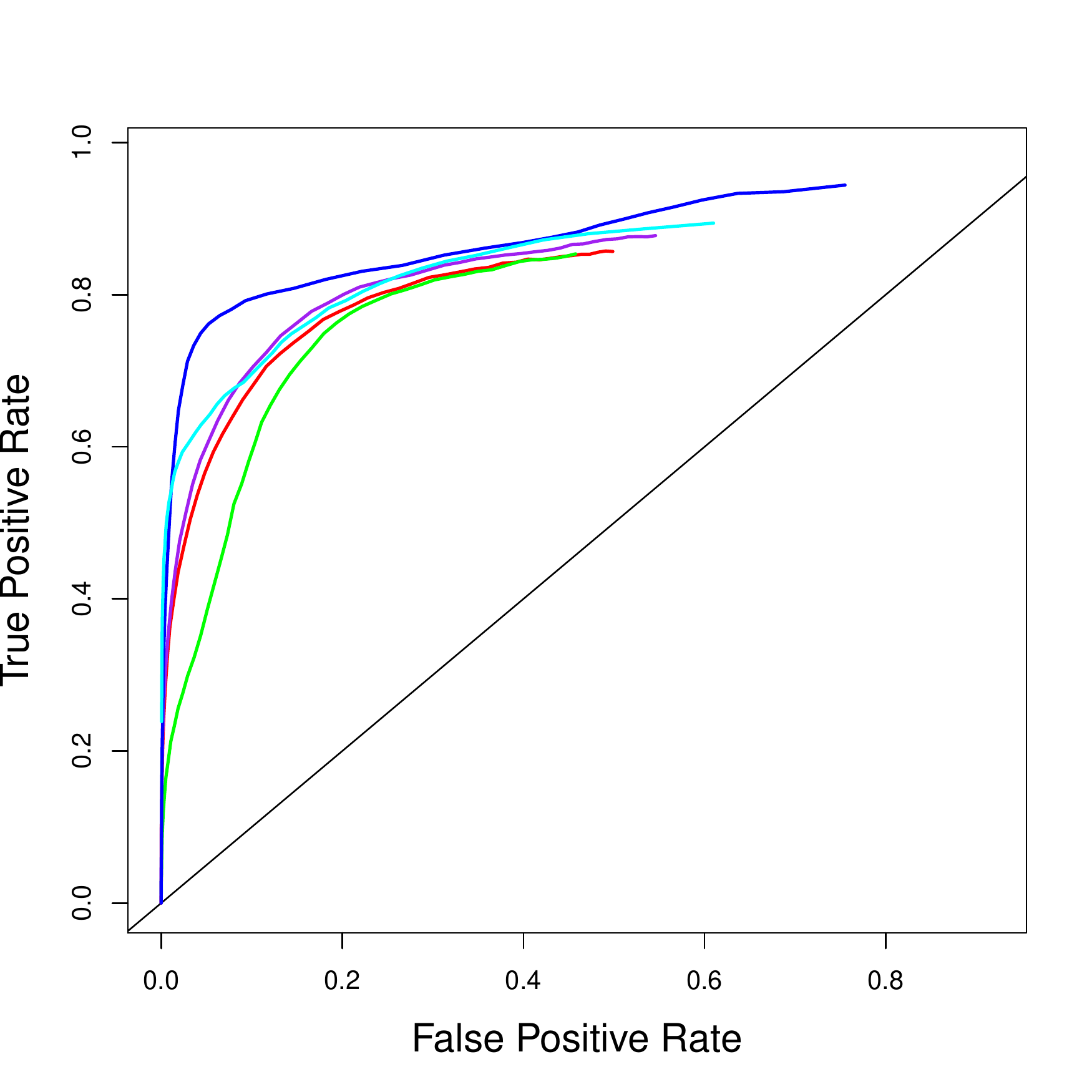} 
\includegraphics[scale = 0.33]{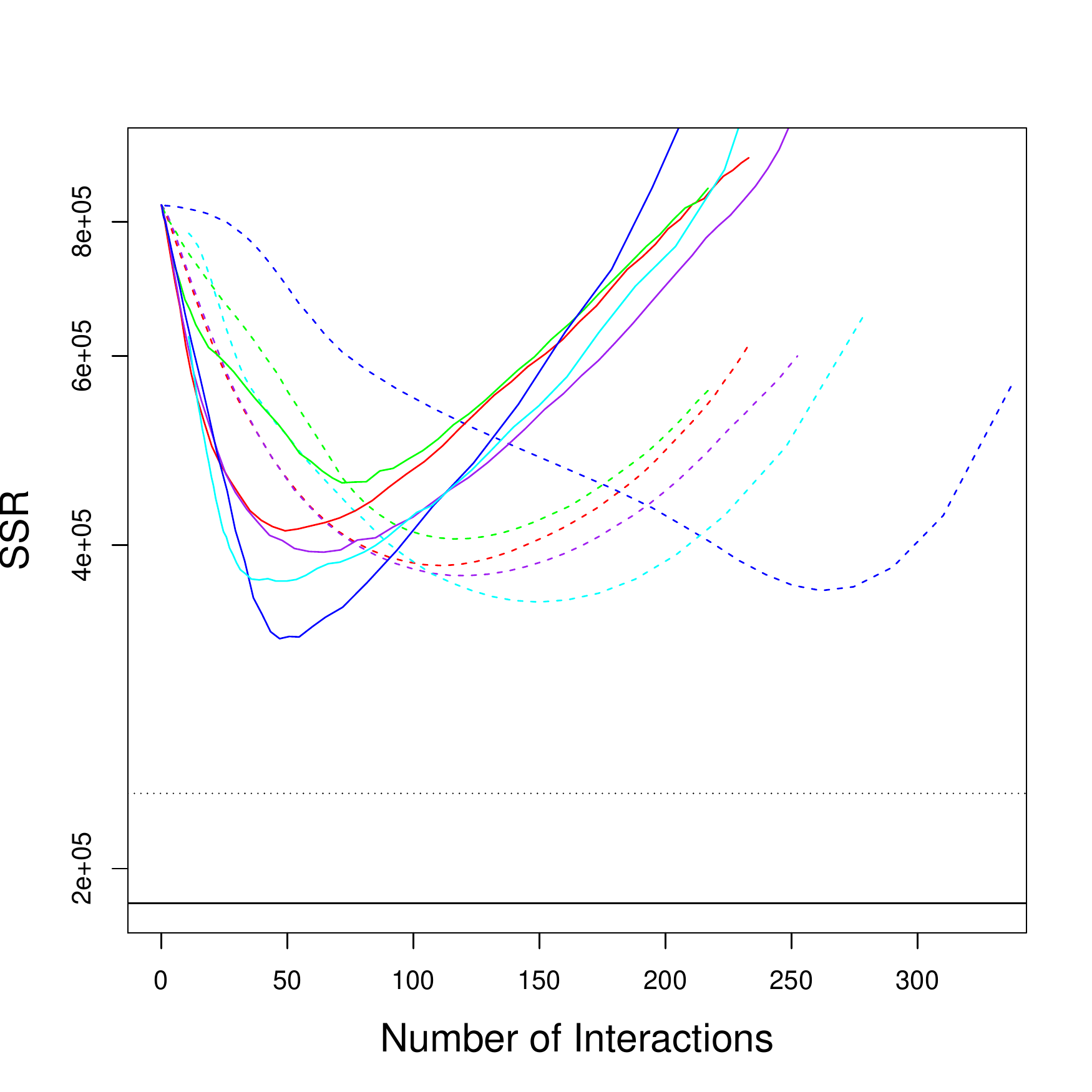} \\
\caption{ Results for the simulation study of Section~\ref{sec:squared}, averaged over 100 simulated datasets.
The colored lines indicate the results for \glinternet\ (\protect\includegraphics[height=0.5em]{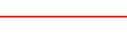}), \hiernet\ (\protect\includegraphics[height=0.5em]{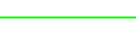}), \APL\ (\protect\includegraphics[height=0.5em]{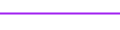}), \familylt\ with $\alpha=0.7$ (\protect\includegraphics[height=0.5em]{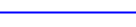}), and \familyli\ with $\alpha=0.83$ (\protect\includegraphics[height=0.5em]{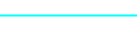}).
\emph{Left:} ROC curves for each proposal, along with the $45 ^{\circ}$ line. \emph{Right:} Sum of squared residuals (SSR), evaluated on the test set. Each method is shown with (\protect\includegraphics[height=0.5em]{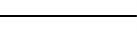}) and without (\protect\includegraphics[height=0.5em]{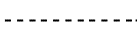}) relaxation. The two horizontal black lines indicate the test set SSR of the true model (\protect\includegraphics[height=0.5em]{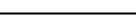}) and of the oracle model (\protect\includegraphics[height=0.5em]{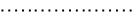}). 
}
\label{fig:result}
\end{figure}

\begin{table}[H]
\begin{center}
\resizebox{12cm}{!}{
\begin{tabular}{|c|c|c|c|c|c|r|}
\hline
\small 
 &{Method } &Relaxed  & Relative SSR&{FDR} &{TPR} &{Num.  Inter.} \\
\hline
\multirow{10}{*}{15}
& \multirow{2}{*}{\familylt} &  No & 1.333 (0.012) & 0.892 (0.002) & 0.931 (0.006) & 132.01 (2.3) \\
&           & Yes & 1.133 (0.010) & 0.399 (0.017) & 0.837 (0.009) & 22.94 (0.8) \\
\cline{2-7}
& \multirow{2}{*}{\familyli} &  No & 1.348 (0.011) & 0.855 (0.003) & 0.915 (0.006) & 97.85 (1.7) \\
&           & Yes & 1.179 (0.011) & 0.304 (0.017) & 0.771 (0.010) & 17.87 (0.6) \\
\cline{2-7}
 & \multirow{2}{*}{\glinternet} &  No & 1.288 (0.011) & 0.786 (0.004) & 0.889 (0.007) & 64.85 (1.4) \\
& 				 & Yes & 1.230 (0.010) & 0.209 (0.017) & 0.691 (0.011) & 14.23 (0.6) \\
\cline{2-7}
& \multirow{2}{*}{\hiernet} &  No & 1.359 (0.012) & 0.816 (0.003) & 0.881 (0.007) & 73.12 (1.2) \\
& 		   & Yes & 1.355 (0.013) & 0.382 (0.023) & 0.632 (0.013) & 19.76 (1.4) \\
\cline{2-7}
& \multirow{2}{*}{\APL} 	&  No & 1.341 (0.011) & 0.816 (0.004) & 0.895 (0.007) & 75.90 (1.6) \\
& 		& Yes & 1.308 (0.012) & 0.375 (0.019) & 0.749 (0.011) & 20.65 (1.0) \\
\hline

\multirow{10}{*}{30}
& \multirow{2}{*}{\familylt} &  No & 1.492 (0.016) & 0.841 (0.003) & 0.884 (0.006) & 172.00 (3.3) \\
&           & Yes & 1.218 (0.012) & 0.352 (0.014) & 0.800 (0.010) & 39.09 (1.1) \\
\cline{2-7}
& \multirow{2}{*}{\familyli} &  No & 1.476 (0.016) & 0.790 (0.004) & 0.846 (0.007) & 124.00 (2.2) \\
&           & Yes & 1.276 (0.013) & 0.310 (0.016) & 0.735 (0.008) & 34.11 (1.0) \\
\cline{2-7}
 & \multirow{2}{*}{\glinternet} &  No & 1.487 (0.015) & 0.730 (0.005) & 0.800 (0.007) & 91.75 (1.8) \\
& 				 & Yes & 1.446 (0.016) & 0.328 (0.017) & 0.627 (0.010) & 31.07 (1.3) \\  
\cline{2-7}
& \multirow{2}{*}{\hiernet} &  No & 1.567 (0.016) & 0.754 (0.003) & 0.797 (0.008) & 98.95 (1.7) \\
& 		   & Yes & 1.677 (0.019) & 0.581 (0.013) & 0.647 (0.012) & 50.90 (1.8) \\
\cline{2-7}
& \multirow{2}{*}{\APL} &  No & 1.492 (0.016) & 0.751 (0.004) & 0.821 (0.007) & 101.73 (1.8) \\
& 	  & Yes & 1.484 (0.018) & 0.411 (0.016) & 0.676 (0.010) & 37.78 (1.4)\\
\hline

\multirow{10}{*}{45}
& \multirow{2}{*}{\familylt} &  No & 1.562 (0.020) & 0.816 (0.003) & 0.889 (0.005) & 223.29 (4.0) \\
&           & Yes & 1.219 (0.016) & 0.203 (0.016) & 0.833 (0.008) & 49.09 (1.2) \\
\cline{2-7}
& \multirow{2}{*}{\familyli} &  No & 1.531 (0.019) & 0.754 (0.003) & 0.841 (0.006) & 156.59 (2.6) \\
&           & Yes & 1.324 (0.023) & 0.200 (0.019) & 0.756 (0.009) & 45.78 (1.5) \\
\cline{2-7}
 & \multirow{2}{*}{\glinternet} &  No & 1.658 (0.021) & 0.679 (0.004) & 0.776 (0.005) & 110.28 (1.4) \\
& 				 & Yes & 1.689 (0.025) & 0.415 (0.012) & 0.610 (0.009) & 50.07 (1.7) \\
\cline{2-7}
& \multirow{2}{*}{\hiernet} & No  & 1.746 (0.023) & 0.699 (0.003) & 0.772 (0.006) & 116.46 (1.5) \\
& 		   & Yes & 1.876 (0.027) & 0.585 (0.006) & 0.650 (0.008) & 72.29 (1.5) \\
\cline{2-7}
& \multirow{2}{*}{\APL} &  No & 1.616 (0.021) & 0.693 (0.004) & 0.802 (0.005) & 119.73 (1.8) \\
& 	  & Yes & 1.633 (0.023) & 0.456 (0.012) & 0.674 (0.008) & 59.40 (1.8) \\

\hline

\end{tabular}
}
\caption{Simulation results, averaged over 100 simulated datasets, for the simulation set-up in Section~\ref{sec:squared}. Tuning parameters were selected using a training/test/validation set approach, as described in Section \ref{sec:datagen}. From left to right, the table's columns indicate the true number of non-zero interactions, the method used, whether or not relaxation was performed, the sum of squared residuals (SSR) on the validation set divided by the SSR of the oracle, the false discovery rate for the detection of non-zero interactions, the true positive rate for the detection of non-zero interactions, and the number of estimated non-zero interactions. Standard errors of the mean are reported in parentheses. }
\label{tab1}
\end{center}
\end{table}



\subsection{Logistic regression}
\label{sec:SimStudyGlm}

\subsubsection{Simulation Set-up}
\label{sec:datagenGlm}
We assume that each response $y_i$ is a Bernoulli variable with probability $p_i$. We then model $p_i$ as
\begin{equation}
\log\left( \frac{p_i}{1-p_i}\right)  = (W*B)_i;  \ i = 1,\ldots , n, 
\label{modelGLM}
\end{equation} 
where $W*B$ is the $n$-vector defined in Section \ref{sec:overallModel}. 
 The matrices $X$ and $B$ are generated in the exact same manner as in Section \ref{sec:datagen}, but now with $n=500$ observations in the training and test sets.

Once again, for convenience, we reparametrized \familylt \ and \familyli \ according to
\begin{equation}
\begin{split}
\underset{B\in \mathbb{R}^{(p+1)\times (p+1)}}{\text{minimize}}  &-\frac{1}{n}\sum_{i=1}^{n}\left[ y_i(W*B)_i - \log\left( 1+ e^{(W*B)_i} \right) \right] \\
&+  \sqrt{p}(1-\alpha)\lambda \sum_{j=1}^{p}P_r( B_{j,.})
+ \sqrt{p}(1-\alpha)\lambda \sum_{k=1}^{p}P_c (B_{.,k})+ \alpha\lambda \|B_{-0,-0}\|_1.
\end{split}
\label{eqn:objglm-reparametrized}
\end{equation}

\subsubsection{Results}
\label{sec:resultsGlm}
The results for logistic regression are displayed in Figure~\ref{fig:glmresult}. The ROC curves in the left-hand panel indicate that \familyli \ and \familylt \ outperform the competitors in terms of variable selection when there are 30 or 45 non-zero interactions. The SSR curves in the right-hand panel of Figure~\ref{fig:glmresult} indicate that the relaxed versions of \familyli\ and \familylt\ perform very well in terms of prediction error on the test  set, especially as the number of non-zero interactions increases.

\begin{figure}
\centering
15 Non-Zero Interactions \\ 
\includegraphics[scale = 0.33]{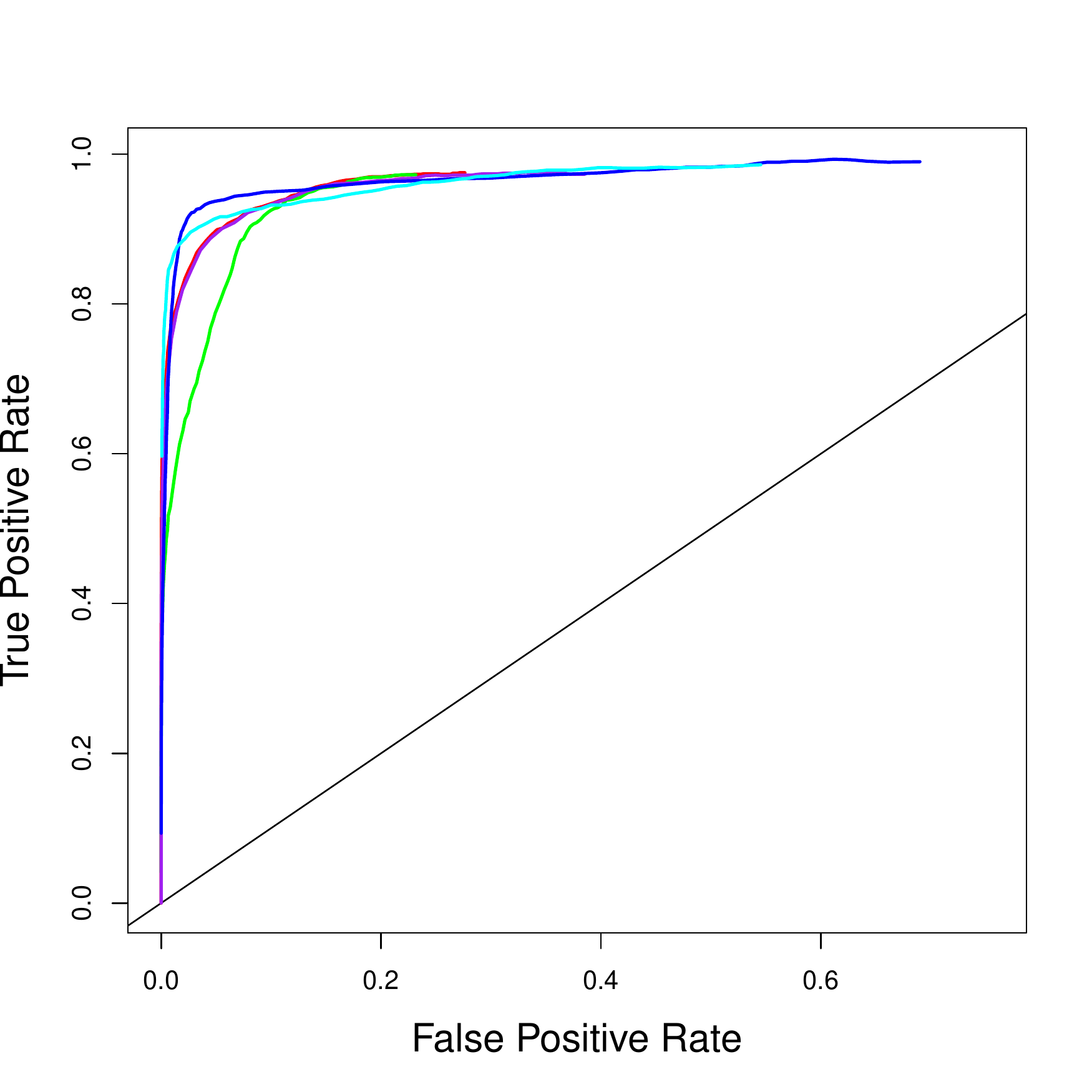} 
\includegraphics[scale = 0.33]{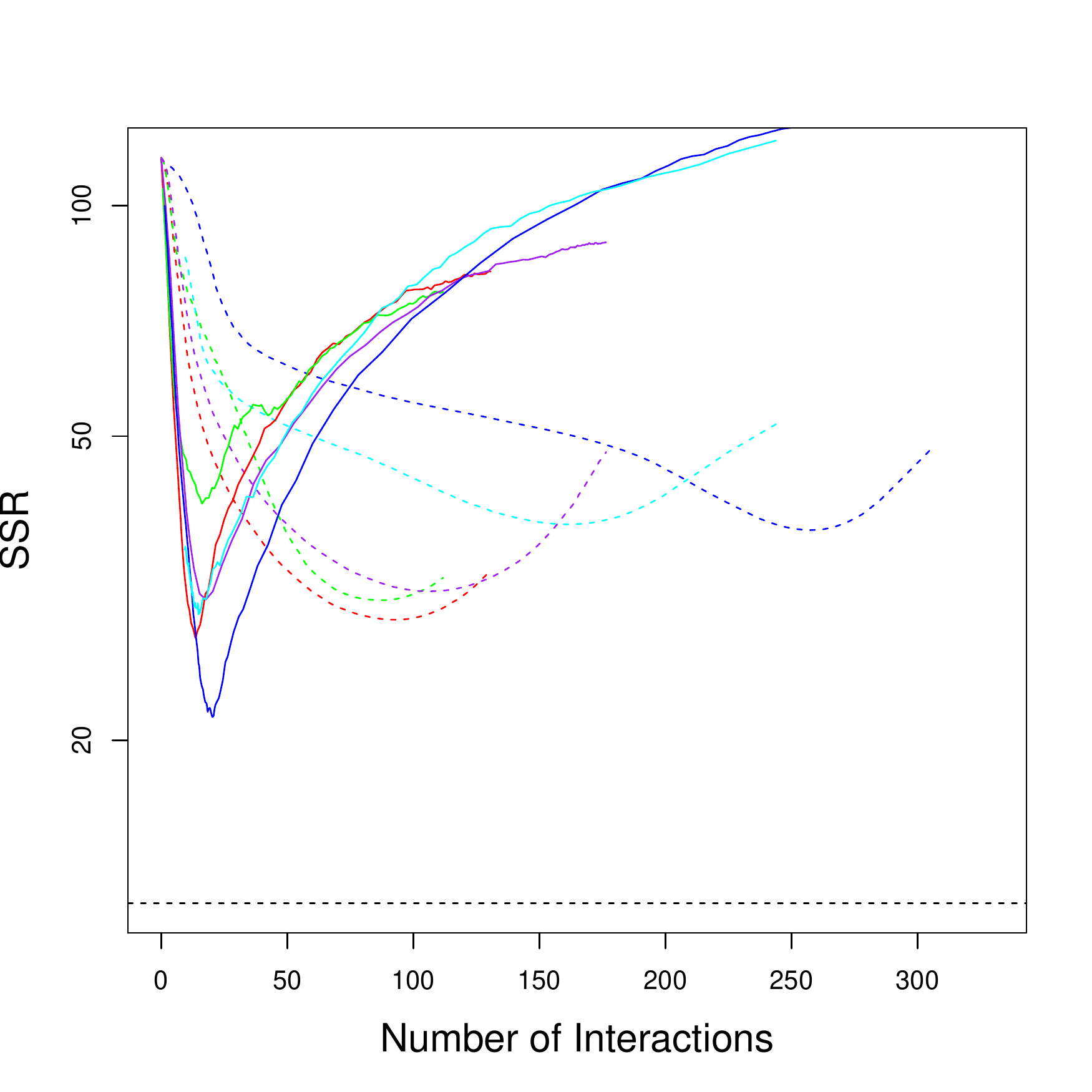} \\ 
30 Non-Zero Interactions \\ 
\includegraphics[scale = 0.33]{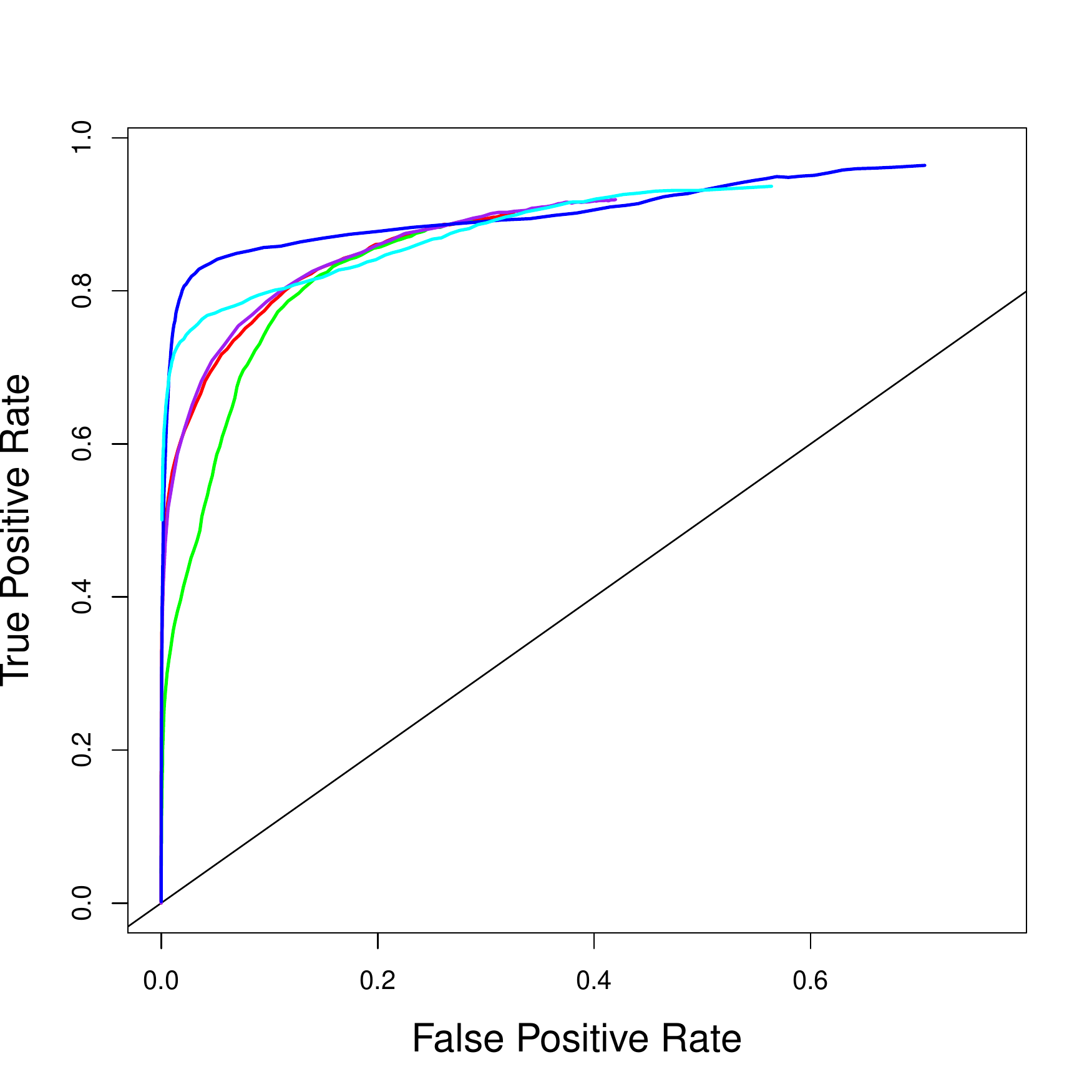} 
\includegraphics[scale = 0.33]{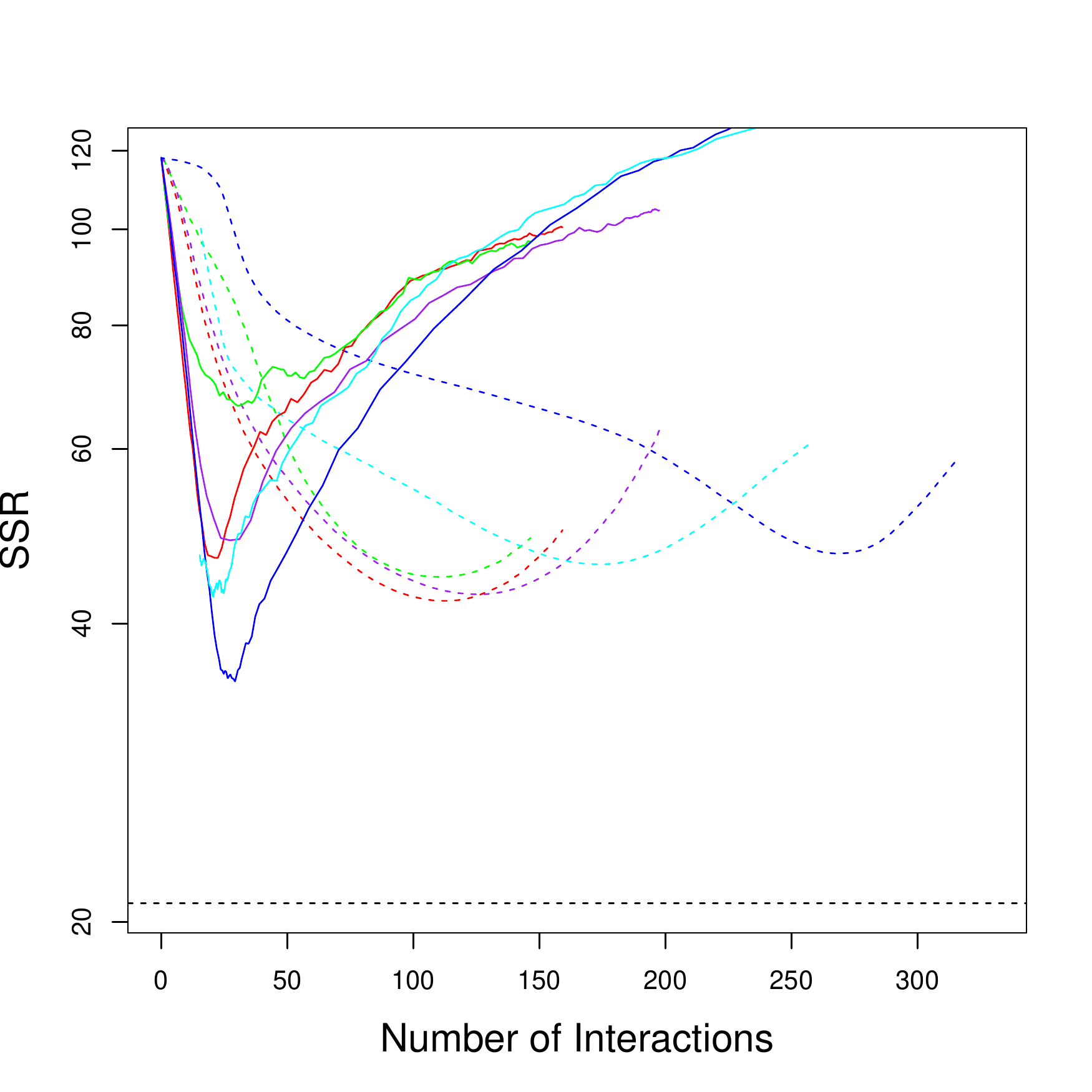} \\
45 Non-Zero Interactions \\ 
\includegraphics[scale = 0.33]{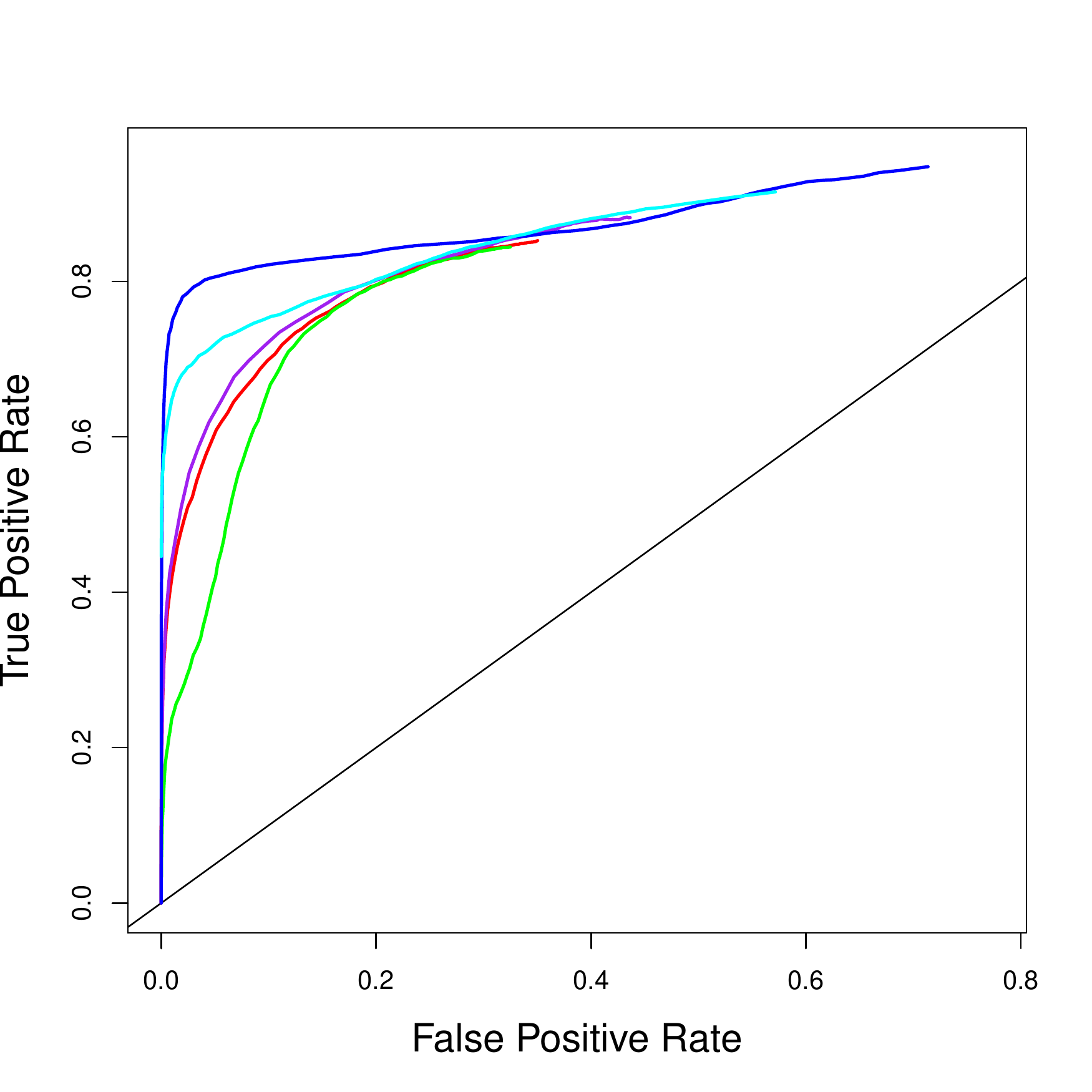} 
\includegraphics[scale = 0.33]{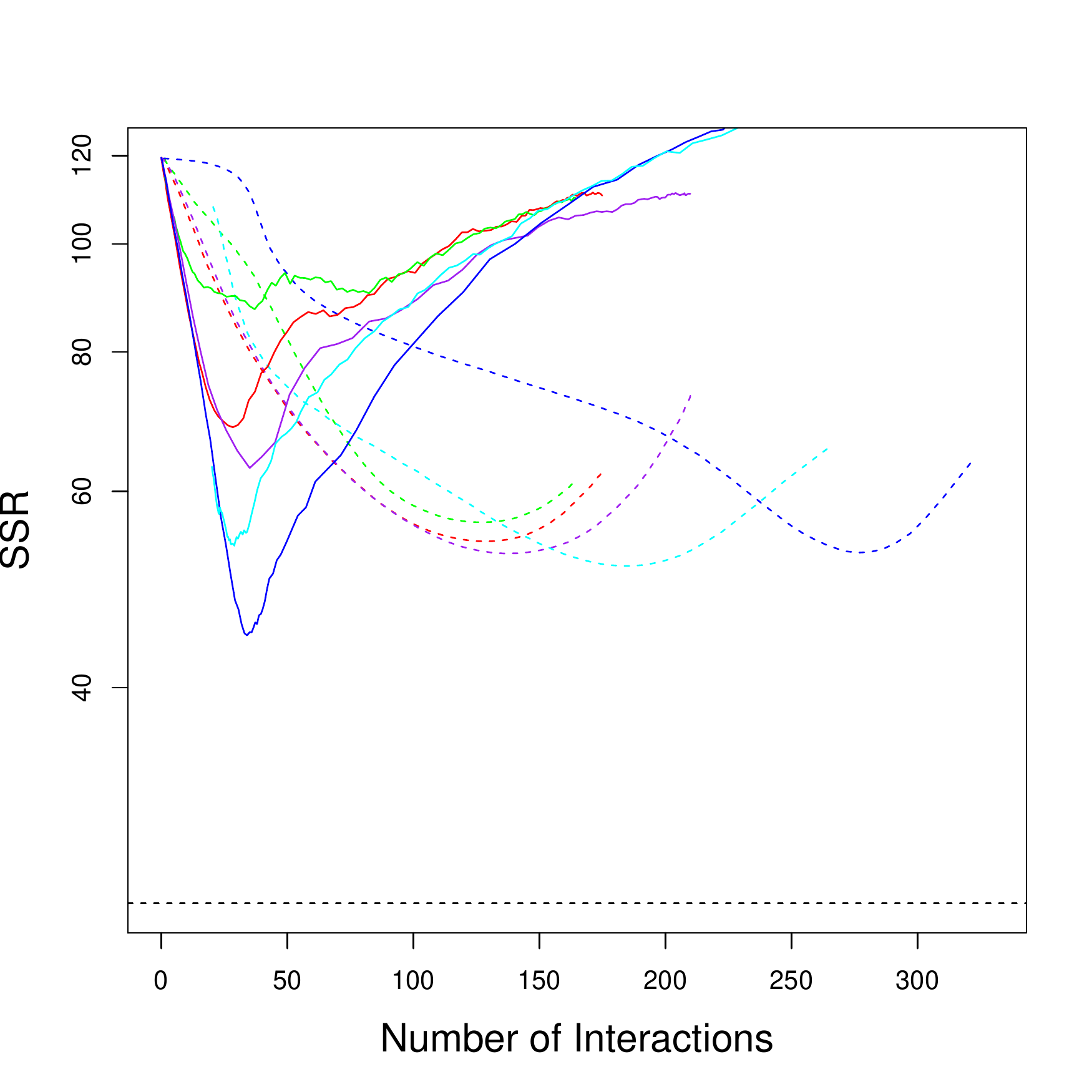} 
\caption{ Results for the simulation study of Section~\ref{sec:SimStudyGlm}, averaged over 100 simulated data sets. Details are as in Figure~\ref{fig:result}, but with $\alpha=0.8$ for \familyli\ (\protect\includegraphics[height=0.5em]{res-fami.png}). }
\label{fig:glmresult}
\end{figure}

\section{Application to HIV Data}
\label{sec:realData}

\citet{rhee2006genotypic} study the susceptibility of the HIV-1 virus to 6  nucleoside reverse transcriptase inhibitors (NRTIs).  The HIV-1 virus can become resistant to drugs via mutations in its genome sequence. Therefore, there is a need to model HIV-1's drug susceptibility  as a function of mutation status. We consider one particular NRTI, 3TC. The data consists of a sparse binary matrix, with  mutation status at each of $217$ genomic locations for $n=1057$ HIV-1 isolates.  For each of the observations, there is a measure of susceptibility to 3TC. This data set was also studied by \cite{bien2013lasso}.

Rather than working with all $217$ genomic locations, we create bins of ten adjacent loci; this results in a design matrix with $p=22$ features and $n=1057$ observations. We perform the binning because the raw data contains mostly zeros, as most mutations occur in at most a few of the observations; by binning the observations, we obtain less sparse data. This binning is justified under the assumption that mutations in a particular region of the genome sequence result in a change to a binding site, in which case nearby mutations should have similar effects on a binding site, and hence similar associations with drug susceptibility. This binning is also needed for computational reasons, in order to allow for comparison to \hiernet\ (specifically the version that enforces strong heredity) using the \verb=R= package of \citet{bien2013lasso}.~(In \citet{bien2013lasso}, all $217$ genomic locations are analyzed using a much faster algorithm that enforces \emph{weak} (rather than strong) heredity.)

We split the observations into equally-sized training and test sets. We fit \glinternet, \hiernet, \familylt, and \familyli\ on 
the training set for a range of tuning parameter values, and applied the fitted models to the test set.
In Figure \ref{fig:dataSSR}, the test set SSR is displayed as a function  of the number of non-zero estimated interaction coefficients,  averaged over 50 splits of the data into training and test sets. 
The figure reveals that all four methods give roughly similar results.

\begin{figure}[H]
\centering
\includegraphics[scale = 0.4]{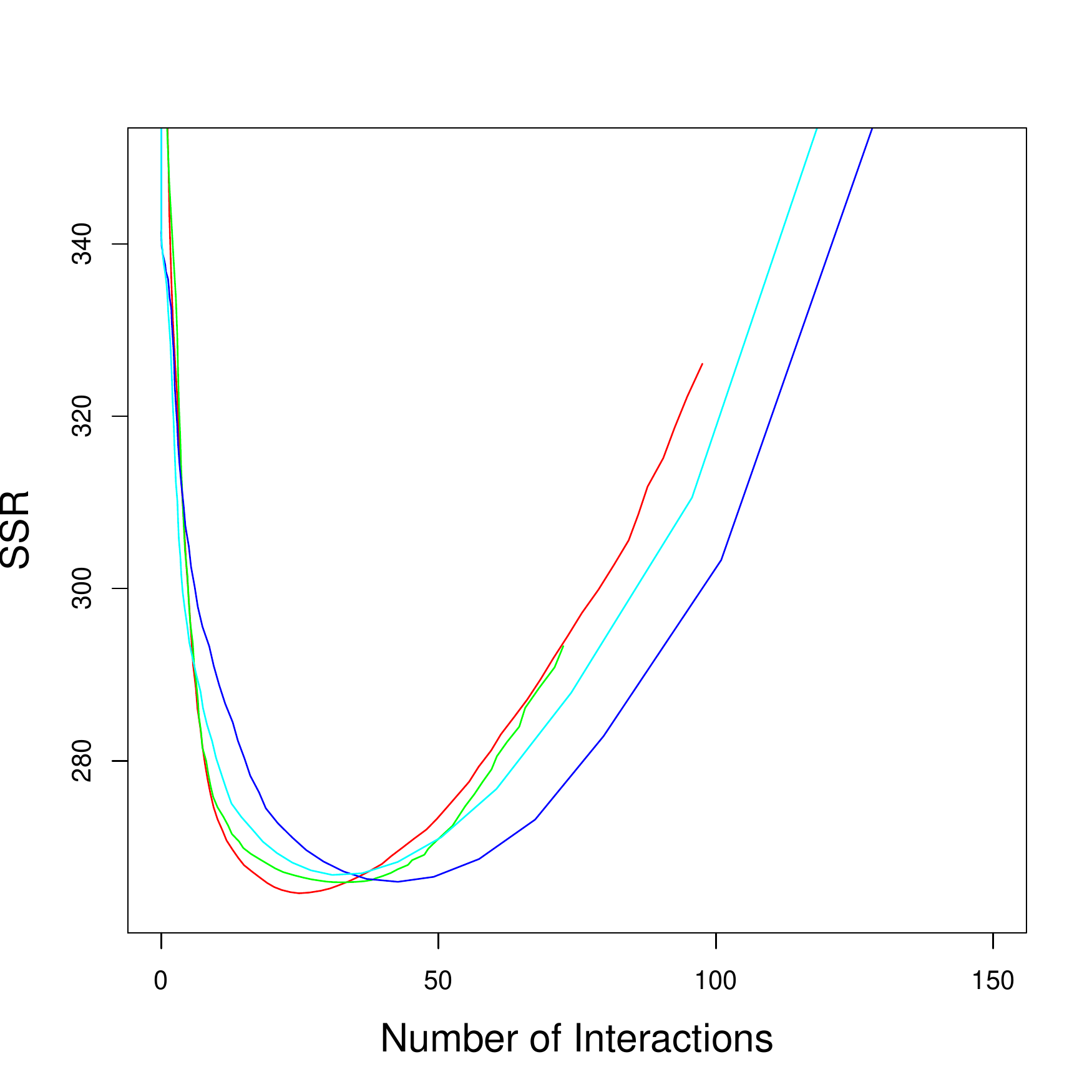}   
\caption{ The test set SSR is displayed for the HIV-1 data of Section~\ref{sec:realData}, as a function of the number of non-zero interaction terms. Results are averaged over 50 splits of the observations into a training set and a test set. The colored lines indicate the results for \glinternet\ (\protect\includegraphics[height=0.5em]{res-glin.png}) , \hiernet\ (\protect\includegraphics[height=0.5em]{res-hier.png}), \familylt\ with $\alpha=0.944$ (\protect\includegraphics[height=0.5em]{res-faml2.png}), and \familyli\ with $\alpha=0.944$ (\protect\includegraphics[height=0.5em]{res-fami.png}).
 }
\label{fig:dataSSR}
\end{figure}

Figure~\ref{fig:data} displays the estimated coefficient matrix, $\hat{B}$, that results from applying each of the four methods to all $n=1057$ observations using the tuning parameter values that minimized the average test set SSR. The estimated coefficients are qualitatively similar for all four methods. 
All four methods detect some non-zero interactions involving the 17th feature. \texttt{Glinternet} yields the sparsest model.

\begin{figure}
\centering
(a) \hspace{60mm} (b) \\
\includegraphics[scale = 0.32]{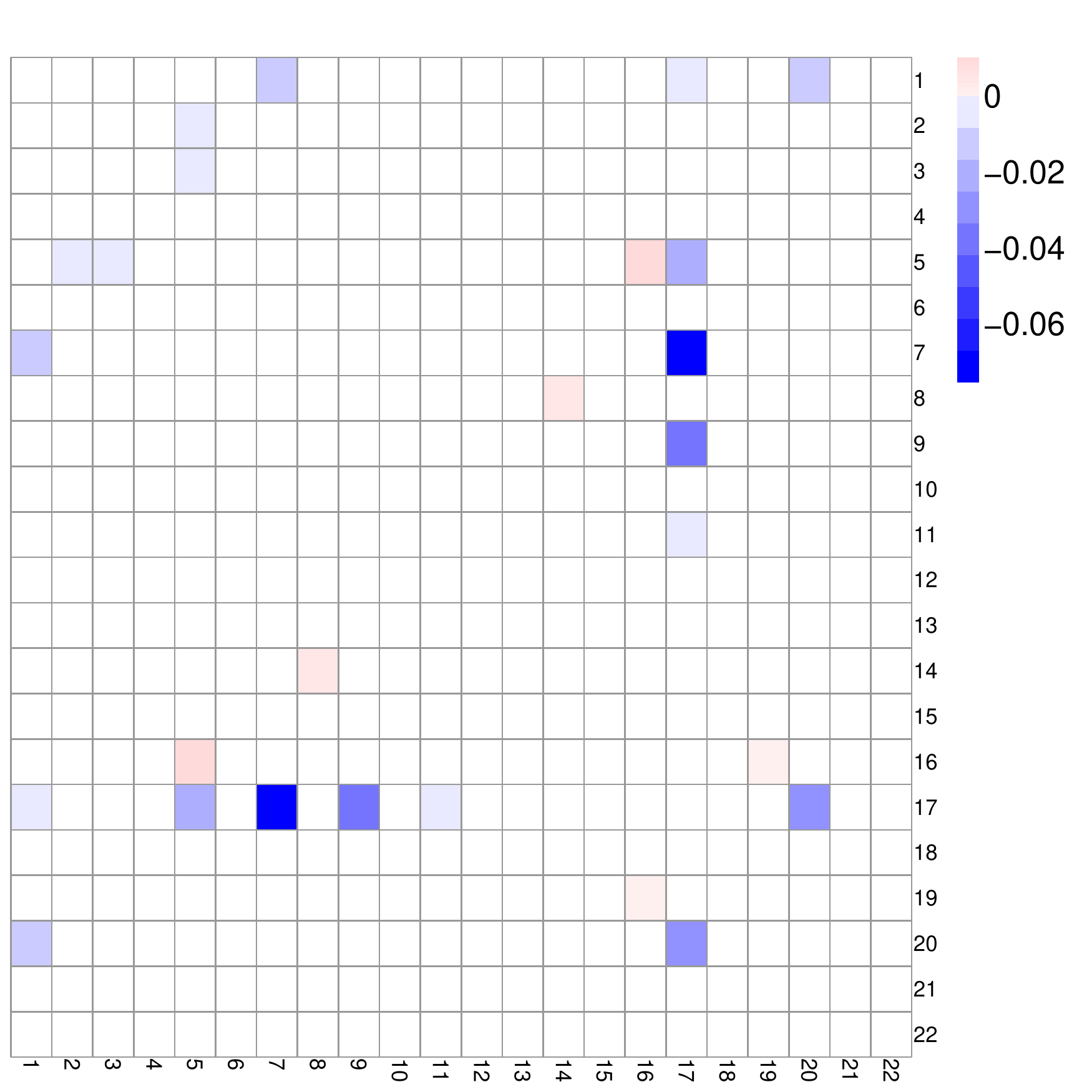}
\includegraphics[scale = 0.32]{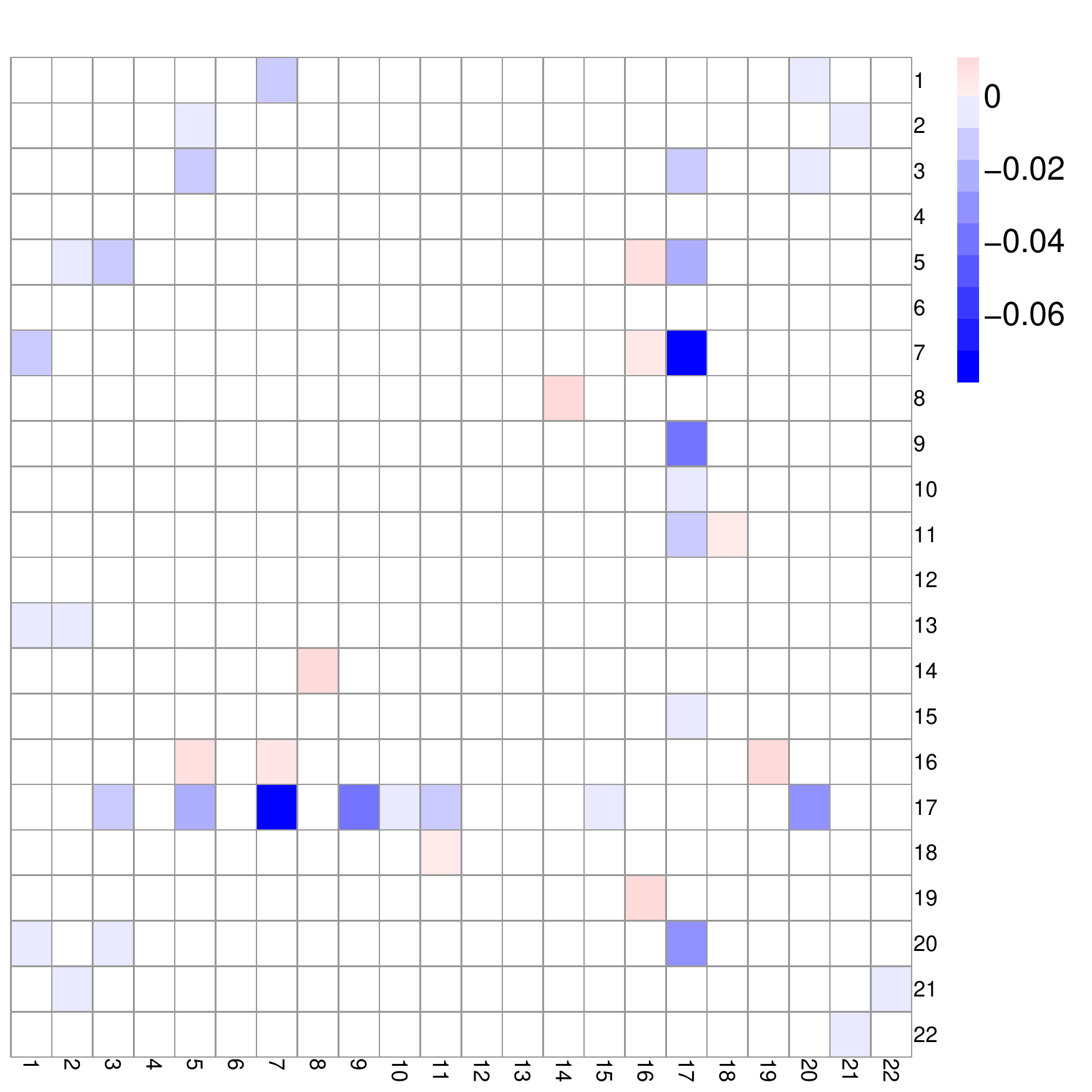}\\
(c)  \hspace{60mm} (d) \\
\includegraphics[scale = 0.32]{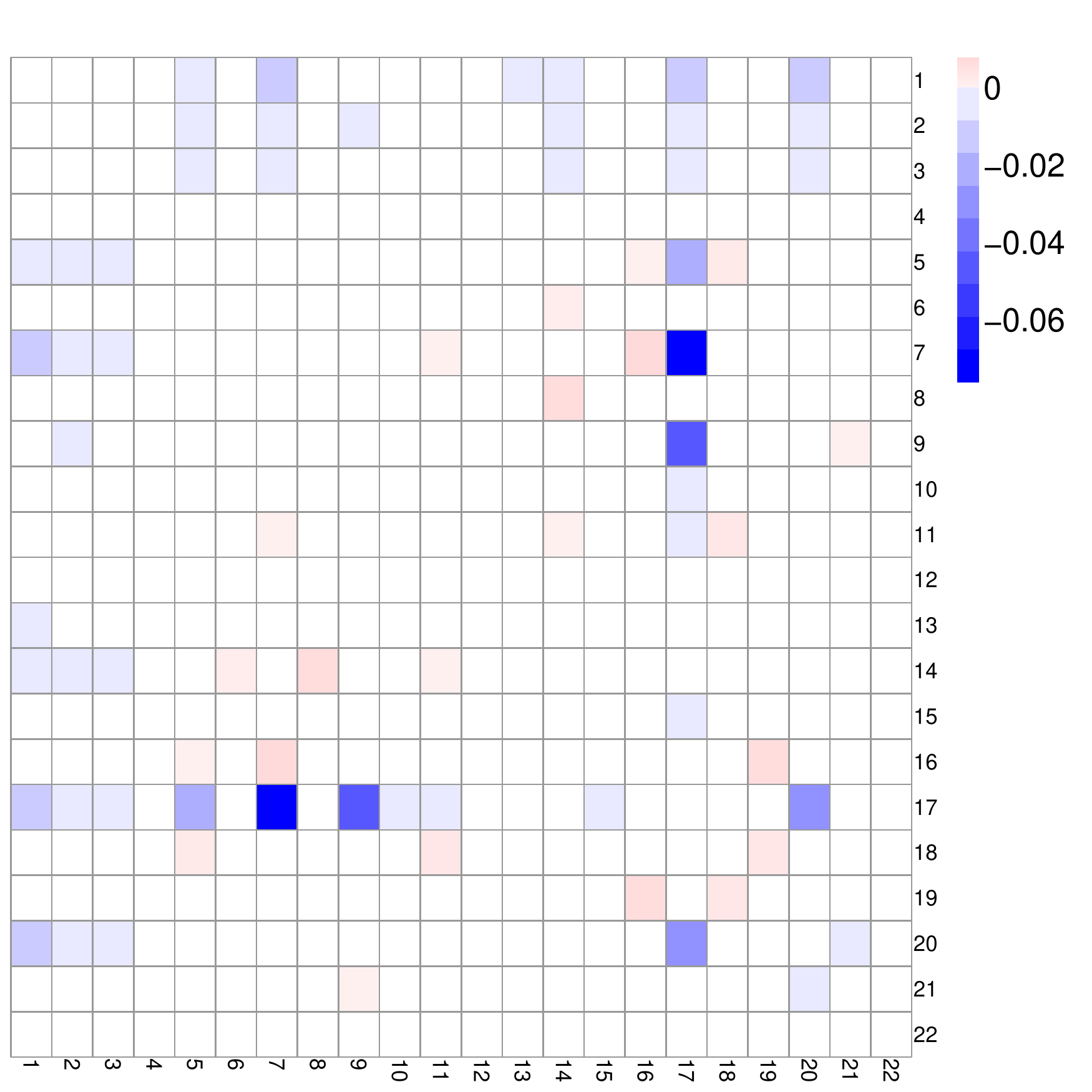}
\includegraphics[scale = 0.32]{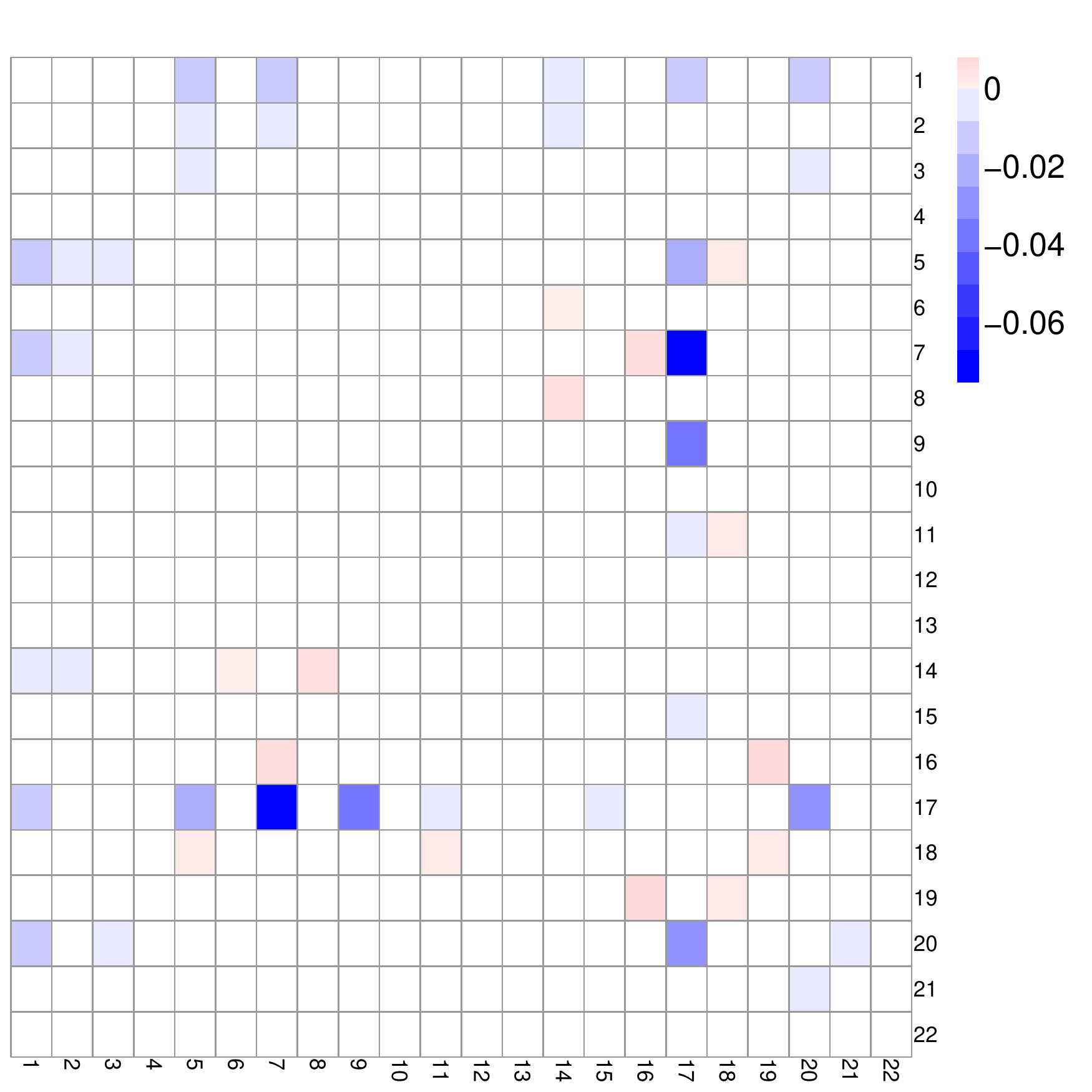}
\caption{For the HIV-1 data of Section~\ref{sec:realData}, the estimated coefficient matrix $\hat{B}_{-0,-0}$ is shown for \emph{(a):} \glinternet; \emph{(b):} \hiernet; \emph{(c):} \familylt\ with $\alpha=0.944$; and \emph{(d):} \familyli\ with $\alpha=0.944$. Main effects are not displayed. }
 \label{fig:data}
\end{figure}

\section{Conclusion}
\label{sec:conclusion}

In this paper, we have  introduced  \family, a framework that unifies a number of existing estimators for high-dimensional models with interactions. 
Special cases of \family\ correspond to the all-pairs lasso, the main effects lasso, \vanish, and \hiernet. Furthermore,  we have explored the use of \family\ with $\ell_2$, $\ell_\infty$, and hybrid $\ell_1$/$\ell_\infty$ penalties; these result in strong heredity and have good empirical performance.  

The empirical results in Sections~\ref{sec:SimStudy} and \ref{sec:realData} indicate that the choice of penalty in \family\ may be of little practical importance: for instance, \familylt, \familyli, and \familyhiernet\ have similar performance. However, one could choose among penalties  using cross-validation or a related approach.

We have presented a simple ADMM algorithm that can be used to solve the \family\ optimization problem for any convex penalty. It finds the global optimum for \vanish\ (unlike the  proposal in \cite{radchenko2010variable}), and provides a simpler alternative to the original \hiernet\ algorithm  \citep{bien2013lasso}.

 \family\ could be easily extended to accommodate higher-order interaction models. For instance, to accommodate third-order interactions, we could take $B$ to be a $(p+1)\times (p+1) \times (p+1)$ coefficient array. Instead of penalizing each  row and each column of $B$, we would instead penalize each `slice' of the array. 
 
 In the simulation study in Section~\ref{sec:SimStudy}, we considered a setting with only $p_1=p_2=30$ main effects. We did this in order to facilitate comparison to the \hiernet\ proposal, which is very computationally intensive as implemented in the \verb=R= package of \cite{bien2013lasso}. However, our proposal can be applied for much larger values of $p_1$ and $p_2$, as discussed in Section~\ref{sec:computationAlg}.

The \verb=R= package \verb=FAMILY=, available on \verb=CRAN=, implements the methods described in this paper. 

\section*{Acknowledgments}
We thank an anonymous associate editor and two referees for insightful comments that resulted in substantial improvements to this manuscript. We thank Helen Hao Zhang, Ning Hao, Jacob Bien, Michael Lim, and Trevor Hastie for providing software and helpful responses to inquiries. D.W. was supported by NIH Grant DP5OD009145, NSF CAREER Award DMS-1252624, and an Alfred P. Sloan Foundation Research Fellowship. N.S. was supported by NIH Grant DP5OD019820.

\singlespacing


\bibliographystyle{plainnat}
\bibliography{references/ref}


\appendix
\numberwithin{equation}{section}

\section{Alternating Directions Method of Multipliers}

\subsection{Overview of ADMM}

\label{sec:admm}
We will solve \eqref{obj} using  the \emph{alternating directions method of multipliers} (ADMM) algorithm, which we briefly review here. We refer the reader to \citet{boyd2011distributed} for a detailed discussion.

ADMM provides a simple, general, and efficient approach for solving a problem of the form
\begin{equation}
\underset{x}{\text{minimize }}f_1(x)+f_2(x),
\label{admm}
\end{equation} 
where $f_1$ and $f_2$ are convex, closed and proper. The key insight behind ADMM is that (\ref{admm}) can be re-written as 
\begin{equation}
\underset{x,y}{\text{minimize }} \{ f_1(x)+f_2(y)  \} 
\mbox{ subject to } x=y.
\label{admm2}
\end{equation}
The augmented Lagrangian corresponding to (\ref{admm2}) takes the form
\begin{equation*}
L_{\rho}(x,y,\gamma) = f_1(x)+f_2(y) + \gamma(x-y) + (\rho/2)\|x-y\|_2^2,
\end{equation*} 
where $\gamma$ is a dual variable and $\rho\in \mathbb{R} $ is a positive constant. The resulting ADMM algorithm involves iterating the following steps until convergence,
\begin{eqnarray*}
x^{k+1} &= \underset{x}{\text{argmin }} L_{\rho} (x,y^k,\gamma^k)\\
y^{k+1} &=  \underset{y}{\text{argmin }} L_{\rho} (x^{k+1},y,\gamma^k)\\
\gamma^{k+1} &= \gamma^k +\rho(x^{k+1}-y^{k+1})\ ,
\end{eqnarray*}
where $k$ indexes the iterations. Under a few simple conditions, the ADMM algorithm converges to the global optimum \citep{boyd2011distributed}.


\subsection{\family\ with Squared Error Loss}

\label{sec:fullAlg}

\subsubsection{The ADMM Algorithm}
The augmented Lagrangian corresponding to \eqref{obj} was given in \eqref{auglag2}. 
The complete ADMM algorithm is as follows:
\begin{enumerate}
\item Initialize ${\rho^0,\ B^0,\ \Theta^0 \text{ and }\Gamma^0}$.
\item Choose $\varepsilon^{pri}>0$, $\varepsilon^{dual}>0$.
\item Repeat for $i = 1,2,3,... $ until $r^i < \varepsilon^{pri}$ and  $s^i <\varepsilon^{dual}$, where $r^i$ and $s^i$ are the primal and dual residuals, respectively, defined as 
\begin{align*} 
s^i &= \rho^i \|(D^i|E^i|F^i)-(D^{i-1}|E^{i-1}|F^{i-1}) \|_F   \\
r^i &= \|(B^i|B^i|B^i)-(D^i|E^i|F^i) \|_F. 
\end{align*}

\begin{enumerate}
\item Update $\rho^i$ as described in \citet{boyd2011distributed}:
\begin{equation*}
\rho^{i}  = \left\{ \begin{array}{cc}
2\rho^{i-1} & \mbox{if}\;\; r^{i-1}> 10s^{i-1}\\
\rho^{i-1}/2 &  \mbox{if}\;\;  10r^{i-1} < s^{i-1}\\
\rho^{i-1} & \text{ otherwise }
\end{array} \right.. 
\end{equation*} 

\item  Update ${B^i}$ as the solution to the least squares problem:
\begin{equation*}
\begin{split}
B^i = \ &\underset{B}{\text{argmin}} \ \  \frac{1}{2n} \|{y}- {W}*B\|_2^2\\
&+ \frac{3\rho^i}{2} \left\| \frac{1}{3\rho^i} \left[\rho^i(D^{i-1}+E^{i-1}+F^{i-1}) -(\Gamma_1^{i-1}+\Gamma_2^{i-1}+\Gamma_3^{i-1})\right] - B  \right\|_F^2.
\end{split}
\end{equation*}

\item Update ${D^i}$ and $E^i$ using the proximal operators discussed in Section~\ref{sec:prox-all}: 
\begin{eqnarray*}
D^i &= \underset{D}{\text{argmin}} \ \  \frac{\rho^i}{2} \left\|{D}-\left({B^i}+\frac{{\Gamma_1^{i-1}}}{\rho^{i}}\right) \right\|_F^2 + \lambda_1 \sum_{j=1}^{p_1} P_r({D}_{j,.}),\\
E^i &= \underset{E}{\text{argmin}} \ \  \frac{\rho^i}{2} \left\|{E}-\left({B^{i}}+\frac{{\Gamma^{i-1}_2}}{\rho^i}\right) \right\|_F^2 + \lambda_2 \sum_{j=1}^{p_2} P_c({E}_{.,k}) 
\end{eqnarray*}

\item Update $F^i$ as follows: 

\begin{align*}
{F}^{i}_{0,.} &=  {B}^{i}_{0,.}+\frac{{\Gamma_3}^{i-1}_{0,.}}{ {\rho^i}} , \\
{F}^{i}_{.,0} &= {B}^{i}_{.,0}+\frac{{\Gamma_3}^{i-1}_{.,0}}{ \rho^i } ,  \\
{F}^{i}_{j,k} &= \text{sign}\left({B }^{ i}_{j,k} + \frac{{ \Gamma_{3} }^{i-1}_{j,k}}{{ \rho^i}}\right)\left( \left| {B }^{ i}_{j,k} + \frac{{ \Gamma_{3} }^{i}_{j,k}}{{ \rho^i}}  \right| - \frac{\lambda_3}{\rho^i} \right)_+ \ \ \text{ for } j \not= 0, k \not= 0.
\end{align*}

\item Update ${\Gamma}^i$ as follows: 
\begin{align*}
{\Gamma}_{1}^{i} &= {\Gamma_{1}}^{i-1}+ \rho^i\left( {B}^{i} - {D}^{i} \right), \\
{\Gamma}_{2}^{i} &= {\Gamma_{2}}^{i-1}+ \rho^i\left( {B}^{i} - {E}^{i} \right), \\
{\Gamma}_{3}^{i} &= {\Gamma_{3}}^{i-1}+ \rho^i\left( {B}^{i} - {F}^{i} \right).
\end{align*}

\end{enumerate}
\end{enumerate}

\subsubsection{Update for $B$ in Step 3(b)}
The update for $B$ in Step 3(b) is a least squares problem with a $n \times (p_1+1)(p_2+1)$ design matrix. Here we show that clever matrix algebra can be applied in order to avoid solving this least squares problem in each iteration.  For convenience, we omit the superscripts in Step 3(b).

Let $\wt{B}, \wt{D}, \wt{E}, \wt{F}, \wt{\Gamma}_1,  \wt{\Gamma}_2$, and $\wt{\Gamma}_3$ denote the vectorized versions of 
${B}, {D}, {E}, {F}, {\Gamma}_1,  {\Gamma}_2$, and ${\Gamma}_3$. And let $\wt{W}$ denote the $n \times (p_1+1)(p_2+1)$-dimensional  matrix version of $W$.
  Then the objective of Step 3(b) can be rewritten as
\begin{equation} 
\frac{1}{2} \left\| 
\left[ \begin{array}{c}
\frac{1}{\sqrt{n}}{y}\\
\frac{\rho {(\wt{D}+\wt{E}+\wt{F})}  - ({\wt{\Gamma}_{1}+\wt{\Gamma}_{2}+\wt{\Gamma}_{3}}) }{\sqrt{3\rho }} \\
\end{array} \right] - \left[  
\begin{array}{c}
\frac{1}{\sqrt{n}}\wt{W}\\
\sqrt{3\rho } {I}_{(1+p_1)(1+p_2)}\\
\end{array} \right]\wt{B} \right\|_F^2. 
\label{bup}
\end{equation}
Therefore, before performing the ADMM algorithm described in Section~\ref{sec:fullAlg}, we compute the SVD of $\wt{W}$. Then for each iteration of Step 3(b), the Woodbury matrix identity can be very quickly  applied  in order to minimize \eqref{bup}.

\subsection{\family\ for Generalized Linear Models}
\label{app:alg-logistic}
We now consider the extension of \family\ to GLMs (Section~\ref{sec:Extension}). The resulting ADMM algorithm is  as in Section~\ref{sec:fullAlg}, except that the update for $B$ in Step 3(b) now takes the form
\begin{equation}
\footnotesize
\underset{B\in \mathbb{R}^{(p_1+1)\times (p_2+1)} }{\text{argmin }}\ \  \frac{1}{n} l(WB) + \frac{3\rho^i}{2} \left\| \frac{1}{3\rho^i} \left[\rho^i (D^{i-1}+E^{i-1}+F^{i-1}) -(\Gamma_1^{i-1}+\Gamma_2^{i-1}+\Gamma_3^{i-1})\right] - B  \right\|_F^2.
\label{eq:glm}
\end{equation}
To solve this problem, we perform a second-order Taylor expansion of \eqref{eq:glm}, in which we approximate the Hessian using a multiple of the identity (e.g., for logistic regression, we use the upper bound of $(1/4)I$). 
  Details are omitted in the interest of brevity.




\section{Proofs of Results in Section \ref{sec:proposal}}
\label{app:proofs}

\begin{proof}[Proof of Lemma \ref{lemma:hierNet-dual}]
The result follows from the definition of the dual norm. 
\begin{align*}
P_*(z) &= \sup\{ z^T\beta: P(\beta) \le 1 \}\\
&= \sup\{ z^T\beta: \max(|\beta_1|, \|\beta_{-1}\|_1) \le 1 \}\\
&= \sup\{ z^T\beta: |\beta_1|\le 1 \text{ and } \|\beta_{-1}\|_1 \le 1 \}\\
&= \sup\{ z_1\beta_1 + z_{-1}^T\beta_{-1}: |\beta_1|\le 1 \text{ and } \|\beta_{-1}\|_1 \le 1 \}\\
&= \sup\{ z_1\beta_1 : |\beta_1|\le 1 \} + \sup\{ z_{-1}^T\beta_{-1}: \|\beta_{-1}\|_1 \le 1 \}\\
&= |z_1|+\|z_{-1}\|_{\infty}.
\end{align*}

\end{proof}

\begin{proof}[Proof of Lemma \ref{lemma:hierNet-dualprob}]
Consider the series of equalities:
\begin{align*}
\min_{\beta} \ \frac{1}{2} \|y - \beta\|^2 + \lambda P(\beta) &=\min_{\beta} \max_{P_*(u) \le \lambda} \  \frac{1}{2} \|y - \beta\|^2 + \beta^Tu\\
&=   \max_{P_*(u)  \le \lambda}  \min_{\beta}  \frac{1}{2} \|y - \beta\|^2 + \beta^Tu\\
&=  \max_{P_*(u)  \le \lambda}  \frac{1}{2} \|y - (y-u)\|^2 + (y-u)^Tu\\
&=  \max_{P_*(u)  \le \lambda} y^Tu - \frac{1}{2} \|u\|^2 \\
&=  \max_{P_*(u)  \le \lambda} - \frac{1}{2} \|u-y\|^2+constant.
\end{align*}
This is equivalent to the problem
\begin{align*}
&\underset{u \in \mathbb{R}^p}{\text{minimize}}\  \frac{1}{2}\|y-u\|^2\\
&\text{subject to }\  |u_1| + \|u_{-1}\|_{\infty} \le \lambda,
\end{align*}
which, in turn, is equivalent to (\ref{eqn:dual}).
\end{proof}

\subsection{Proof of Theorem \ref{thm:hierNet}}

We consider the function 
\begin{equation}
f(\lambda_1) = \frac{1}{2} \|u(\lambda_1)-y\|^2,
\label{eqn:f-lam1}
\end{equation}
where $u(\lambda_1)$ is a vector-valued function of $\lambda_1$, as defined in (\ref{eqn:udef}). We wish to minimize this function over the interval $[0,\lambda]$. We will prove this theorem using a series of claims.
\begin{claim}
The function $f(\lambda_1)$ is convex on $\mathbb{R}$.
\end{claim}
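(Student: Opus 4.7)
The plan is to unpack the definition of $u(\lambda_1)$ componentwise and recognize $f$ as a sum of one-dimensional convex functions of $\lambda_1$. First I would observe that since $u(\lambda_1)$ is defined componentwise,
\begin{equation*}
f(\lambda_1) = \tfrac{1}{2}(u_1(\lambda_1)-y_1)^2 + \tfrac{1}{2}\sum_{i=2}^{p}(u_i(\lambda_1)-y_i)^2,
\end{equation*}
so it suffices to show that each summand is convex in $\lambda_1$.

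Next I would compute each summand explicitly from \eqref{eqn:udef}. For $i=1$, the definition gives $u_1(\lambda_1)-y_1=0$ when $|y_1|\le \lambda_1$, and $u_1(\lambda_1)-y_1 = \mathrm{sgn}(y_1)(\lambda_1 - |y_1|)$ when $|y_1|>\lambda_1$, which yields
\begin{equation*}
\tfrac{1}{2}(u_1(\lambda_1)-y_1)^2 \;=\; \tfrac{1}{2}\bigl((|y_1|-\lambda_1)_+\bigr)^2.
\end{equation*}
An analogous calculation for $i\ge 2$ yields $\tfrac{1}{2}\bigl((|y_i|-(\lambda-\lambda_1))_+\bigr)^2$.

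The main (and essentially only) step is then to note that the scalar function $h(t)=(t_+)^2$ is convex on $\mathbb{R}$: it is $0$ for $t\le 0$ and $t^2$ for $t\ge 0$, so its derivative $2t_+$ is non-decreasing. Since the composition of a convex function with an affine map is convex, $h(|y_1|-\lambda_1)$ and $h(|y_i|-\lambda+\lambda_1)$ are each convex in $\lambda_1 \in \mathbb{R}$. Summing these convex functions yields that $f$ is convex on $\mathbb{R}$, as desired. I do not anticipate any real obstacle; the only subtlety is confirming convexity at the kinks $\lambda_1=|y_1|$ and $\lambda_1=\lambda-|y_i|$, which is immediate from the non-decreasing derivative of $h$.
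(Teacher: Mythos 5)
Your proof is correct and follows essentially the same route as the paper's: decompose $f$ componentwise, observe that each summand is a one-dimensional convex piecewise-quadratic in $\lambda_1$, and sum. The only difference is that where the paper writes each summand with an indicator and declares convexity ``by inspection,'' you make that inspection explicit by identifying each term as $\bigl((\cdot)_+\bigr)^2$ composed with an affine map, which is a slightly more careful rendering of the same argument.
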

\begin{proof}
Note that 
\begin{align}
(y_1 - u_1(\lambda_1))^2 &=  (y_1 - y_1)^2\bs{1}(|y_1|\le \lambda_1) + (y_1 - \lambda_1 \mbox{sign}(y_1))^2 \bs{1}(|y_1| >\lambda_1) \nonumber \\
&= (y_1 - \lambda_1 \mbox{sign}(y_1))^2 \bs{1}(|y_1| >\lambda_1) 
\label{clm:1a}
\end{align}
and 
\begin{align}
(y_i - u_i(\lambda_1))^2 &= (y_i - (\lambda -\lambda_1)\mbox{sign}(y_i) )^2\bs{1}(\lambda_1 > \lambda - |y_i| ).
\label{clm:1b}
\end{align}
By inspection, both \eqref{clm:1a} and \eqref{clm:1b} are convex. The result follows from the fact that the sum of convex functions is convex. 
%
%
\end{proof}

\begin{claim}
The derivative of $f(\lambda_1)$ is given by
\begin{align}
\frac{d}{d\lambda_1} f(\lambda_1) &= [\lambda_1 - |y_1|]\bs{1}(|y_1| > \lambda_1) +\sum_{i=1}^{p-1} \left[ \lambda_1 - z_{(i)} \right]\bs{1}(\lambda_1 > z_{(i)}),
\label{eqn:grad}
\end{align}
where $z$ is as defined in Theorem \ref{thm:hierNet}.
\end{claim}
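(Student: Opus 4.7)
The plan is to compute the derivative term-by-term using the explicit expressions from the previous claim. By that claim (equations~\eqref{clm:1a} and \eqref{clm:1b}) we have
\begin{equation*}
f(\lambda_1) = \tfrac{1}{2}(y_1 - \lambda_1 \mathrm{sgn}(y_1))^2 \mathbf{1}(|y_1| > \lambda_1) + \tfrac{1}{2}\sum_{i=2}^{p}(y_i - (\lambda-\lambda_1)\mathrm{sgn}(y_i))^2 \mathbf{1}(\lambda_1 > \lambda - |y_i|).
\end{equation*}
First I would factor out the sign: since $(y - c\,\mathrm{sgn}(y))^2 = (|y| - c)^2$, the first summand becomes $\tfrac{1}{2}(|y_1| - \lambda_1)^2 \mathbf{1}(|y_1| > \lambda_1)$, and the $i$th summand inside the sum becomes $\tfrac{1}{2}(\lambda_1 - (\lambda - |y_i|))^2 \mathbf{1}(\lambda_1 > \lambda - |y_i|) = \tfrac{1}{2}(\lambda_1 - z_{i-1})^2 \mathbf{1}(\lambda_1 > z_{i-1})$, using the definition $z_j = \lambda - |y_{j+1}|$ from the theorem statement.

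Next I would differentiate each term. For a generic term of the form $g(t) = \tfrac{1}{2}(t - a)^2 \mathbf{1}(t > a)$, the derivative at $t \neq a$ is clearly $(t-a)\mathbf{1}(t > a)$, and at $t = a$ the left and right derivatives both equal zero, so $g'(t) = (t-a)\mathbf{1}(t > a)$ for all $t$. Applying this to the first term (with $a = |y_1|$, noting symmetry in the indicator $|y_1| > \lambda_1$ versus $\lambda_1 < |y_1|$) yields $(\lambda_1 - |y_1|)\mathbf{1}(|y_1| > \lambda_1)$, and applying it to each summand with $a = z_{i-1}$ yields $(\lambda_1 - z_{i-1})\mathbf{1}(\lambda_1 > z_{i-1})$.

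Summing and re-indexing $j = i-1$ gives
\begin{equation*}
f'(\lambda_1) = (\lambda_1 - |y_1|)\mathbf{1}(|y_1| > \lambda_1) + \sum_{j=1}^{p-1}(\lambda_1 - z_j)\mathbf{1}(\lambda_1 > z_j).
\end{equation*}
Finally, since the second sum runs over all coordinates of $z$, replacing $z_j$ by the order statistics $z_{(j)}$ is simply a reindexing of the same set of summands, producing exactly \eqref{eqn:grad}.

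There is no real obstacle here; the only mild subtlety is verifying differentiability of $g(t) = \tfrac{1}{2}(t-a)^2\mathbf{1}(t>a)$ at the kink point $t = a$, which follows because both one-sided derivatives vanish there. Convexity (established in the previous claim) ensures this piecewise derivative is nondecreasing, which is the property we will exploit in the subsequent proof of Theorem~\ref{thm:hierNet} to locate the minimizer $\hat{\lambda}_1$.
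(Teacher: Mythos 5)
Your proof is correct and follows essentially the same route as the paper, which simply rewrites $f(\lambda_1)$ using the decomposition from the preceding claim and states that the result "follows by inspection." You have usefully filled in the details the paper omits — the identity $(y - c\,\mathrm{sgn}(y))^2 = (|y|-c)^2$, the one-sided derivatives at the kink points, and the reindexing to order statistics — and you correctly carry the factor of $\tfrac{1}{2}$ that the paper's displayed rewriting of $f$ drops.
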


\begin{proof}


Note that $f(\lambda_1)$ can be rewritten as 
$$f(\lambda_1) = (y_1 - \lambda_1 \mbox{sign}(y_1))^2 \bs{1}(|y_1| >\lambda_1) 
 + \sum_{i=2}^p  (y_i - (\lambda -\lambda_1)\mbox{sign}(y_i) )^2\bs{1}(\lambda_1 > \lambda - |y_i| ).$$
The result follows by inspection. 

\end{proof}

\begin{claim} \label{c3}
Define
\end{claim}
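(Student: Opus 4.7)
The plan is to exploit the piecewise-linear structure of $f'(\lambda_1)$ identified in Claim 2, combined with the convexity established in Claim 1, in order to locate the unconstrained minimizer of $f$ and then apply a straightforward boundary-clipping argument to recover the three cases of Theorem~\ref{thm:hierNet}. Since Claim \ref{c3} will naturally define $g(k) \equiv \frac{|y_1| + \sum_{i=1}^{k} z_{(i)}}{k+1}$ (for $k = 0, 1, \ldots, p-1$) and assert that these are precisely the critical points of $f$ in each piece, my proof will proceed piece-by-piece through the breakpoints of the derivative.

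First I would parameterize $\lambda_1 \in (0, \lambda)$ by the number $k$ of ordered values $z_{(i)}$ lying strictly below $\lambda_1$. On the interval $(z_{(k)}, z_{(k+1)})$, assuming additionally $|y_1| > \lambda_1$ (so the first indicator term in Claim 2 is active), the derivative in \eqref{eqn:grad} collapses to
\begin{equation*}
f'(\lambda_1) \;=\; (k+1)\lambda_1 - |y_1| - \sum_{i=1}^{k} z_{(i)},
\end{equation*}
which has slope $k+1 > 0$ and a unique root at $\lambda_1 = g(k)$. Thus on each such piece, $g(k)$ is the \emph{only} candidate for a stationary point, and by convexity (Claim 1) the global minimum over $[0, \lambda]$ must either coincide with some $g(k)$ that falls inside its own interval $[z_{(k)}, z_{(k+1)}]$, or else lie at an endpoint.

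Second I would establish the algebraic identity
\begin{equation*}
g(k+1) - g(k) \;=\; \frac{z_{(k+1)} - g(k)}{k+2},
\end{equation*}
which follows from a direct manipulation of the defining sums. This immediately shows that $g(k+1) < g(k)$ precisely when $g(k) > z_{(k+1)}$, and an analogous relation links $g(k-1)$ to $z_{(k)}$. Iterating this comparison across $k$, the value $k^* = \arg\min_j g(j)$ is characterized by the sandwich $z_{(k^*)} \le g(k^*) \le z_{(k^*+1)}$, so the global minimizer inside $(0,\lambda)$ is exactly $\min_j g(j)$. This monotonicity/ordering argument will be the main technical obstacle, since it rests on the interplay between the averaging in $g$ and the fact that the $z_{(i)}$'s are sorted.

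Finally, I would fold in the constraint $\lambda_1 \in [0, \lambda]$. By convexity of $f$, if $\min_j g(j) \in [0, \lambda]$ then this value is both the unconstrained and the constrained minimizer, yielding the ``otherwise'' case of Theorem~\ref{thm:hierNet}. If $\min_j g(j) \ge \lambda$, then $f$ is nonincreasing on $[0,\lambda]$ and the constrained minimum occurs at $\hat\lambda_1 = \lambda$; if $\min_j g(j) \le 0$, then $f$ is nondecreasing on $[0, \lambda]$ and the constrained minimum occurs at $\hat\lambda_1 = 0$. Combining Claims 1, 2, and \ref{c3} with this endpoint analysis then gives the three-way formula of Theorem~\ref{thm:hierNet} directly.
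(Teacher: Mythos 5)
Your proposal is correct and follows essentially the same route as the paper: identify the candidate stationary point $g(k)$ on each linear piece of $f'$, then show by comparing $g(k)$ with its neighbors $g(k\pm 1)$ that the minimizing index lands inside its own interval $(z_{(k)}, z_{(k+1)}]$ --- your identity $g(k+1)-g(k) = \bigl(z_{(k+1)}-g(k)\bigr)/(k+2)$ is just a cleaner packaging of the paper's ``rearranging terms'' steps. The one loose end, your standing assumption $|y_1| > \lambda_1$ on each piece, discharges automatically since $g(0)=|y_1|$ forces $\min_j g(j) \le |y_1|$; the paper leaves the same point implicit.
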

\begin{equation}
\lambda_1(m) = \frac{|y_1|+\sum_{j=1}^{m} z_{(j)} }{m+1}.
\label{eqn:lam1m}
\end{equation}
Then 
\begin{equation}
\underset{\lambda_1\in \mathbb{R} }{\text{argmin} } \ f(\lambda_1)  = \min_m \lambda_1(m).
\end{equation}
\begin{proof}


 Let $z_{(p)} \equiv \infty$, and define $\lambda_1(m) \equiv \frac{|y_1| + \sum_{j=1}^{m} z_{(j)} }{m+1}$. 
The optimality conditions for $f(\lambda_1)$ guarantee that if $\lambda_1(m) \in (z_{(m)}, z_{(m+1)}]$, then $\hat{\lambda}_1 = \lambda_1(m)$. 

If the set $\arg\min_m \lambda_1(m)$ contains a single element, then define $k \equiv \arg\min_m \lambda_1(m)$; otherwise, let $k$ be the smallest element of the set. 
%
%
 To complete the proof, it suffices to show that $\lambda_1(k) \in (z_{(k)}, z_{(k+1)}]$. 
 
 First, we will show that $\lambda_1(k) > z_{(k)}$. By definition of $\lambda_1(k)$, we know that  $\lambda_1(k) < \lambda_1(k-1)$. In other words,
 $$ \frac{|y_1|+\sum_{j=1}^{k} z_{(j)} }{k+1} < \frac{|y_1|+\sum_{j=1}^{k-1} z_{(j)} }{k}.$$ 
 Rearranging terms, we find that 
$$ \left( {|y_1|+\sum_{j=1}^{k} z_{(j)} } \right) \left(1-\frac{1}{k+1}  \right) < {|y_1|+\sum_{j=1}^{k-1} z_{(j)} }.$$ 
Consequently,
$$z_{(k)} -  \frac{|y_1|+\sum_{j=1}^{k} z_{(j)} }{k+1} < 0.$$
This means that $z_{(k)} < \lambda_1(k)$. 

We now use a similar argument to show that $\lambda_1(k) \leq z_{(k+1)}$. By definition of $\lambda_1(k)$, we know that  $\lambda_1(k) \leq \lambda_1(k+1)$. In other words,
$$ \frac{|y_1|+\sum_{j=1}^{k} z_{(j)} }{k+1} \leq \frac{|y_1|+\sum_{j=1}^{k+1} z_{(j)} }{k+2}.$$ 
 Rearranging terms, we find that 
$$ \left( {|y_1|+\sum_{j=1}^{k} z_{(j)} } \right) \left(1+\frac{1}{k+1}  \right) \leq {|y_1|+\sum_{j=1}^{k+1} z_{(j)} } =   \left( |y_1|+\sum_{j=1}^{k} z_{(j)} \right) + z_{(k+1)}.$$ 
This implies that $\lambda_1(k) \leq z_{(k+1)}$. 
\end{proof}

Since  $f(\lambda_1)$ is convex, its minimizer in the interval $[0, \lambda]$ is simply the projection of its minimizer on $\mathbb{R}$ (given in Claim~\ref{c3}) into the interval. This completes the proof of Theorem~\ref{thm:hierNet}. 

\qed

\section{Degrees of Freedom for \family}
\label{app:dof}
\begin{proof}[Derivation of Claim \ref{claim:df}]

As mentioned in the main text,  an unbiased estimate for the degrees of freedom of (\ref{eqn:df-problem}) is given by 
\begin{equation}
\widehat{\df} = \sum_{i=1}^n \frac{\partial \hat{y}_i}{\partial y_i} = \mathrm{trace} \left( \frac{d \hat{y}}{dy} \right),
\label{dfest}
\end{equation}
provided that $\hat{y}(y)$ is almost differentiable. The proof that $\hat{y}(y)$ is almost differentiable follows from arguments similar to those in \citet{tibshirani2012degrees}.

We now derive an explicit form for \eqref{dfest}. 
To evaluate $\frac{d\hat{y}}{dy}$, we first note that
$\hat{\beta}_{\mathcal{A}}$, the solution of (\ref{eqn:df-problem}) restricted to the active set, takes the form
\begin{equation}
\hat\beta_{\mathcal{A}} = \underset{\beta_{\mathcal{A}}}{\text{argmin}} \  \left\{ \frac{1}{2} \|y - X_{\mathcal{A}}\beta_{\mathcal{A}} \|_2^2 + \sum_{d} \lambda_d P_d(A_d^{\mathcal{A}}\beta_{\mathcal{A}}) \right\}.
\label{eqn:full-gen}
\end{equation}
Therefore,  $\hat{\beta}_{\mathcal{A}}$ must satisfy 
\begin{equation}
-{X}_{\mathcal{A}}^T(y-{X}_{\mathcal{A}} \hat{\beta}_{ \mathcal{A}} )+ \sum_d \lambda_d (A^{\mathcal{A}}_d)^T  \dot{P}_d( A^{\mathcal{A}}_d \hat{\beta}_{ \mathcal{A}}) =0.
\end{equation}
We then differentiate with respect to $y$ and apply the chain rule, to obtain 
\begin{equation}
-{X}_{\mathcal{A}}^T + {X}_{\mathcal{A}}^T{X}_{\mathcal{A}} \frac{d\hat{\beta}_{\mathcal{A}}}{dy}+ \sum_d \lambda_d (A^{\mathcal{A}}_d)^T \ddot{P}_d (A^{\mathcal{A}}_d\hat{\beta}_{\mathcal{A}})  \left( A^{\mathcal{A}}_d \right) \frac{d\hat{\beta}_{\mathcal{A}} }{dy} = 0.
\end{equation}
 Solving for $\frac{d\hat{\beta}_{\mathcal{A}} }{dy}$ gives us
\begin{equation}
\frac{d\hat{\beta}_{\mathcal{A}} }{dy}  =  \left[ {X}_{\mathcal{A}}^T {X}_{\mathcal{A}}+ \sum_d \lambda_d (A^{\mathcal{A}}_d)^T \ddot{P}_d  (A^{\mathcal{A}}_d\hat{\beta}_{\mathcal{A}}) (A^{\mathcal{A}}_d) \right]^{-1} {X}_{\mathcal{A}}^T.
\end{equation}
Form the definition of $\hat{y} = X_{\mathcal{A}} \hat{\beta}_{ \mathcal{A}}$, we get
\begin{equation}
\frac{d\hat{y}}{dy}  = {X}_{\mathcal{A}} \frac{d\hat{\beta }_{\mathcal{A}} }{dy} =  {X}_{\mathcal{A}} \left[ {X}_{\mathcal{A}}^T {X}_{\mathcal{A}}+ \sum_d \lambda_d (A^{\mathcal{A}}_d)^T \ddot{P}_d (A^{\mathcal{A}}_d\hat{\beta}_{\mathcal{A}}) (A^{\mathcal{A}}_d) \right]^{-1} {X}_{\mathcal{A}}^T.
\end{equation}
In order to make this derivation  entirely rigorous, we would need to show that $\hat\beta$ is unique, and that with probability one, within some neighbourhood of $y$, the active set $\mathcal{A}$ does not change as a function of $y$. 

\end{proof}

\end{document}